\documentclass[12pt]{article}

\usepackage{arxiv}
\usepackage{xargs}
\usepackage{xcolor}
\usepackage[utf8]{inputenc}
\usepackage[T1]{fontenc}    
\usepackage{hyperref}       
\usepackage{url}            
\usepackage{booktabs}       
\usepackage{amsfonts}       
\usepackage{nicefrac}       
\usepackage{microtype}      
\usepackage{xcolor}         
\usepackage{times}
\usepackage{lipsum}
\usepackage{graphicx}
\usepackage{mathrsfs}
\usepackage{xargs}
\usepackage{enumitem}
\usepackage{aliascnt}
\usepackage{appendix}
\usepackage[english]{babel}
\usepackage{amsmath}
\usepackage{amsthm}
\usepackage{amssymb}
\usepackage{caption}
\usepackage{cancel}
\usepackage{array}
\usepackage{framed}
\usepackage{multirow}
\usepackage{bbm}
\usepackage[ruled,vlined]{algorithm2e}

\usepackage[english]{babel}
\usepackage[sorting=nty,style=authoryear-comp,natbib=true,maxbibnames=99,maxcitenames=2,uniquelist=false,giveninits]{biblatex}

\usepackage{hyperref}
\hypersetup{colorlinks = true}
\hypersetup{linkcolor=blue}
\usepackage{cleveref}

\crefname{subsubsubappendix}{appendix}{appendices}
\Crefname{subsubsubappendix}{Appendix}{Appendices}

\newtheoremstyle{styledef}
  {6pt}
  {6pt}
  {\itshape}
  {0em}
  {\bfseries}
  {}
  {1.5em}
  {}

\theoremstyle{styledef}

\newenvironment{enumerateList}{ \begin{enumerate}[label=(\roman*),wide=0pt, labelindent=\parindent]}{\end{enumerate}}

\newenvironment{hypA}[1]{
  \begin{enumerate}[label=\textbf{\sf(A#1)}]\begin{sf}}
  {\end{sf}\end{enumerate}}

\usepackage[textwidth=70pt]{todonotes}

\newtheorem{thm}{Theorem}
\crefname{thm}{theorem}{theorems}
\Crefname{thm}{Theorem}{Theorems}
\newtheorem{ex}{Example}
\crefname{ex}{example}{examples}
\Crefname{ex}{Example}{Examples}
\newtheorem{lem}{Lemma}
\crefname{lem}{Lemma}{lemmas}
\Crefname{lem}{Lemma}{Lemmas}
\newtheorem{prop}{Proposition}
\crefname{prop}{Proposition}{Propositions}
\Crefname{Prop}{Proposition}{Propositions}

\crefname{coro}{corollary}{corollaries}
\Crefname{Coro}{Corollary}{Corollaries}

\crefname{defi}{definition}{definitions}
\Crefname{Def}{Definition}{Definitions}
\newtheorem{rem}{Remark}
\crefname{rem}{remark}{remarks}
\Crefname{rem}{Remark}{Remarks}

\newenvironment{hyp}[1]{
\begin{enumerate}[label=\textbf{\sf(#1\arabic*)},resume=hyp#1]\begin{sf}}
{\end{sf}\end{enumerate}}
\crefname{hyp}{}{ass}
\Crefname{hyp}{}{Ass}

\newenvironment{hypAGrad}[1]{
\begin{enumerate}[align=left,label=\textbf{\sf($\tilde{\mathrm{\sf{A}}}_{h'}$)}]\begin{sf}}
{\end{sf}\end{enumerate}}

\renewenvironment{leftbar}[2][\hsize]
{
    \def\FrameCommand
    {
        {\color{#2}\vrule width 3pt}
        \hspace{0pt}
    }
    \MakeFramed{\hsize#1\advance\hsize-\width\FrameRestore}
}
{\endMakeFramed}

\newcommandx{\DAR}[4][1=\alpha,2=\theta,3=\phi,4=x]{D^{(#1)}(#2,#3;#4)}
\newcommandx{\DG}[1][1=\gamma]{D^{(#1)}_{\mathrm{G}}}
\newcommandx{\DGtwo}[2][1=\gamma_1, 2=\gamma_2]{D^{(#1, #2)}_{\mathrm{G}}}
\newcommandx{\DsAB}[2][1=\alpha, 2=\beta]{D^{(#1, #2)}_{\mathrm{sAB}}}

\newcommandx{\barell}[2][1=\theta, 2=\phi]{\overline{\ell}_{#1, #2}}
\newcommandx{\repell}[3][1=\theta, 2=\phi, 3=\phi]{\Lambda_{#1,#2,#3}}
\newcommandx{\repelltwo}[3][1=\theta, 2=\phi, 3=\phi]{\Lambda_{#1,#2,#3}}
\newcommandx{\repellstandard}[3][1=\theta, 2=\phi,3=\phi]{\widetilde{\Lambda}_{#1,#2, #3}}
\newcommandx{\repbarell}[2][1=\theta, 2=\phi]{\overline{\Lambda}_{#1,#2}}

\newcommand{\eqREP}{\overset{\mathrm{(REP)}}{=}}
\newcommandx{\gammaA}[1][1=\alpha]{\gamma^{(#1)}}
\newcommandx{\gammaREP}[2][1=\alpha, 2=\psi]{#2\text{-}\mathrm{G}_2^{({#1}, \mathrm{REP})}(\theta, \phi; x)}
\newcommandx{\gammaDREP}[2][1=\alpha, 2=\psi]{#2\text{-}\mathrm{G}_2^{({#1}, \mathrm{DREP})}(\theta, \phi; x)} 
\newcommandx{\VRREP}[2][1=\alpha, 2=\psi]{#2\text{-}\mathrm{G}_1^{({#1}, \mathrm{REP})}(\theta, \phi; x)}
\newcommandx{\VRDREP}[2][1=\alpha, 2=\psi]{#2\text{-}\mathrm{G}_1^{({#1}, \mathrm{DREP})}(\theta, \phi; x)} 

\newcommandx{\gradNOREP}[3][1={N},2=\psi,3=\alpha]{#2\text{-}g_{#1}^{(#3, \mathrm{NAIVE})}(\theta, \phi;x)}
\newcommandx{\gradIDEAL}[3][1={N},2=\psi,3=\alpha]{#2\text{-}g_{#1}^{(#3, \mathrm{INTER})}(\theta, \phi;x)}
\newcommandx{\gradVIMCO}[3][1={N},2=\psi,3=\alpha]{#2\text{-}g_{#1}^{(#3, \mathrm{VIMCO})}(\theta, \phi;x)}
\newcommandx{\gradVIMCOam}[3][1={N},2=\psi,3=\alpha]{#2\text{-}g_{#1}^{(#3, \mathrm{VIMCO\text{-}AM})}(\theta, \phi;x)}
\newcommandx{\gradVIMCOgm}[3][1={N},2=\psi,3=\alpha]{#2\text{-}g_{#1}^{(#3, \mathrm{VIMCO\text{-}GM})}(\theta, \phi;x)}
\newcommandx{\gradVIMCOstar}[3][1={N},2=\psi,3=\alpha]{#2\text{-}g_{#1}^{(#3, \mathrm{VIMCO\text{-}\star})}(\theta, \phi;x)}
\newcommandx{\gradVIMCOeff}[3][1={N},2=\psi,3=\alpha]{#2\text{-}g_{#1}^{(#3, \mathrm{VIMCO\text{-}0})}(\theta, \phi;x)}

\newcommandx{\gradREPls}[3][1={N},2=\psi,3=\alpha]{#2\text{-}g_{#1}^{(#3, \mathrm{REP\text{-}LS})}(\theta, \phi;x)}
\newcommandx{\gradREPg}[4][1={N},2=\psi,3=\alpha,4=\beta]{#2\text{-}g_{#1}^{(#3,#4, \mathrm{REP})}(\theta, \phi;x)}
\newcommandx{\gradDREP}[3][1={N},2=\psi,3=\alpha]{#2\text{-}g_{#1}^{(#3,\mathrm{DREP})}(\theta, \phi;x)}
\newcommandx{\vREP}[2][1=\alpha, 2=\psi]{#2\text{-}V^{(#1, \mathrm{REP})}}
\newcommandx{\vVIMCOam}[2][1=\alpha, 2=\psi]{#2\text{-}V^{(#1, \mathrm{VIMCO\text{-}AM})}}
\newcommandx{\vREINF}[2][1=\alpha, 2=\psi]{#2\text{-}V^{(#1, \mathrm{VIMCO\text{-}\star})}}
\newcommandx{\vVIMCOgen}[2][1=\alpha, 2=\psi]{#2\text{-}V^{(#1, \mathrm{VIMCO})}}
\newcommandx{\vVIMCOgm}[2][1=\alpha, 2=\psi]{#2\text{-}V^{(#1, \mathrm{VIMCO\text{-}GM})}}
\newcommandx{\vVIMCOstar}[2][1=\alpha, 2=\psi]{#2\text{-}V^{(#1, \mathrm{VIMCO\text{-}\star})}}

\newcommandx{\genvDREP}[2][1=\alpha, 2=\psi]{#2\text{-}V^{(#1, \mathrm{DREP})}}
\newcommandx{\vDREPun}[2][1=\alpha, 2=\psi]{\genvDREP[#1][#2]_1}
\newcommandx{\vDREPdeux}[2][1=\alpha, 2=\psi]{\genvDREP[#1][#2]_2}

\newcommandx{\lbd}[2][1=\theta,2=\phi]{\lambda_{#1, #2}^{(\alpha)}}
\newcommand{\lrav}[1]{\left|#1 \right|}
\newcommand{\lrn}[1]{\left\|#1 \right\|}
\newcommand{\lr}[1]{\left(#1 \right)}
\newcommand{\lrb}[1]{\left[#1 \right]}

\newcommand{\lrcb}[1]{\left\{#1 \right\}}

\newcommandx{\argcond}[2]{\lrb{\left.#1 \,\right|\, #2}}

\newcommandx{\liwae}[1][1=N]{\ell^{(\mathrm{IWAE})}_{#1}}
\newcommandx{\liren}[2][1=\alpha, 2=N]{\ell^{(#1)}_{#2}}
\newcommandx{\lirenb}[3][1=\alpha, 2=N, 3=\beta]{\ell^{(#1, #3)}_{#2}}
\newcommandx{\lirenL}[3][1=\alpha, 2=N, 3=L]{\ell^{(#1)}_{#2, #3}}
\newcommandx{\lirenBB}[2][1=\alpha, 2=N]{\tilde{\ell}^{(#1)}_{#2}}

\newcommand{\PE}{\mathbb E}
\newcommand{\PP}{\mathbb P}

\newcommandx{\renorm}[2][1=\ell, 2=\alpha]{\varphi_{N, #1}^{(#2)}}
\newcommand{\rmd}{\mathrm d}
\newcommand{\rme}{\mathrm e}
\newcommand{\rset}{\mathbb{R}}

\newcommandx{\RalphaN}[1][1=\alpha]{R_{#1,N}}
\newcommandx{\RalphaNb}[2][1=\alpha,2=N]{R_{#1,#2}}
\newcommandx{\Rtalpha}[2][1=\alpha, 2=\phi]{\overline{w}_{\theta, #2}^{(#1)}}

\newcommandx{\sumW}[2][1=j,2=\beta]{\overline{W}_{#1,N}(\beta)}

\newcommandx{\tw}[4][1=\phi, 2=\varepsilon,3=\phi',4=\theta]{\tilde{w}_{#4, #1, #3}(#2; x)}

\newcommandx{\tvDREP}[1][1=\alpha]{\tilde{v}^{(#1, \mathrm{DREP})}_{\theta, \phi}}
\newcommandx{\w}[2][1=\theta, 2=\phi]{w_{#1, #2}}
\newcommandx{\normW}[3][1=\theta, 2=\phi,3=1-\alpha]{W_{#1, #2}^{(#3)}}
\newcommandx{\barw}[2][1=\theta, 2=\phi]{\bar{w}_{#1, #2}}
\newcommandx{\wAB}[2][1=\theta, 2=\phi]{w^{(\alpha, \beta)}_{#1, #2}}
\newcommandx{\wb}[2][1=\theta, 2=\Phi]{w_{#1, \boldsymbol{#2}}}

\newcommandx{\Zalpha}[1][1=\phi']{Z_{\alpha}(#1)}
\newcommandx{\ZalphaN}[1][1=\phi']{\hat{Z}_{N,\alpha}(#1)}

\newcommand{\weaklimit}{\overset{\mathrm{dist.}}{\longrightarrow}}

\begin{document}
\title{Importance Weighted Variational Inference without the Reparameterization Trick}

\author{Kam\'elia Daudel \\
ESSEC Business School \\ daudel@essec.edu  \And
Minh-Ngoc Tran \\ 
The University of Sydney Business School \\ minh-ngoc.tran@sydney.edu.au \And Cheng Zhang \\ Peking University \\ chengzhang@math.pku.edu.cn
}
\maketitle
{\small{}{}{}}{\small\par}

\begin{abstract}
Importance weighted variational inference (VI) approximates densities known up to a normalizing constant by optimizing bounds that tighten with the number of Monte Carlo samples $N$. Standard optimization relies on reparameterized gradient estimators, which are well-studied theoretically yet restrict both the choice of the data-generating process and the variational approximation. While REINFORCE gradient estimators do not suffer from such restrictions, they lack rigorous theoretical justification. In this paper, we provide the first comprehensive analysis of REINFORCE gradient estimators in importance weighted VI, leveraging this theoretical foundation to diagnose and resolve fundamental deficiencies in current state-of-the-art estimators. 
Specifically, we introduce and examine a generalized family of variational inference for Monte Carlo objectives (VIMCO) gradient estimators. We prove that state-of-the-art VIMCO gradient estimators exhibit a vanishing signal-to-noise ratio (SNR) as $N$ increases, which prevents effective optimization. To overcome this issue, we propose the novel VIMCO-$\star$ gradient estimator and show that it averts the SNR collapse of existing VIMCO gradient estimators by achieving a $\sqrt{N}$ SNR scaling instead. We demonstrate its superior empirical performance compared to current VIMCO implementations in challenging settings where reparameterized gradients are typically unavailable. \looseness=-1
\end{abstract}

\keywords{REINFORCE gradients \and Importance-Weighted Auto-Encoder \and Signal-to-Noise Ratio}

\setcounter{tocdepth}{2}


\section{Introduction}
Sampling from a density known up to a normalizing constant is a fundamental problem in statistics. Classical approaches rely on Markov chain Monte Carlo (MCMC) methods \citep{metropolis1953equation,hastings1970}, which provide asymptotically exact samples but can be computationally demanding in large-scale or high-dimensional settings. These challenges have motivated the development of variational inference (VI), a class of statistical inference methods that recast sampling as an optimization problem and offer substantial computational advantages \citep{jordan1999,blei2017variational}. More precisely, VI seeks the closest approximation to an unknown target density within a tractable family of probability densities $\mathcal{Q}$ by maximizing a variational bound. The most traditional variational bound is the evidence lower bound (ELBO), although its optimization has been known to lead to biased variational approximations that place excessive probability mass near the mode \citep{li2016renyi,margossian2024variationalinferenceuncertaintyquantification,minka2005divergence,Bui2016BlackboxF,margossian2025generalized}. \looseness=-1

To mitigate these issues, variational bounds based on importance weighting principles and tighter than the ELBO have been proposed. A prominent example is the importance weighted autoencoder (IWAE) bound \citep{burda2015importance}, which was subsequently extended by \cite{daudel2022Alpha} under the name VR-IWAE bound to include a parameter $\alpha$ and smoothly interpolate between the ELBO ($\alpha \to 1$) and IWAE bound ($\alpha = 0$). A common way to optimize such variational bounds is to perform stochastic gradient ascent (SGA), therefore an active line of research in importance weighted VI has focused on the design and analysis of gradient estimators, with particular attention given to their bias, variance, and computational efficiency. Two main approaches to gradient estimation have emerged in VI \citep{mohamed2020monte}: reparameterized gradient estimators \citep{kingma2014autoencoding} and REINFORCE gradient estimators \citep{williams1992}. \looseness=-1

Reparameterized gradient estimators rely on the assumption that gradients of expectations can be re-expressed in a convenient form by writing random variables as differentiable transformations of parameter-independent noise \citep{kingma2014autoencoding}. While reparameterized gradient estimators are widely used in importance weighted VI \citep{burda2015importance,li2016renyi,daudel2022Alpha,Tucker2019DoublyRG} and extensively studied \citep{daudel2024learningimportanceweightedvariational,rainforth2018tighter,surendran25a,dhaka2021challenges}, a major limitation of these estimators is that their construction restricts both the choice of the data-generating process and the variational family. For example, they are not compatible with discrete latent variables and, more broadly, are often not applicable to many practically challenging problems, including likelihood-free inference \citep{wood2010statistical, Ong:STCO2018}, state-space models \citep{Durbin:2001} and Bayesian phylogenetic \citep{Huelsenbeck01b}. \looseness=-1

In contrast, REINFORCE gradient estimators do not require the differentiability assumptions inherent to reparameterized gradient estimators. Despite their potential to tackle complex problems beyond the scope of reparameterization-based methods, the development, analysis, and widespread adoption of REINFORCE gradient estimators is considerably less prevalent in importance weighted VI compared to their reparameterized counterparts. A notable exception is \cite{mnih2016variational}, who derived the REINFORCE gradient of the IWAE bound and proposed unbiased estimators of it that incorporate variance reduction techniques. These estimators, referred to as \textit{Variational Inference for Monte Carlo objectives} (VIMCO) gradient estimators, continue to be regarded as state-of-the-art in recent studies \citep[see, for instance][]{huijben2022review,choi2025reinforced,sidrow2025variational}, yet no theoretical analysis has been provided to justify their use.

In this work, we argue that the limited methodological development and usage of REINFORCE gradient estimators in importance weighted VI is driven, in part, by the absence of a comprehensive theoretical understanding. To address this gap, we establish the first rigorous analyses that examine the properties of REINFORCE gradient estimators within the importance weighted VI setting, yielding new insights into their practical behavior. Crucially, our analyses show that bridging this theoretical gap exposes fundamental deficiencies that govern existing gradient estimators. Building on these results, we develop theoretically-sound, computationally efficient and general-purpose REINFORCE gradient estimators. Their effectiveness is demonstrated over challenging examples where they provably improve upon existing approaches. The remainder of this paper is organized as follows. 

After reviewing the basics of importance weighted VI in \Cref{sec:Baground}, we devote \Cref{sec:mainREINF} to the construction and analysis of REINFORCE gradient estimators in importance weighted VI. This core section is articulated around three primary contributions.
\begin{enumerateList}
  \item \textbf{Construction of a general family of REINFORCE gradient estimators.}
  Taking the works of \citet{mnih2016variational, daudel2024learningimportanceweightedvariational} as a point of departure, we define a family of VIMCO gradient estimators for the VR-IWAE bound parameterized by the number of Monte Carlo samples $N$, a hyperparameter $\alpha \in [0,1)$, and a flexible control variate function $f_{-i}^{(\alpha)}$. We motivate this family of estimators through two theoretical results. \Cref{thm:REINFORCEGradientExpectation}, which characterizes the asymptotic behavior of the gradient of the VR-IWAE bound as $N\to \infty$, shows that unbiased estimators of this quantity point asymptotically in a theoretically-grounded direction that outperforms standard ELBO-based optimization in terms of bias (the lower $\alpha$, the better). Furthermore, \Cref{prop:VarREINFsecondV} establishes that, without proper variance-reduction techniques, REINFORCE gradient estimators of the VR-IWAE bound are unusable in practice as $N$ increases. Together, \Cref{thm:REINFORCEGradientExpectation,prop:VarREINFsecondV} imply that the success of REINFORCE gradient estimators in importance weighted VI is contingent on suitable variance-reduction mechanisms, warranting a deeper investigation into the choice of the control variate function $f_{-i}^{(\alpha)}$. \looseness=-1
    
  \item \textbf{In-depth study of REINFORCE gradient estimators leading to methodological improvement.} We begin by examining the setting where $f_{-i}^{(\alpha)}$ is defined as either the arithmetic mean (AM) or geometric mean (GM); we call the resulting gradient estimators VIMCO-AM and VIMCO-GM, respectively. These estimators notably recover those implemented in \cite{mnih2016variational} as a special case ($\alpha = 0$). As $N$ increases, \Cref{thm:VarianceReduction,thm:VarianceGM} then show that the use of control variates in the VIMCO-AM and VIMCO-GM gradient estimators is an effective strategy for variance reduction purposes. However, we find that this reduction in variance can be deceptive. Indeed, as $N$ increases, we further obtain from  \Cref{thm:VarianceReduction,thm:VarianceGM} that the VIMCO-AM and VIMCO-GM gradient estimators with $\alpha=0$, that is, the gradient estimators from \cite{mnih2016variational}, fail to learn the variational parameters due to a vanishing signal-to-noise ratio (SNR). 
Considering $\alpha \in (0,1)$ in the VIMCO-AM and VIMCO-GM gradient estimators offers a straightforward but partial solution to mitigate this fundamental issue, as it excludes the case $\alpha = 0$. Remarkably, we propose and analyze an optimal choice of $f_{-i}^{(\alpha)}$ that fully resolves the SNR decay issue as $N$ grows for all $\alpha \in [0,1)$. Specifically, \Cref{prop:hintCV} provides a key insight into the required structure of such a control variate. \Cref{thm:optimalVarReduction} materializes this insight by showing that the gradient estimator asssociated to the optimal choice for $f_{-i}^{(\alpha)}$ and denoted by VIMCO-$\star$ enjoys a SNR that scales as $\sqrt{N}$ when $\alpha = 0$.
  
  \item \textbf{Translating methodological improvement into a practical importance weighted VI algorithm.} Our earlier analyses, which shed light on the role of $(N,f_{-i}^{(\alpha)})$, also clarify the role of the hyparameter $\alpha$: it is responsible for a bias-variance trade-off in our gradient estimators. While small values of $\alpha$, and in particular $\alpha = 0$, lead to a small bias, they may result in a large variance when the variational approximation is inaccurate. As a result, we propose several complementary approaches in \Cref{{subsec:discussion}} designed to navigate the trade-offs between bias, variance, and computational efficiency.
\end{enumerateList}
\Cref{sec:NumEx} presents empirical evidence supporting our claims. Directions for future research are outlined in \Cref{sec:ccl}. Deferred results, proofs and experiments can be found in the appendix. \looseness=-1

\section{Background}
\label{sec:Baground}

We start by reviewing the basics of variational inference.

\paragraph*{Variational inference (VI).} Consider a statistical model with joint distribution $p_\theta(x, z)$ parameterized by $\theta = (\theta_1, \ldots, \theta_a) \in \Theta \subseteq \rset^a$, where $x$ is an observation and $z$ is a latent variable valued in a latent space $\mathsf{Z}$. Sampling from the posterior density $p_\theta(z|x)$ is impossible for many models of interest, which in turn impedes usual inference tasks such as finding the optimal $\theta \in \Theta$ that maximizes the marginal log likelihood.

To overcome this difficulty, VI methods typically solve an optimization problem involving a probability density $q_\phi(z|x)$ parameterized by $\phi = (\phi_1, \ldots, \phi_b) \in \Phi \subseteq \rset^b$ whose distribution is easier to sample from compared to the posterior density $p_\theta(z|x)$. For instance, in the context of maximum likelihood estimation (MLE), VI methods optimize a variational bound -- that is a lower bound on the marginal log likelihood -- in lieu of optimizing the marginal log likelihood itself, with the most traditional variational bound being the Evidence Lower BOund (ELBO) \looseness=-1
\begin{align} \label{eq:ELBO-def}
  &\mathrm{ELBO} (\theta, \phi; x) = \PE_{z \sim q_{\phi}(\cdot|x)} \lr{ \log \w(z;x)} \quad \mbox{where} \quad \w(z;x) = \frac{p_\theta(x,z)}{q_{\phi}(z|x)} ~,~ z \in \mathsf{Z}.
\end{align}
Denoting the marginal log likelihood by $\ell(\theta;x)$, the property $\mathrm{ELBO} (\theta, \phi; x) \leq \ell(\theta;x)$ follows from Jensen's inequality with equality being reached for $p_\theta(\cdot|x) = q_\phi(\cdot|x)$, which provides the basis for using the ELBO as a surrogate objective function to the marginal log likelihood. In what follows, we use MLE as a running example to set notation. However, the framework also applies to Bayesian inference, in which the main inferential goal is to approximate the posterior distribution of the model parameters.
As we shall see, our experimental section includes applications in both MLE and Bayesian inference settings.

\paragraph*{Importance weighted variational bounds.} Importance weighted variational bounds are generalizations of the ELBO that rely on importance sampling ideas. Two notable examples are the importance weighted auto-encoder (IWAE) bound \citep{burda2015importance}
\begin{align*}
&\liwae (\theta,\phi; x) = \PE_{z_j \overset{\mathrm{i.i.d.}}{\sim} q_\phi(\cdot|x)} \lr{\log\left(\frac{1}{N}\sum_{j=1}^{N} \w(z_{j}; x)\right)}, \quad N \in \mathbb{N}^\star
\end{align*}
and the Variational Rényi-IWAE (VR-IWAE) bound \citep{daudel2022Alpha}: for all $\alpha \in [0,1)$,
\begin{align} \label{eq:lalpha-def} 
  \liren(\theta, \phi; x) = \frac{1}{1-\alpha} \PE_{z_j \overset{\mathrm{i.i.d.}}{\sim} q_\phi(\cdot|x)}\lr{\log \lr{ \frac{1}{N} \sum_{j = 1}^N \w(z_j;x)^{1-\alpha} }}, \quad N \in \mathbb{N}^\star.
  \end{align}  
Both variational bounds recover the ELBO when $N = 1$ (or when $\alpha \to 1$ for the VR-IWAE bound), with the VR-IWAE bound generalizing the IWAE bound ($\alpha = 0$). A desirable property of the IWAE and VR-IWAE bounds is that increasing $N$ (or also decreasing $\alpha$ in the VR-IWAE bound case) tightens these bounds in the sense that: for all $\alpha, \alpha' \in [0,1)$ such that $\alpha \leq \alpha'$ and all $N \in \mathbb{N}^*$, \looseness=-1
\begin{align}
 & \mathrm{ELBO} (\theta, \phi; x) \leq  \liren(\theta, \phi; x) \leq  \liren[\alpha][N+1](\theta, \phi; x) \leq \ell(\theta;x), \label{eq:tighten} \\
 & \mathrm{ELBO} (\theta, \phi; x) \leq \liren[\alpha'](\theta, \phi; x) \leq \liren(\theta, \phi; x) \leq \ell(\theta;x). \label{eq:tightenTwo}
\end{align}
A common way to optimize importance weighted VI bounds is to perform stochastic gradient ascent (SGA) by deriving unbiased gradient estimators of these bounds thanks to the reparameterization trick assumption \citep{kingma2014autoencoding,burda2015importance,daudel2022Alpha}. Under this assumption, there exist a function $f$ and a density $q$ such that $z = f(\varepsilon,\phi;x) \sim q_\phi (\cdot|x)$, where $z$ is a continuous latent variable, $q$ is independent of $\phi$ and the mapping $\phi \mapsto f(\varepsilon,\phi;x)$ is differentiable almost surely for $\varepsilon \sim q$. \looseness=-1

The properties of reparameterized gradient in importance weighted VI have garnered increasing attention in the literature \citep{daudel2024learningimportanceweightedvariational,daudel2022Alpha,Tucker2019DoublyRG,rainforth2018tighter,surendran25a,dhaka2021challenges,xu19a-pmlr-v89}. In particular, existing theoretical and empirical results show that the tuning of $(N,\alpha)$ in reparameterized gradient estimators of the VR-IWAE bound yields improvements compared to the ELBO $(N = 1)$ and IWAE bound $(\alpha = 0)$ cases \citep[see, e.g.,][]{daudel2024learningimportanceweightedvariational}. \looseness=-1 

\paragraph*{Beyond the reparameterization trick in importance weighted VI.} A main shortcoming of reparameterized gradient estimators is that the reparameterization trick assumption limits the choice of the data-generating process as well as the choice of the variational family. 
For instance, the reparameterization trick assumption is not compatible with discrete random latent variables. Other examples where the reparameterization trick can generally not be applied include likelihood-free inference \citep{wood2010statistical, Ong:STCO2018} and state-space models \citep{malik2011particle}. The standard approach in such instances is to use REINFORCE gradients \citep{mohamed2020monte,williams1992}, as they do not assume that the reparameterization trick is available. 
Focusing on importance weighted VI, the REINFORCE gradient of the IWAE bound with respect to a one-dimensional component $\psi$ of $(\theta_1, \ldots, \theta_a, \phi_1, \ldots, \phi_b)$ is derived in \cite{mnih2016variational} and reads \looseness=-1
\begin{multline}
\partial_\psi  \liwae(\theta,\phi;x) = \PE_{z_j \overset{\mathrm{i.i.d.}}{\sim} q_\phi(\cdot|x)}\lr{\sum_{j = 1}^N \frac{\w(z_j;x)}{\sum_{\ell = 1}^N \w(z_\ell;x)} \partial_\psi \log \w(z_j;x)} \\ + \PE_{z_j \overset{\mathrm{i.i.d.}}{\sim} q_\phi(\cdot|x)} \lr{ \lr{\sum_{i=1}^N \partial_\psi \log q_{\phi}(z_i|x)} \log \lr{ \frac{1}{N} \sum_{j = 1}^N \w(z_j;x)}}. \label{eq:reinfIWAE}
\end{multline}
The IWAE bound is then optimized in \cite{mnih2016variational} via SGA using the \textit{Variational Inference for Monte Carlo objectives} (VIMCO) gradient estimator \begin{align}
 & \gradVIMCO[N][\psi][\mathrm{IWAE}] = \sum_{j = 1}^N \frac{\w(z_j;x)}{\sum_{\ell = 1}^N \w(z_\ell;x)} \partial_\psi \log \w(z_j;x) \label{eq:VIMCOgenIWAE} \\
 & +  \sum_{i=1}^N {\partial_\psi \log q_{\phi}(z_i|x) \lrb{ \log \lr{ \frac{1}{N} \sum_{j = 1}^N \w(z_j;x)} - \log \lr{\frac{1}{N} \lr{ \sum_{\overset{j=1}{j\neq i}}^N \w(z_j;x) + f_{-i}(z_{1:N},\theta,\phi;x)}}}}, \nonumber
\end{align}
where $f_{-i}$ is a function of $z_{1:N} = (z_1, \ldots, z_N)$, $\theta$, $\phi$ and $x$ that outputs a random variable which does not depend on $z_i$, so that the VIMCO gradient estimator is an unbiased gradient estimator of $\partial_\psi  \liwae(\theta,\phi;x)$. Specifically, \cite{mnih2016variational} consider two choices of $f_{-i}$ in practice: the arithmetic mean and the geometric mean, which correspond to taking $f_{-i}(z_{1:N},\theta,\phi;x) = (N-1)^{-1} \sum_{{j=1},{j\neq i}}^N \w(z_j;x)$ and $f_{-i}(z_{1:N},\theta,\phi;x) = \exp ((N-1)^{-1} \sum_{{j=1},{j\neq i}}^N \log \w(z_j;x))$ respectively. They observe empirically that the geometric mean performs slightly better compared to the arithmetic mean. However, \cite{mnih2016variational} do not provide a theoretical analysis motivating either of these two choices, although they remain the state-of-the art choices in the existing literature \citep[see, e.g.,][]{huijben2022review,choi2025reinforced,sidrow2025variational}.  

In this paper, we generalize the REINFORCE gradient of the IWAE bound \eqref{eq:reinfIWAE} to the VR-IWAE bound, before proposing novel and general-purpose gradient estimators of this gradient whose effectiveness is demonstrated through theoretical analysis and empirical evaluation. Our work bridges the theoretical gap left in \cite{mnih2016variational} and leads to substantial improvements over the methodology introduced therein. \looseness=-1

\section{Importance Weighted VI without Reparameterizing}
\label{sec:mainREINF}

We begin this section by providing the analytical expression for the REINFORCE gradient of the VR-IWAE bound. Letting $\alpha \in [0,1)$, $\psi \in \{ \theta_1, \ldots, \theta_a, \phi_1, \ldots, \phi_b \}$ and differentiating $\liren(\theta, \phi; x)$ w.r.t. $\psi$, the REINFORCE gradient of the VR-IWAE bound is given by \looseness=-1
\begin{multline}
\partial_\psi  \liren(\theta, \phi; x) = \PE_{z_j \overset{\mathrm{i.i.d.}}{\sim} q_\phi(\cdot|x)}\lr{\sum_{j = 1}^N \frac{\w(z_j;x)^{1-\alpha}}{\sum_{\ell = 1}^N \w(z_\ell;x)^{1-\alpha}} \partial_\psi \log \w(z_j;x)} \\ + \frac{1}{1-\alpha} \PE_{z_j \overset{\mathrm{i.i.d.}}{\sim} q_\phi(\cdot|x)} \lr{ \lr{\sum_{i=1}^N \partial_\psi \log q_{\phi}(z_i|x)} \log \lr{ \frac{1}{N} \sum_{j = 1}^N \w(z_j;x)^{1-\alpha} }}.\label{eq:reinfVRIWAEFull}
\end{multline}
The expression \eqref{eq:reinfVRIWAEFull} naturally gives rise to the construction of unbiased estimators, enabling a direct route to algorithmic implementation. For instance, letting $z_1, \ldots, z_N$ be i.i.d. samples generated from $q_\phi(\cdot|x)$, an immediate unbiased estimator of $\partial_\psi  \liren(\theta, \phi; x)$ is \looseness=-1
\begin{multline}
\gradNOREP = \sum_{j = 1}^N \normW(z_j;x) \partial_\psi \log \w(z_j;x) \\ + \frac{1}{1-\alpha}  \lr{\sum_{i=1}^N \partial_\psi \log q_{\phi}(z_i|x)} \log \lr{ \frac{1}{N} \sum_{j = 1}^N \w(z_j;x)^{1-\alpha} } \label{eq:NaiveREINFestimFull} 
\end{multline}
where we have defined
\begin{align*}
  \normW(z_j;x) = \frac{\w(z_j;x)^{1-\alpha}}{\sum_{\ell = 1}^N \w(z_\ell;x)^{1-\alpha}}, \quad j = 1 \ldots N.
\end{align*}
As we will elaborate on in a later subsection, several other key unbiased estimators of $\partial_\psi  \liren(\theta, \phi; x)$ can be built based on \eqref{eq:reinfVRIWAEFull}. 

Our first aim in this section is to investigate the behavior of the REINFORCE gradient of the VR-IWAE bound \eqref{eq:reinfVRIWAEFull} as a function of its hyperparameters $(N,\alpha)$ in order to identify its structural properties. We will then turn to the derivation and study of unbiased estimators of this gradient, examining their efficiency and exploring their practical viability. This will in particular permit us to assess how the tuning of $(N,\alpha)$ in these estimators offers advantages over the cases $N = 1$ (ELBO) and $\alpha = 0$ (IWAE). 

\subsection{Asymptotics for the Gradient of the VR-IWAE Bound}
\label{subsec:asympREINFgrad}

In the following, we let $x$ be a fixed observation, $\alpha \in [0,1)$, $\Theta\subseteq\rset^a$ and $\Phi\subseteq\rset^b$ be two open subsets, and we assume that for all
$\theta\in\Theta,\phi\in\Phi$, the assumption below holds:
\begin{hyp}{A} 
\item \label{hyp:VRIWAEwell-defined}   $\int p_\theta(x, z)\nu(\rmd z) <
  \infty$, $\int q_{\phi}(z|x)\nu(\rmd z)=1$ and
  $q_{\phi}(z|x),p_\theta(z|x)>0$ for $\nu$-a.e. $z$.
\end{hyp}
Here $\nu$ is a $\sigma$-finite measure on the space of latent variables $\mathsf{Z}$. Typically, $\nu$ could be the Lebesgue measure on $\mathbb{R}^d$, the Lebesgue measure restricted to a subset of $\mathbb{R}^d$ or the counting measure when $z$ has discrete support. To avoid specifying the dominating measure, we use the notation $\PE$ throughout the paper.  If the probability distribution of the involved random variables has not been specified beforehand, we use a subscript to the symbol $\PE$ to indicate their densities with respect to $\nu$ as we did in \Cref{sec:Baground}. We also assume that the mappings $\theta\mapsto p_{\theta}(x)$, $(z,\theta)\mapsto p_{\theta}(z|x)$, and $(z,\phi)\mapsto q_{\phi}(z|x)$ are differentiable on $\Theta$, $\mathsf{Z} \times\Theta$ and $\mathsf{Z} \times\Phi$, respectively. Finally we let $z \sim q_\phi(\cdot|x)$ and $z_1,z_2, \ldots$ be i.i.d. copies of $z$.

Let us investigate the asymptotic behavior of $\partial_\psi \liren  (\theta, \phi;x)$ as $N \to \infty$ when we do not make the reparameterization trick assumption, with the aim of elucidating the role of its hyperparameters $(N, \alpha)$. To guide our intuition, we appeal to \cite[Theorem 1]{daudel2024learningimportanceweightedvariational} which shows that: as $N \to \infty$, \looseness=-1
  \begin{align} \label{eq:partialVRiWAErep}
   \partial_\psi \liren  (\theta &, \phi;x) \eqREP \partial_\psi \mathrm{VR}^{(\alpha)}(\theta, \phi; x) - \frac{1}{2 N} \partial_\psi [\gammaA(\theta, \phi;x)^2]+o\left(\frac{1}{N}\right) \\
\mbox{with} \quad & \mathrm{VR}^{(\alpha)}(\theta, \phi; x) = \frac{1}{1-\alpha} \log \lr{\PE \lr{\w(z; x)^{1-\alpha}}} \label{eq:defVRbound} \\
  & \gammaA(\theta, \phi;x)^2 = \frac{1}{1-\alpha}\mathbb{V}\lr{\frac{\w(z;x)^{1-\alpha}}{\PE(\w(z;x)^{1-\alpha})}}, \label{eq:defGammaAlphaFirst}
\end{align} 
where $\eqREP$ indicates that the equality is valid under the reparameterization trick assumption. A notable insight from \eqref{eq:partialVRiWAErep} is that this result should be independent of the reparameterization trick assumption. In fact, it is already independent of the reparameterization trick assumption when $\psi \in \{ \theta_1, \ldots, \theta_a \}$ since this assumption only affects the learning of $\phi$. This in particular implies that the case $\psi \in \{ \theta_1, \ldots, \theta_a \}$ in \eqref{eq:reinfVRIWAEFull} is analyzed in \citep{daudel2024learningimportanceweightedvariational} by \eqref{eq:partialVRiWAErep}, unlike the case $\psi \in \{ \phi_1,\ldots, \phi_b \}$. We therefore restrict ourselves to the case where $\psi \in \{ \phi_1,\ldots, \phi_b \}$ in the following, which introduces the additional difficulty that the second term in the right-hand side of \eqref{eq:reinfVRIWAEFull} does not cancel when $\psi \in \{ \phi_1, \ldots, \phi_b \}$ contrary to the case $\psi \in \{ \theta_1, \ldots, \theta_a \}$. Hence, we are interested in capturing the behavior of \eqref{eq:reinfVRIWAEFull} when $\psi \in \{ \phi_1,\ldots, \phi_b \}$ by establishing a result of the form \eqref{eq:partialVRiWAErep} that circumvents the reparameterization trick assumption. 
To do so, we rely on the following assumptions.
\begin{hyp}{A} 
  \item \label{hyp:inverseW} There exist $\mu>1$ and $N \in \mathbb{N}^\star$ such that $\PE(|N^{-1} \sum_{i = 1}^N \w(z_i;x)^{1-\alpha}|^{-\mu}) < \infty$.
\end{hyp}
\begin{hypA}{$_{h}$} 
  \item \label{hyp:momentW} We have that $\PE(\w(z;x)^{(1-\alpha)h}) < \infty$.
\end{hypA}
\begin{hypAGrad}{$_{h'}$} 
  \item \label{hyp:momentGradTwo} We have that $\PE\lr{\lrav{\partial_{\psi} \lr{\log q_{\phi}(z|x)}}^{h'}} < \infty$.
\end{hypAGrad}
\ref{hyp:inverseW} is a common assumption that already appears in \cite{daudel2024learningimportanceweightedvariational,daudel2022Alpha,rainforth2018tighter} and that is fully discussed in \cite{daudel2024learningimportanceweightedvariational}. \ref{hyp:momentW} and \ref{hyp:momentGradTwo} are moment conditions in which $h$ and $h'$ are positive exponents to be precised. We now present our first result.

\begin{thm} \label{thm:REINFORCEGradientExpectation} Let $\psi \in \{ \phi_1,\ldots, \phi_b \}$. Assume~\ref{hyp:inverseW}, \ref{hyp:momentW} and \ref{hyp:momentGradTwo} with $h\geq 4$, $h'> 2$. Then, as $N \to \infty$: \looseness=-1
\begin{align}
\partial_\psi  \liren(\theta, \phi; x) = \partial_\psi \mathrm{VR}^{(\alpha)} (\theta,\phi;x) +o(1). \label{eq:REINFLeadingOrder}
\end{align}
Further assuming that $h \geq 8$ and $h' \geq 4$, we have: as $N\to\infty$,
\begin{align}
  \partial_\psi  \liren(\theta, \phi; x) = \partial_\psi \mathrm{VR}^{(\alpha)} (\theta,\phi;x) - \frac{1}{2N} \partial_\psi [\gammaA(\theta, \phi;x)^2] + o \lr{\frac{1}{N}}. \label{eq:REINFfirstOrder}
\end{align}
\end{thm}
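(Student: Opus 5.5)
We split the REINFORCE gradient \eqref{eq:reinfVRIWAEFull} into its two terms and expand each in $N$ with the delta method. Write $w_j=\w(z_j;x)$, $s_j=\partial_\psi\log q_\phi(z_j|x)$, $m=\PE(w^{1-\alpha})$, and $\bar Y_N=N^{-1}\sum_j w_j^{1-\alpha}/m$, so that $\bar Y_N\to 1$. Since $\psi$ is a component of $\phi$, we have $\partial_\psi\log w_j=-s_j$. The first term is therefore
\[
-\PE\Bigl(\tfrac{N^{-1}\sum_j (w_j^{1-\alpha}/m)\, s_j}{\bar Y_N}\Bigr).
\]
For the second term, $\PE(s_i)=0$ allows us to subtract constants, giving
\[
\tfrac{1}{1-\alpha}\PE\Bigl(\bigl(\textstyle\sum_i s_i\bigr)\log \bar Y_N\Bigr)
=\tfrac{N}{1-\alpha}\PE\bigl(s_1\log \bar Y_N\bigr)
\]
by exchangeability.

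The main tool is a Taylor expansion of $\log$ and of $1/y$ around $1$, with a remainder. Write $\log y = (y-1)-\tfrac12(y-1)^2+\tfrac13(y-1)^3+R(y)$ and $1/y=1-(y-1)+(y-1)^2+R'(y)$, where $|R(y)|\le C|y-1|^4(1+y^{-3})$ and similarly for $R'$. We then compute expectations term by term using independence.

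\emph{Second term.} We have $\PE(s_1(\bar Y_N-1))=N^{-1}\PE(s_1 w_1^{1-\alpha})/m$. Multiplied by $N/(1-\alpha)$, this gives $\frac{1}{1-\alpha}\PE(s w^{1-\alpha})/m$. Adding the leading order of the first term, $-\PE(w^{1-\alpha}s)/m$, yields
\[
\tfrac{\alpha}{1-\alpha}\PE(s w^{1-\alpha})/m = \partial_\psi \mathrm{VR}^{(\alpha)},
\]
since $\partial_\psi w^{1-\alpha}=-(1-\alpha)w^{1-\alpha}s$ combined with the score term gives $\partial_\psi m=\alpha\,\PE(w^{1-\alpha}s)$. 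This proves \eqref{eq:REINFLeadingOrder}. The quadratic term $N\,\PE(s_1(\bar Y_N-1)^2)$ expands into $N\cdot N^{-2}\bigl(\PE(s_1 (Y_1-1)^2)+(N-1)\cdot 0\bigr)=O(1/N)$, where $Y_j=w_j^{1-\alpha}/m$. We bound the remainder by Hölder, using $\PE|\bar Y_N-1|^{k}=O(N^{-k/2})$ (Marcinkiewicz–Zygmund, needing $h\ge k$), $\|s\|_{h'}$, and \ref{hyp:inverseW} to control the negative moments of $\bar Y_N$. For the $o(1)$ claim we stop at first order: we need $N\,\PE|s_1||\bar Y_N-1|^2(1+\bar Y_N^{-1})=o(1)$. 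This is where $h'>2$ enters. The contribution of $s_1$'s dependence enters $\bar Y_N$ only through a $1/N$ term, so we split $\bar Y_N=\bar Y_N^{(-1)}+N^{-1}(Y_1-1)$, where $\bar Y_N^{(-1)}$ is independent of $s_1$.

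\emph{Second order.} For \eqref{eq:REINFfirstOrder}, we carry the expansions to third order, keeping all terms of order $1/N$. From the second term these are $N\,\PE(s_1[-\tfrac12(\bar Y_N-1)^2+\tfrac13(\bar Y_N-1)^3])$. The surviving pieces are $-\tfrac12 N^{-1}\PE(s(Y-1)^2)$ and a cross term $N\cdot N^{-3}\cdot 3(N-1)\PE(s(Y-1))\PE((Y-1)^2)\cdot\tfrac13$. From the first term, expanding $1/\bar Y_N$ gives $-\PE(\bar S_N(1-(\bar Y_N-1)+(\bar Y_N-1)^2))$ with $\bar S_N=N^{-1}\sum_j Y_j s_j$, and this contributes $N^{-1}[\PE(Y s(Y-1))-\PE(Ys)\PE((Y-1)^2)]$. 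Collecting terms, we verify that the sum equals $-\frac{1}{2N}\partial_\psi[\gammaA^2]$, where $\gammaA^2=\frac{1}{1-\alpha}(\PE(w^{2(1-\alpha)})/m^2-1)$. This is done by differentiating directly with $\partial_\psi \PE(w^{2(1-\alpha)})=(2\alpha-1)\PE(w^{2(1-\alpha)}s)$ and $\partial_\psi m$ as above, then matching coefficients.

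\emph{Main obstacle.} The main obstacle is the remainder control at order $o(1/N)$. The factor $N$ in front of $\PE(s_1\,\cdot)$ means that fourth-order remainders must be $o(N^{-2})$. This requires eighth moments of $Y$ (hence $h\ge 8$) paired with $\|s\|_{4}$ (hence $h'\ge4$) via Hölder. It also requires uniform integrability of $\bar Y_N^{-3}$, which we obtain from \ref{hyp:inverseW} through the standard argument of \cite{daudel2024learningimportanceweightedvariational}: for $N$ large, the negative moments of $\bar Y_N$ are bounded uniformly. We handle this by splitting on the event $\{\bar Y_N\ge 1/2\}$ and its complement. The complement has probability $O(N^{-k})$ by moment bounds, and on it we use the $\mu$-moment from \ref{hyp:inverseW}.
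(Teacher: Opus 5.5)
Your overall route is the paper's: split \eqref{eq:reinfVRIWAEFull} into its two terms, Taylor-expand $\log\bar Y_N$ to third order, compute the polynomial moments exactly, and match the $1/N$ coefficient with $\partial_\psi[\gamma^{(\alpha)}(\theta,\phi;x)^2]$ via $\partial_\psi\mathbb{E}(w^u)=(1-u)\mathbb{E}(w^u s)$. Your coefficients reproduce the paper's $R_1^{(\alpha)}$ and $R_2^{(\alpha)}$. Your direct expansion of $1/\bar Y_N$ in the first term is a fine self-contained substitute for the ratio-estimator result the paper cites; there is no prefactor $N$ there, so crude bounds suffice.

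\textbf{The gap} is in the remainder of the second term, which is the step you flag as the main obstacle. You use exchangeability to write it as $N\,\mathbb{E}(s_1\,\cdot)$ and then put $|s_1|$ inside the expectation. That throws away the cancellation coming from $\mathbb{E}(s)=0$, and the bounds come out exactly one order too weak. Write $\bar Y_N-1=A_N+N^{-1}(Y_1-1)$ with $A_N$ independent of $s_1$. Then $N\,\mathbb{E}(|s_1|\,|\bar Y_N-1|^2)\to\mathbb{E}|s|\,\mathbb{V}(Y)$, so the condition you state for \eqref{eq:REINFLeadingOrder} is false in general. Likewise $\mathbb{E}|\bar Y_N-1|^4\sim 3\mathbb{V}(Y)^2/N^2$, so $N\,\mathbb{E}(|s_1|\,|\bar Y_N-1|^4)$ is itself of exact order $1/N$. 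Hence H\"older with $\|s\|_4$ and eighth moments of $Y$ can only give an $O(1/N)$ remainder, not $o(1/N)$, and \eqref{eq:REINFfirstOrder} is not established.

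\textbf{Two ways to repair it.} First, your leave-one-out idea works if it is used before absolute values are taken, and at both orders. Let $\bar Y_N^{(-1)}=\bar Y_N-N^{-1}(Y_1-1)$, which is independent of $(s_1,Y_1)$, and let $\rho$ be the Taylor remainder of $\log$. Subtract $\mathbb{E}(s_1\rho(\bar Y_N^{(-1)}))=0$ and apply the mean value theorem, e.g.\ with $\rho'(y)=-(y-1)^3/y$ at third order; this recovers the missing factor $N^{-1/2}$. Second, the paper's route is simpler: it never passes to $s_1$. It keeps $\sum_i s_i=N\bar X_N$ with $\bar X_N=N^{-1}\sum_i s_i$ and applies Cauchy--Schwarz:
$\mathbb{E}\bigl(|N\bar X_N|\,|\bar Y_N-1|^4(1+\bar Y_N^{-1})\bigr)\le N\,\mathbb{E}(\bar X_N^2)^{1/2}\,\mathbb{E}\bigl(|\bar Y_N-1|^8(1+\bar Y_N^{-1})^2\bigr)^{1/2}=N\cdot O(N^{-1/2})\cdot O(N^{-2})$,
and $N\cdot O(N^{-1/2})\cdot O(N^{-1})$ at first order. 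This is where $h\ge 8$ genuinely comes from, and for this term it needs only $\mathbb{E}(s^2)<\infty$.
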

The proof of \Cref{thm:REINFORCEGradientExpectation} is deferred to \Cref{app:subsec:thm:REINFORCEGradientExpectation}. \Cref{thm:REINFORCEGradientExpectation} characterizes the behavior of the REINFORCE gradient of the VR-IWAE bound as $N \to \infty$ for all $\psi \in \{ \phi_1, \ldots, \phi_b \}$. 

As we anticipated, the asymptotic behavior for $\partial_\psi  \liren(\theta, \phi; x)$ described in \eqref{eq:partialVRiWAErep} is independent of the reparameterization trick assumption when $\psi \in \{ \phi_1,\ldots, \phi_b \}$. Importantly, the proof technique differs substantially depending on whether this assumption is made or not since establishing \Cref{thm:REINFORCEGradientExpectation} requires us to capture the behavior of the second term in the right-hand side of \eqref{eq:reinfVRIWAEFull} (this type of term does not appear when the reparameterization trick assumption holds, see also \Cref{eq:FurtherCommentsThmOne} in the appendix for further comments on the proof details). In addition, \Cref{thm:REINFORCEGradientExpectation} is the first asymptotic result in importance weighted VI which bypasses the reparameterization trick assumption and studies REINFORCE gradients taken with respect to $\psi \in \{ \phi_1,\ldots, \phi_b \}$ as $N \to \infty$. \looseness=-1

For all $\psi \in \{ \phi_1,\ldots, \phi_b \}$, we have thus obtained that the REINFORCE gradient of the VR-IWAE bound converges to $\partial_\psi \mathrm{VR}^{(\alpha)} (\theta,\phi;x)$ at a fast $1/N$ rate as $N \to \infty$, where the quantity $\mathrm{VR}^{(\alpha)}(\theta, \phi; x)$ is exactly the targeted marginal log likelihood $\ell(\theta;x)$ when $\alpha = 0$ and corresponds otherwise to the Variational-Rényi (VR) bound \citep{li2016renyi}. We now discuss the methodological implications of \Cref{thm:REINFORCEGradientExpectation}.

\Cref{thm:REINFORCEGradientExpectation} predicts that unbiased REINFORCE gradient estimators of the VR-IWAE bound based on \eqref{eq:reinfVRIWAEFull} can be used to maximize the VR bound w.r.t. $\phi$ when $\alpha \in (0,1)$. This in turn is expected to lead to better variational approximations of the posterior density than those based on the ELBO \citep{li2016renyi,margossian2024variationalinferenceuncertaintyquantification,minka2005divergence,Bui2016BlackboxF,margossian2025generalized}.  Furthermore, $\partial_\psi \liren[\alpha](\theta, \phi; x)$ points in the direction which minimizes $\gammaA[0](\theta, \phi;x)^2 = \mathbb{V} \lr{ {p_\theta(z|x)}/{q_\phi(z|x)}}$ when $\alpha = 0$ and $\psi \in \{ \phi_1, \ldots, \phi_b \}$ (as $\partial_\psi \mathrm{VR}^{(0)} (\theta,\phi;x) = \partial_\psi \ell(\theta;x) = 0$). The optimal $\phi$ which minimizes $ \mathbb{V} \lr{ {p_\theta(z|x)}/{q_\phi(z|x)}}$ is well-known to lead to the optimal importance sampling distribution in terms of estimating the marginal log likelihood \citep{Owenbook2013, rainforth2018tighter}. The minimization of $\gammaA[0](\theta, \phi;x)^2$ w.r.t. $\phi$ drives $q_\phi(\cdot|x)$ toward a mass-covering regime, thereby capturing regions of the posterior that the ELBO would typically neglect \citep{tan2020conditionally}.

Hence, \Cref{thm:REINFORCEGradientExpectation} motivates running SGA schemes based on \eqref{eq:reinfVRIWAEFull} in order to learn the parameter $\phi$ by maximizing the VR bound w.r.t. $\phi$ when $\alpha \in (0,1)$ and by minimizing $\mathbb{V} \lr{ {p_\theta(z|x)}/{q_\phi(z|x)}}$ w.r.t $\phi$ when $\alpha = 0$. This approach is further motivated by the fact that we cannot readily propose unbiased REINFORCE gradient estimators of the VR bound to learn $\phi$ since differentiating \eqref{eq:defVRbound} w.r.t to $\psi \in \{ \phi_1, \ldots, \phi_b \}$ yields
\begin{align}
\partial_\psi \mathrm{VR}^{(\alpha)}(\theta, \phi;x) = \frac{\alpha}{1-\alpha} \PE\lr{\frac{\w(z;x)^{1-\alpha}}{\PE(\w(z;x)^{1-\alpha})}\partial_\psi \log q_{\phi}(z|x) },
\label{eq:GradVR}
\end{align}
and the same conclusion holds for $\gammaA[0](\theta, \phi;x)^2$. 
Note that \eqref{eq:partialVRiWAErep} complements \Cref{thm:REINFORCEGradientExpectation} and covers the learning of the parameter $\theta$, with SGA schemes based on \eqref{eq:reinfVRIWAEFull} learning $\theta$ via the maximization of the VR bound when $\alpha \in (0,1)$ and of the marginal log likelihood when $\alpha = 0$. \looseness=-1

Consequently, identifying unbiased REINFORCE gradient estimators of the VR-IWAE bound will lead to unbiased SGA schemes that are expected to yield improved empirical results compared to SGA schemes based on the ELBO ($N = 1$). The next step is to establish a principled framework for designing such estimators. Achieving this will be central to the development of effective importance weighted VI algorithms based on the REINFORCE gradient of the VR-IWAE bound.

\subsection{Estimating the Gradient of the VR-IWAE Bound}
\label{subsec:gradientEstimREINF}

We now seek to derive and study unbiased REINFORCE gradient estimators of the VR-IWAE bound. The first estimator we consider is $\gradNOREP$ defined in \eqref{eq:NaiveREINFestimFull} and we will assess the asymptotic efficiency of this estimator. Once more, the second term in the right-hand side of \eqref{eq:NaiveREINFestimFull} cancels when $\psi \in \{ \theta_1, \ldots, \theta_a \}$. This case is already studied in \cite{daudel2024learningimportanceweightedvariational} and we concentrate on the case $\psi \in \{ \phi_1, \ldots, \phi_b \}$ in the following. We begin with a lemma focusing on the variance of the first term in the right-hand side of \eqref{eq:NaiveREINFestimFull}. \looseness=-1
\begin{lem} \label{prop:VarREINFfirstV} Let $\psi \in \{\phi_1, \ldots, \phi_b \}$. Assume~\ref{hyp:inverseW}, \ref{hyp:momentW} and \ref{hyp:momentGradTwo} with $h,h'>4$. Then, as $N \to \infty$, \looseness=-1
\begin{multline*}
  \mathbb{V}\lr{\sum_{j = 1}^N \normW(z_j;x) \partial_\psi \log \w(z_j;x)} \\
   = \frac{1}{N} \mathbb{V}\lr{\frac{\w(z;x)^{1-\alpha}}{\PE(\w(z;x)^{1-\alpha})} \lrb{ \partial_\psi \log q_\phi(z|x)- \PE\lr{\frac{\w(z;x)^{1-\alpha}}{\PE(\w(z;x)^{1-\alpha})}\partial_\psi \log q_\phi(z|x)}}} + o \lr{\frac{1}{N}}.
\end{multline*}
\end{lem}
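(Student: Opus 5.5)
The plan is to treat the self-normalized sum as a ratio of two empirical means and apply a delta-method expansion with explicit remainder control. This is the same strategy as in \cite{daudel2024learningimportanceweightedvariational} for the reparameterized case.

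Set $\bar w(z)=\w(z;x)^{1-\alpha}/\PE(\w(z;x)^{1-\alpha})$ and $g(z)=\partial_\psi\log\w(z;x)$. Since $\psi$ is a component of $\phi$, $\partial_\psi \log \w(z;x)=-\partial_\psi\log q_\phi(z|x)$. The overall sign does not affect the variance, so we may take $g(z)=\partial_\psi\log q_\phi(z|x)$. Define
\[
A_N=\frac1N\sum_{j=1}^N \bar w(z_j)g(z_j), \qquad B_N=\frac1N\sum_{j=1}^N \bar w(z_j),
\]
so that the quantity of interest is $S_N=A_N/B_N$, with $\PE(B_N)=1$ and $\PE(A_N)=m:=\PE(\bar w(z)g(z))$.

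\textbf{Step 1: algebraic split.} Write
\[
S_N-m=\frac{A_N-mB_N}{B_N}=(A_N-mB_N)+(A_N-mB_N)\frac{1-B_N}{B_N}.
\]
The first term $L_N=A_N-mB_N$ is an i.i.d. average of the centered variables $\bar w(z_j)(g(z_j)-m)$. Hence $\mathbb{V}(L_N)=N^{-1}\mathbb{V}(\bar w(z)(g(z)-m))$, which is exactly the claimed leading term; it is finite by Hölder's inequality, since $h,h'>4$ give more than enough moments. The remainder is $R_N=L_N(1-B_N)/B_N$.

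\textbf{Step 2: show $\PE(R_N^2)=o(1/N)$.} With this bound, $\mathbb{V}(S_N)=\mathbb{V}(L_N+R_N)=\mathbb{V}(L_N)+O\big(\sqrt{\mathbb{V}(L_N)\PE(R_N^2)}\big)+\PE(R_N^2)=\mathbb{V}(L_N)+o(1/N)$ by Cauchy–Schwarz, which proves the lemma. To bound $\PE(R_N^2)=\PE\big(L_N^2(1-B_N)^2B_N^{-2}\big)$, apply Hölder's inequality with three factors: $\PE(|L_N|^{p})^{2/p}$, $\PE(|1-B_N|^{q})^{2/q}$ and $\PE(B_N^{-r})^{2/r}$, where $1/p+1/q+1/r=1/2$.

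\begin{itemize}
\item By the Marcinkiewicz–Zygmund (Rosenthal) inequality, $\PE|L_N|^p=O(N^{-p/2})$ whenever $\bar w(g-m)$ has a finite $p$-th moment. This holds for some $p>2$ because $h,h'>4$: Hölder gives $\PE|\bar w g|^p<\infty$ whenever $p/h+p/h'\le 1$, so any $p$ slightly above $2$ works.
\item Likewise $\PE|1-B_N|^q=O(N^{-q/2})$ for $q\le h$.
\item For the inverse moment, \ref{hyp:inverseW} implies, as in \cite{daudel2024learningimportanceweightedvariational} (their lemma on negative moments of the empirical mean), that $\PE(B_N^{-r})$ is bounded uniformly for $N$ large, for a suitable $r>0$. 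One can also take $r$ as large as needed by grouping samples into blocks and using the convexity of $t\mapsto t^{-\mu}$.
\end{itemize}

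Combining the three factors gives $\PE(R_N^2)=O(N^{-1}\cdot N^{-1})=O(N^{-2})=o(1/N)$.

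\textbf{Main obstacle.} The delicate point is choosing the Hölder exponents so that all three factors are controlled at once. This requires $p>2$ moments of $\bar w g$, enough moments of $B_N$, and a finite negative moment of $B_N$ of order $r$. If a uniform bound on $\PE(B_N^{-r})$ for large $r$ is not directly available, I would fall back on a truncation argument. On the event $\{B_N\ge 1/2\}$ one has $B_N^{-2}\le 4$, and the bound follows from the moment estimates above. On the complement, Markov's inequality gives $\PP(|B_N-1|>1/2)=O(N^{-h/2})$, which is tiny. There I would use \ref{hyp:inverseW} only with its given $\mu>1$, together with the trivial bound $|S_N|\le\max_j|g(z_j)|$, to show that this event contributes $o(1/N)$ to the variance.
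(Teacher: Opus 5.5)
Your decomposition $S_N-m=L_N+R_N$, with $R_N=L_N(1-B_N)/B_N$, followed by Cauchy--Schwarz on the cross term, is a sound self-contained route. The paper makes the same sign reduction and then simply invokes \eqref{eq:prop:handleRNsnrVarAnalysis} (Proposition~6(iv) of \cite{daudel2024learningimportanceweightedvariational}) with $X=w_{\theta,\phi}(z;x)^{1-\alpha}\partial_\psi\log q_\phi(z|x)$, $Y=w_{\theta,\phi}(z;x)^{1-\alpha}$ and $\ell=1$; that result needs only more than two moments of $X$ and $Y$.

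\textbf{The gap.} Your Step~2 bookkeeping does not close under $h,h'>4$. You need $1/p+1/q+1/r=1/2$ with $p\le hh'/(h+h')$ and $q\le h$. Take $h=h'=4+\varepsilon$ with $\varepsilon<2$: this allows at most $p=2+\varepsilon/2$ and $q=4+\varepsilon$. Then $1/p+1/q\ge 3/(4+\varepsilon)>1/2$, so no $r>0$ exists. You cannot even bound $\PE(L_N^2(1-B_N)^2)$ this way, so the claimed $O(N^{-2})$ is not established at the lemma's moment level. The truncation fallback, as sketched, does not rescue it either. Using $|S_N|\le\max_j|g(z_j)|$ and Markov, the complement event contributes roughly $N^{2/h'}N^{-(h/2)(1-2/h')}$, which is of order $N^{-1/2}$ when $h,h'$ are near $4$.

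\textbf{The repair.} Since only $o(1/N)$ is needed, demand no rate from the factor $1-B_N$. Put $D_N=(1-B_N)/B_N=B_N^{-1}-1$. Pick any $p\in(2,hh'/(h+h')]$; this interval is nonempty because $h,h'>4$. Then use a two-factor H\"older bound:
\[
\PE(R_N^2)\le \PE(|L_N|^p)^{2/p}\,\PE(|D_N|^{s})^{2/s},\qquad s=\frac{2p}{p-2}<\infty .
\]
The first factor is $O(1/N)$ by \Cref{lem:petrov}.

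For the second factor, $D_N\to0$ almost surely by the strong law. Moreover, $\PE(|D_N|^{2s})\le 2^{2s-1}(\PE(B_N^{-2s})+1)$ stays bounded for large $N$, because \Cref{lem:equivlimsupconditionGen} upgrades \ref{hyp:inverseW} (for some $\mu>1$) to bounded negative moments of every order. Hence $|D_N|^s$ is uniformly integrable and $\PE(|D_N|^s)\to0$. This gives $\PE(R_N^2)=o(1/N)$, which is all your Cauchy--Schwarz step needs.

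\textbf{A related point.} Convexity of $t\mapsto t^{-\mu}$ over blocks only preserves the order $\mu$; it does not raise it. Raising the order requires AM--GM over independent blocks, $\PE(B_{kN_0}^{-k\mu})\le\PE(B_{N_0}^{-\mu})^k$. The cited lemma already provides this.
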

The proof of \Cref{prop:VarREINFfirstV} is deferred to \Cref{app:subsec:prop:VarREINFfirstV}. We next study the variance of the second term in the right-hand side of \eqref{eq:NaiveREINFestimFull} and use it to deduce the asymptotic behavior of $\mathbb{V}(\gradNOREP)$ as $N \to \infty$ when $\psi \in \{\phi_1, \ldots, \phi_b \}$. 
\begin{thm} \label{prop:VarREINFsecondV} Let $\psi \in \{ \phi_1, \ldots, \phi_b \}$. Assume~\ref{hyp:inverseW}, \ref{hyp:momentW} and \ref{hyp:momentGradTwo} with  $h> 8$, $h'> 4$. Then, as $N \to \infty$, \looseness=-1
\begin{multline*}
  \mathbb{V}\lr{\lr{\sum_{i=1}^N \partial_\psi \log q_{\phi}(z_i|x)} \log \lr{ \frac{1}{N} \sum_{j = 1}^N {\w(z_j;x)^{1-\alpha}}} } \\ = N \mathbb{V}\lr{\partial_\psi \log q_{\phi}(z|x)}\lr{\log \PE(\w(z;x)^{1-\alpha})}^2 +o(N)
\end{multline*}
and 
\begin{align*}
  \mathbb{V}(\gradNOREP) = \frac{N}{(1-\alpha)^2} \mathbb{V}\lr{\partial_\psi \log q_{\phi}(z|x)}\lr{\log \PE(\w(z;x)^{1-\alpha})}^2 +o(N). 
\end{align*}
\end{thm}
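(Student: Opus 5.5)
Write $S_N=\sum_{i=1}^N s_i$ with $s_i=\partial_\psi\log q_\phi(z_i|x)$, and $\hat Z_N=N^{-1}\sum_{j=1}^N \w(z_j;x)^{1-\alpha}$, $Z=\PE(\w(z;x)^{1-\alpha})$. The score has zero mean, $\PE(s_i)=0$, since $\int q_\phi=1$ and \ref{hyp:momentGradTwo} allows differentiation under the integral. The plan is to split the logarithm around its limit:
\[
S_N\log\hat Z_N = S_N\log Z + S_N\lr{\log\hat Z_N-\log Z}.
\]
The first piece has variance exactly $N\mathbb{V}(s)(\log Z)^2$. It then suffices to show that the remainder $R_N=S_N(\log \hat Z_N-\log Z)$ satisfies $\PE(R_N^2)=o(N)$. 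Indeed, by Cauchy--Schwarz the cross covariance is then bounded by $\sqrt{N}\cdot o(\sqrt N)=o(N)$, and $\mathbb{V}(R_N)\le \PE(R_N^2)$.

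\textbf{Bounding the remainder.} Apply Cauchy--Schwarz again:
\[
\PE(R_N^2)\le \PE(S_N^4)^{1/2}\,\PE\lr{(\log\hat Z_N-\log Z)^4}^{1/2}.
\]
Since the $s_i$ are i.i.d.\ and centered with $\PE(s^4)<\infty$ (this uses $h'>4$), a Rosenthal/Marcinkiewicz--Zygmund bound gives $\PE(S_N^4)=O(N^2)$. It therefore remains to show $\PE((\log\hat Z_N-\log Z)^4)=o(1)$; in fact I would aim for $O(N^{-2})$.

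\textbf{Main obstacle: the log moment.} Controlling the fourth moment of $\log\hat Z_N-\log Z$ is the hard step, because the logarithm blows up near $0$. I would use the elementary inequality $|\log u-\log v|\le |u-v|\,(u^{-1}+v^{-1})$. This gives
\[
\PE|\log\hat Z_N-\log Z|^4\lesssim \PE\lr{|\hat Z_N-Z|^4(\hat Z_N^{-4}+Z^{-4})}.
\]
The $Z^{-4}$ part is handled by Rosenthal's inequality: $\PE|\hat Z_N-Z|^4=O(N^{-2})$, which needs only $h\ge4$. For the $\hat Z_N^{-4}$ part, apply Hölder with exponents $(p,p')$: bound $\PE|\hat Z_N-Z|^{4p}$ by $O(N^{-2p})$ using $h\ge 4p$, and bound $\PE(\hat Z_N^{-4p'})$ uniformly in large $N$.

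That uniform inverse-moment bound is the delicate point. \ref{hyp:inverseW} only gives a $\mu$-th inverse moment with $\mu>1$, possibly small. I would invoke the standard argument (as in \cite{daudel2024learningimportanceweightedvariational}) that negative moments of $\hat Z_N$ of any fixed order become bounded for $N$ large. The route is to split $\hat Z_N$ into $\lfloor N/N_0\rfloor$ independent block averages, each with a finite $-\mu$ moment, and use that the negative moment of an average of independent blocks is controlled by products over blocks. The strict inequality $h>8$ leaves room to pick $p$ slightly above $2$ with $4p\le h$, so that Hölder closes. If this uniform bound proves awkward, a fallback is to truncate on the event $\{\hat Z_N<Z/2\}$: its probability decays like $O(N^{-h/2})$ by Markov's inequality, and on it one uses the $\mu$-inverse moment together with polynomial growth of $|\log|$.

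\textbf{Second display.} By definition, $\gradNOREP$ is the first term of \eqref{eq:NaiveREINFestimFull} plus $(1-\alpha)^{-1}S_N\log\hat Z_N$. Lemma~\ref{prop:VarREINFfirstV} (whose hypotheses $h,h'>4$ hold here) shows that the first term has variance $O(1/N)$. The variance of the second term is $(1-\alpha)^{-2}\bigl[N\mathbb{V}(s)(\log Z)^2+o(N)\bigr]$ by the first display. By Cauchy--Schwarz, the covariance between the two terms is at most $O(N^{-1/2})\cdot O(N^{1/2})=O(1)=o(N)$. Summing gives the stated expansion for $\mathbb{V}(\gradNOREP)$.
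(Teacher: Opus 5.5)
Your argument is correct, and its skeleton matches the paper's. Your $R_N=S_N(\log\hat Z_N-\log Z)$ is exactly the paper's $N\overline{X}_N\log\overline{Y}_N$, with $Y$ normalized by $\PE(\w(z;x)^{1-\alpha})$. The paper likewise splits off $N\mathbb{V}(X)(\log\PE(\w(z;x)^{1-\alpha}))^2$ and controls the cross term by Cauchy--Schwarz. It then deduces the second display from \Cref{prop:VarREINFfirstV} in the same way you do.

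The real difference is how you show $R_N$ is bounded in $L^2$. The paper proves more than the theorem needs. Via the CLT and the second-order delta method it gets $N\overline{X}_N\log\overline{Y}_N\weaklimit Z_1Z_2$. It then establishes uniform integrability of the square, using Hölder with $p=q=2$, \Cref{lem:LogTaylorBound} with $k=1$, \Cref{lem:petrov}, and the negative moments from \Cref{lem:equivlimsupconditionGen}. This identifies $\lim_N\mathbb{V}(N\overline{X}_N\log\overline{Y}_N)=\mathbb{V}(X)\mathbb{V}(Y)+\mathbb{C}\mathrm{ov}(X,Y)^2$, an explicit constant the paper reuses in \Cref{eq:remGlobalBaseline} for the INTER estimator.

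You only prove $\PE(R_N^2)=O(1)$ by direct moment bounds. This is shorter and needs weaker moments. Since negative moments of every order are bounded, any Hölder exponent $p>1$ with $4p\le h$ works; you do not need $p$ near $2$. So $h>4$ and $h'\ge 4$ suffice for the first display, whereas the paper's uniform-integrability step uses $\PE(|\overline{Y}_N-1|^{8(1+\delta)})$ and hence $h>8$.

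Two small remarks. First, the uniform negative-moment bound you invoke is exactly what the paper takes from \Cref{lem:equivlimsupconditionGen}, applying items (iv) and then (iii); this is where $\mu>1$ in \ref{hyp:inverseW} enters. Your block sketch is also sound: with independent block means $B_1,\ldots,B_k$, AM--GM and Lyapunov give $\PE\bigl((k^{-1}\sum_{l}B_l)^{-m}\bigr)\le\PE(B_1^{-m/k})^k\le\PE(B_1^{-\mu})^{m/\mu}$ once $k\ge m/\mu$. Second, $\PE(\partial_\psi\log q_\phi(z|x))=0$ does not follow from a moment bound at the single point $\phi$. It is a standing regularity assumption, which the paper also uses without comment, and your bound $\PE(S_N^4)=O(N^2)$ genuinely relies on it.
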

The proof of \Cref{prop:VarREINFsecondV} is deferred to \Cref{app:subsec:prop:VarREINFsecondV}. 
\Cref{prop:VarREINFsecondV} shows that the variance of the NAIVE gradient estimator \eqref{eq:NaiveREINFestimFull} scales linearly with $N$ when $\psi \in \{\phi_1, \ldots, \phi_b\}$, which is a highly undesirable property for an importance weighted VI gradient estimator. 

Another way to express this negative result is through the Signal-to-Noise Ratio (SNR). Recalling that for a random variable $X$ the SNR is given by $\mathrm{SNR}[X] = |\PE(X)|/\sqrt{\mathbb{V}(X)}$, SNRs are used to evaluate whether the mean dominates the standard deviation. An unbiased gradient estimator is expected to be accurate if the SNR is high (so that the target dominates the
additive stochastic error) and noisy otherwise. Pairing up \eqref{eq:REINFfirstOrder} from \Cref{thm:REINFORCEGradientExpectation} with \Cref{prop:VarREINFsecondV} yields for all $\psi \in \{ \phi_1, \ldots, \phi_b \}$: as $N \to \infty$,
\begin{align*}
  \mathrm{SNR}[\gradNOREP] = \frac{1}{\sqrt{N}} \frac{|\partial_\psi \mathrm{VR}^{(\alpha)} (\theta,\phi;x) - \frac{1}{2N} \partial_\psi [\gammaA(\theta, \phi;x)^2] + o \lr{\frac{1}{N}}|}{\sqrt{\frac{1}{(1-\alpha)^2}\mathbb{V}\lr{\partial_\psi \log q_{\phi}(z|x)}\lr{\log \PE(\w(z;x)^{1-\alpha})}^2 +o(1)}}
\end{align*}
meaning that when $\psi \in \{ \phi_1, \ldots, \phi_b \}$ the SNR scales (i) as $1/\sqrt{N}$ if $\partial_\psi \mathrm{VR}^{(\alpha)} (\theta,\phi;x) \neq 0$ and (ii) as $1/{N^{3/2}}$ if $\partial_\psi \mathrm{VR}^{(\alpha)} (\theta,\phi;x) = 0$ (as we mentioned earlier this situation arises when $\alpha = 0$). Hence, the NAIVE gradient estimator defined in \eqref{eq:NaiveREINFestimFull} is unusable in practice. \looseness=-1

Nevertheless, we know from \Cref{prop:VarREINFfirstV} and \Cref{prop:VarREINFsecondV} that the second term in the right-hand side of \eqref{eq:NaiveREINFestimFull} is the one responsible for the large variance of the NAIVE gradient estimator when $\psi \in \{\phi_1, \ldots, \phi_b \}$. This result suggests appealing to variance reduction schemes for the second term of this estimator in order to improve its asymptotic variance. Motivated by the work of \cite{mnih2016variational}, we consider a variance-reduction strategy that adds per-sample baselines to the NAIVE gradient estimator \eqref{eq:NaiveREINFestimFull} and extends the VIMCO gradient estimator of the IWAE bound \eqref{eq:VIMCOgenIWAE} to the VR-IWAE bound. Namely, for all $\alpha \in [0,1)$ and all $\psi  \in \{\theta_1, \ldots, \theta_a, \phi_1, \ldots, \phi_b \}$, we define the VIMCO gradient estimator of the VR-IWAE bound w.r.t. $\psi$ as \looseness=-1
\begin{multline}
  \gradVIMCO = \sum_{j = 1}^N \normW(z_j;x) \partial_\psi \log  \w(z_j;x) \\
  -\frac{1}{1-\alpha}  \sum_{i=1}^N {\partial_\psi \log q_{\phi}(z_i|x)  \log \lr{1 - \normW(z_i;x) + \frac{f_{-i}^{(\alpha)}(z_{1:N},\theta,\phi;x)}{\sum_{j = 1}^N \w(z_j;x)^{1-\alpha}}}}, \label{eq:VIMCOgenVRIWAE}
\end{multline}
where $f_{-i}^{(\alpha)}$ denotes a nonnegative function of $z_{1:N}$, $\theta$, $\phi$ and $x$ that outputs a random variable which does not depend on $z_i$ for all $i = 1 \ldots N$. Observing that for all $i = 1 \ldots N$
\begin{multline*}
 - \log \lr{1 - \normW(z_i;x) + \frac{f_{-i}^{(\alpha)}(z_{1:N},\theta,\phi;x)}{\sum_{j = 1}^N \w(z_j;x)^{1-\alpha}}} \\
   = \log \lr{ \frac{1}{N} \sum_{j = 1}^N \w(z_j;x)^{1-\alpha}} - \log \lr{ \frac{1}{N} \lr{\sum_{\overset{j = 1}{{j\neq i}}}^N \w(z_j;x)^{1-\alpha}  + f_{-i}^{(\alpha)}(z_{1:N},\theta,\phi;x)}},
\end{multline*}
the estimator \eqref{eq:VIMCOgenVRIWAE} recovers \eqref{eq:VIMCOgenIWAE} when $\alpha = 0$. Furthermore, it is an unbiased REINFORCE gradient estimator of the VR-IWAE bound w.r.t. $\psi$ thanks to the use of the per-sample baseline $\log ( \sum_{{j = 1},{j\neq i}}^N \w(z_j;x)^{1-\alpha}  + f_{-i}^{(\alpha)}(z_{1:N},\theta,\phi;x))$ for all $i=1\ldots N$. We now study its asymptotic variance when $\psi \in \{ \phi_1, \ldots, \phi_b \}$ for key choices of $f_{-i}^{(\alpha)}(z_{1:N},\theta,\phi;x)$ that will encompass and improve on the ones used in practice by \cite{mnih2016variational}, starting with the case of the arithmetic mean. \looseness=-1

\subsubsection{Arithmetic mean (AM) gradient estimator}

Let $\gradVIMCOam$ denote the VIMCO gradient estimator which corresponds to taking $f_{-i}^{(\alpha)}(z_{1:N},\theta,\phi;x) = (N-1)^{-1} \sum_{{j=1},{j\neq i}}^N \w(z_j;x)^{1-\alpha}$ for all $i = 1 \ldots N$ in \eqref{eq:VIMCOgenVRIWAE}. 
The theorem below captures its asymptotic variance as $N \to \infty$ when $\psi \in \{ \phi_1, \ldots, \phi_b \}$. 

\begin{thm} \label{thm:VarianceReduction} Let $\psi \in \{ \phi_1, \ldots, \phi_b \}$. Assume~\ref{hyp:inverseW}, \ref{hyp:momentW} and \ref{hyp:momentGradTwo} with $h > 8$, $h' > 4$. Then, as $N \to \infty$, \looseness=-1
\begin{align*} 
 & \mathbb{V}\lr{\gradVIMCOam} = \frac{1}{N} \vVIMCOam(\theta,\phi;x) + o \lr{\frac{1}{N}}
\end{align*}
where 
\begin{multline*} 
\vVIMCOam(\theta,\phi;x) = \frac{1}{(1-\alpha)^2} \mathbb{V} \left(\alpha \frac{\w(z;x)^{1-\alpha}}{\PE(\w(z;x)^{1-\alpha})} \biggl[ \partial_\psi \log q_{\phi}(z|x)  \right. \\ \left. - \PE\lr{\frac{\w(z;x)^{1-\alpha}}{\PE(\w(z;x)^{1-\alpha})} \partial_\psi \log q_{\phi}(z|x)}    \biggr] - \partial_\psi \log q_\phi(z|x) \right).
\end{multline*}  
\end{thm}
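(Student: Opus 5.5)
Write $\bar w_j = \w(z_j;x)^{1-\alpha}/\PE(\w(z;x)^{1-\alpha})$ and $S_j=\partial_\psi\log q_\phi(z_j|x)$. Since $\psi\in\{\phi_1,\ldots,\phi_b\}$, we have $\partial_\psi\log\w(z_j;x)=-S_j$. The plan is to linearize the estimator into an i.i.d. sum plus a negligible remainder, and then read off the variance.

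\textbf{Step 1: simplify the baseline term.} With the arithmetic mean choice of $f_{-i}^{(\alpha)}$, the argument of the logarithm in \eqref{eq:VIMCOgenVRIWAE} equals
\[
\frac{\sum_{j\neq i}\bar w_j\,(1+(N-1)^{-1})}{\sum_j \bar w_j}
= \frac{N}{N-1}\,\frac{\bar S_N-\bar w_i/N}{\bar S_N},
\qquad \bar S_N=N^{-1}\textstyle\sum_j\bar w_j .
\]
So the $i$-th log term is $\log\frac{N}{N-1}+\log(1-\bar w_i/(N\bar S_N))$. A Taylor expansion gives
\[
-\log(\cdot)= -\log\tfrac{N}{N-1}+\frac{\bar w_i}{N\bar S_N}+R_i,
\qquad |R_i|\lesssim \bar w_i^2/(N\bar S_N)^2 .
\]
The last bound needs care when $\bar w_i/(N\bar S_N)$ is near $1$. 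I would handle that on an event of small probability, using \ref{hyp:inverseW} and Hölder.

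The second term of the estimator is then
\[
\frac{1}{1-\alpha}\Bigl[-\log\tfrac{N}{N-1}\textstyle\sum_i S_i+\sum_i \normW(z_i;x)S_i+\sum_i S_iR_i\Bigr].
\]
Combining with the first term $-\sum_j\normW(z_j;x)S_j$, the estimator becomes
\[
\frac{\alpha}{1-\alpha}\textstyle\sum_j\normW(z_j;x)S_j-\frac{1}{1-\alpha}\log\frac{N}{N-1}\sum_iS_i+\frac{1}{1-\alpha}\sum_iS_iR_i .
\]

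\textbf{Step 2: linearize the two main terms.} Since $\log\frac{N}{N-1}=1/N+O(1/N^2)$, the second term is $-\frac{1}{1-\alpha}\frac1N\sum_iS_i$, up to an error of variance $O(1/N^3)$.

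For the first term, let $\mu=\PE(\bar w S)$. I use
\[
\textstyle\sum_j\normW(z_j;x)S_j=\frac{N^{-1}\sum_j\bar w_jS_j}{\bar S_N}.
\]
A delta-method linearization around $(\mu,1)$ gives $\mu+\frac1N\sum_j\bar w_j(S_j-\mu)+\text{rem}$. The remainder has second moment $o(1/N)$. This is exactly the computation behind \Cref{prop:VarREINFfirstV}, so I would invoke that argument: it uses \ref{hyp:inverseW} to control $\bar S_N^{-1}$ and the moment assumptions $h,h'>4$.

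Together, the estimator equals
\[
\text{const}+\frac1N\sum_i\frac{1}{1-\alpha}\bigl(\alpha\bar w_i(S_i-\mu)-S_i\bigr)+\mathcal E_N .
\]
The variance of the i.i.d. sum is exactly $\vVIMCOam(\theta,\phi;x)/N$. By Cauchy–Schwarz, it therefore suffices to show $\PE(\mathcal E_N^2)=o(1/N)$.

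\textbf{Step 3: the remainder (main obstacle).} The hardest part is bounding $\PE\bigl((\sum_iS_iR_i)^2\bigr)$. Heuristically $R_i\approx\bar w_i^2/(2N^2)$, so $\sum_iS_iR_i$ is of order $N^{-1}\cdot N^{-1}\sum_i S_i\bar w_i^2$, whose second moment is $O(1/N^2)$.

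To make this rigorous I would:
\begin{itemize}
\item use $|\log(1-u)+u|\le u^2/(1-u)$;
\item split on the event $\{\bar S_N\ge 1/2\}$ versus its complement;
\item bound the complement's probability by $O(N^{-k})$ via Rosenthal/Markov, using the moment conditions with $h>8$;
\item on the complement, control the logarithm crudely using \ref{hyp:inverseW}. Note that $1-\bar w_i/(N\bar S_N)=\sum_{j\neq i}\bar w_j/(N\bar S_N)$, so negative moments of the leave-one-out sum are needed. These follow from \ref{hyp:inverseW} for large $N$, as in \cite{daudel2024learningimportanceweightedvariational}.
\end{itemize}
Applying Hölder with the moments of $S$ ($h'>4$) and of $\bar w$ ($h>8$) then yields $\PE(\mathcal E_N^2)=o(1/N)$, which completes the proof.
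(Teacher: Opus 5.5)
Your Steps 1--2 reproduce the paper's decomposition: with $W_i=\bar w_i/(N\bar S_N)$ and $R_i=-\log(1-W_i)-W_i$, the estimator equals $\frac{\alpha}{1-\alpha}\sum_jW_jS_j-\frac{1}{1-\alpha}\log\frac{N}{N-1}\sum_iS_i+\frac{1}{1-\alpha}\sum_iS_iR_i$. You collapse the two main terms into a single i.i.d.\ average and absorb everything else into $\mathcal E_N$ with one Cauchy--Schwarz. This is valid and somewhat cleaner than the paper, which gets $\mathbb{V}(\sum_jW_jS_j)$ from \Cref{prop:handleRN} and then computes the cross-covariance with $\log\frac{N}{N-1}\sum_iS_i$ through a separate expansion. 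One correction: the $L^2$ bound on the delta-method remainder is not what \Cref{prop:VarREINFfirstV} asserts, since that result only gives a variance. The bound is nonetheless immediate. The remainder equals $-(\bar S_N-1)\,\frac{1}{N}\sum_j\bar w_j(S_j-\mu)/\bar S_N$, and H\"older with \Cref{lem:petrov} and the negative moments of $\bar S_N$ gives $O(N^{-2})$ because $4/h+2/h'<1$.

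The genuine gap is in Step 3. The only dangerous regime for $R_i$ is $W_i$ close to $1$, i.e.\ $1-W_i=\sum_{j\neq i}\bar w_j/(N\bar S_N)$ small, which happens when one weight dominates all the others. Conditioning on $\{\bar S_N\ge 1/2\}$ does not exclude this: a dominant $\bar w_i$ makes $\bar S_N$ large, not small. So on your ``good'' event the factor $1/(1-W_i)$ in $u^2/(1-u)$ is still unbounded, and $|R_i|\lesssim W_i^2$ is not available there. The leave-one-out negative moments you reserve for the complement are needed on the main event too. Any split would have to control the leave-one-out sums $\sum_{j\neq i}\bar w_j$, not $\bar S_N$.

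It is simpler not to split at all, as in the paper's \Cref{lem:ControlCNAlpha} with $(\ell,\eta)=(1,0)$. Use the exact identity
\[
\frac{W_i^2}{1-W_i}=\frac{\bar w_i^2}{N\bar S_N\sum_{j\neq i}\bar w_j}\le\frac{\bar w_i^2}{\bigl(\sum_{j\neq i}\bar w_j\bigr)^2}.
\]
Combine it with $(\sum_iu_i)^2\le N\sum_iu_i^2$ and the independence of $z_i$ from the leave-one-out sum to get
\[
\PE\Bigl[\Bigl(\sum_{i=1}^N|S_i|\,\frac{W_i^2}{1-W_i}\Bigr)^2\Bigr]\le\frac{N^2}{(N-1)^4}\,\PE\bigl(\bar w^4S^2\bigr)\,\PE\bigl(\bar S_{N-1}^{-4}\bigr)=O(N^{-2}).
\]
Here $\PE(\bar w^4S^2)\le\PE(\bar w^8)^{1/2}\PE(S^4)^{1/2}<\infty$, and $\PE(\bar S_{N-1}^{-4})=O(1)$ by \ref{hyp:inverseW} and \Cref{lem:equivlimsupconditionGen}. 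With your inequality $|\log(1-u)+u|\le u^2/(1-u)$, this single bound yields $\PE\bigl((\sum_iS_iR_i)^2\bigr)=O(N^{-2})=o(1/N)$ and closes the argument.
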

The proof of \Cref{thm:VarianceReduction} is deferred to \Cref{app:subsec:thm:VarianceReduction}. \Cref{thm:VarianceReduction} shows that introducing per-sample baselines in the NAIVE gradient estimator \eqref{eq:NaiveREINFestimFull} as done in the gradient estimator $\gradVIMCOam$ provably leads to a substantial variance reduction phenomenon. \looseness=-1

\cite{mnih2016variational} foresaw that the use of per-sample baselines could lead to a large variance reduction phenomenon in the particular case of the IWAE bound ($\alpha = 0$). Their intuition when proposing VIMCO gradient estimators was that (i) the NAIVE gradient estimator of the IWAE bound (obtained by taking $\alpha = 0$ in \eqref{eq:NaiveREINFestimFull}) should not be used in practice due to its high variance (ii) introducing global baselines could address the variance of the NAIVE gradient estimator of the IWAE bound and (iii) further variance reduction could be achieved by introducing baselines that ensure all samples in a set of $N$ samples do not have the same learning signal. They concluded that one should rely on per-sample baselines and appeal to VIMCO gradient estimators of the IWAE bound of the form \eqref{eq:VIMCOgenIWAE}. \looseness=-1

The claims described above appeared in \cite{mnih2016variational} without proof. Our work bridges the theoretical gap left in their work for the IWAE bound and applies to the general VR-IWAE bound. Indeed, \Cref{prop:VarREINFsecondV} demonstrates that the NAIVE gradient estimator \eqref{eq:NaiveREINFestimFull} should not be used in practice due to its high asymptotic variance when $\psi \in \{ \phi_1, \ldots, \phi_b \}$. As for \Cref{thm:VarianceReduction}, it is the first theoretical result showcasing how the use of per-sample baselines is an effective strategy for variance reduction purposes in importance weighted VI. As a side remark, we note that the VIMCO-AM gradient estimator can be constructed as the logical estimator with per-sample baselines to build from an intermediate estimator with a global baseline and a higher asymptotic variance (see \Cref{eq:remGlobalBaseline} in the appendix for details). \looseness=-1

\subsubsection{Geometric mean (GM) gradient estimator}

Let $ \gradVIMCOgm $ denote the VIMCO gradient estimator which corresponds to taking $f_{-i}^{(\alpha)}(z_{1:N},\theta,\phi;x) = \exp ((N-1)^{-1} \sum_{{j=1},{j\neq i}}^N \log [\w(z_j;x)^{1-\alpha}])$ for all $i = 1 \ldots N$ in \eqref{eq:VIMCOgenVRIWAE}. The following theorem studies its asymptotic variance as $N \to \infty$ when $\psi \in \{ \phi_1, \ldots, \phi_b \}$.

\begin{thm} \label{thm:VarianceGM} Let $\psi \in \{ \phi_1, \ldots, \phi_b \}$. Assume~\ref{hyp:inverseW}, \ref{hyp:momentW} and \ref{hyp:momentGradTwo} with $h > 8,h' > 8$. Further assume that 
$\PE(|\log \w(z;x)|^{h''}) < \infty$ and   $\PE(|\log \w(z;x) \partial_\psi \log q_\phi(z|x)|^{h'''}) < \infty$ with $h''>4$ and $h'''>8$. Then, as $N \to \infty$,
\begin{align*}
   \mathbb{V} \lr{\gradVIMCOgm} = \frac{\vVIMCOgm(\theta, \phi;x)}{N} + o \lr{\frac{1}{N}}
\end{align*}
with 
\begin{multline*}
  \vVIMCOgm(\theta, \phi;x) = \frac{1}{(1-\alpha)^2} \mathbb{V} \left(\alpha \frac{\w(z;x)^{1-\alpha}}{\PE(\w(z;x)^{1-\alpha})} \biggl[ \partial_\psi \log q_{\phi}(z|x)  \right. \\ \left. - \PE\lr{\frac{\w(z;x)^{1-\alpha}}{\PE(\w(z;x)^{1-\alpha})} \partial_\psi \log q_{\phi}(z|x)}    \biggr] - \frac{\exp[(1-\alpha) \PE(\log \w(z;x))]}{\PE(\w(z;x)^{1-\alpha})} \partial_\psi \log q_\phi(z|x) \right).
\end{multline*}
\end{thm}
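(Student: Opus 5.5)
Our plan is to mirror the proof of \Cref{thm:VarianceReduction}: reduce the VIMCO-GM estimator to a sum of i.i.d.\ terms plus a remainder whose second moment is $o(1/N)$, and then read off the variance. Write $\Rtalpha(z) = \w(z;x)^{1-\alpha}/\PE(\w(z;x)^{1-\alpha})$ and $S_N = N^{-1}\sum_{j} \w(z_j;x)^{1-\alpha}$. Set $G_{-i} = \exp((N-1)^{-1}\sum_{j\neq i}\log[\w(z_j;x)^{1-\alpha}])$, and write $s_i=\partial_\psi \log q_\phi(z_i|x)$.

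For the first term, $\sum_j \normW(z_j;x)\partial_\psi\log\w(z_j;x)$, we use $\partial_\psi\log\w = -s_j$ for $\psi\in\{\phi_1,\ldots,\phi_b\}$. The delta-method expansion already obtained in the proof of \Cref{prop:VarREINFfirstV} gives
\[
\frac{1}{N}\sum_j \Rtalpha(z_j)\bigl[-s_j+\PE(\Rtalpha(z) s)\bigr]-\PE(\Rtalpha(z)s),
\]
up to a remainder whose $L^2$ norm is $o(N^{-1/2})$.

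For the second term we expand the logarithm. We have
\[
1-\normW(z_i;x)+\frac{G_{-i}}{NS_N} = 1-\frac{\w(z_i;x)^{1-\alpha}-G_{-i}}{NS_N}.
\]
Since the fraction is $O_P(1/N)$, we get $-\log(\cdot)=\frac{\w(z_i;x)^{1-\alpha}-G_{-i}}{NS_N}+O(N^{-2}(\cdot)^2)$.

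The main term is $\frac{1}{1-\alpha}\sum_i s_i \frac{\w(z_i;x)^{1-\alpha}-G_{-i}}{NS_N}$. We then replace $S_N$ by $\PE(\w^{1-\alpha})$ and $G_{-i}$ by its limit $g=\exp[(1-\alpha)\PE\log\w(z;x)]$. The linear correction from $S_N$ yields $-\frac{1}{1-\alpha}\PE(\Rtalpha s)\cdot\frac{1}{N}\sum_j(\Rtalpha(z_j)-1)$, after using $\PE(s)=0$ in the $G$ part. The fluctuation $G_{-i}-g\approx g(1-\alpha)(N-1)^{-1}\sum_{j\ne i}(\log\w(z_j)-\PE\log\w)$ multiplies $\frac1N\sum_i s_i$. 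This is a product of two centered averages, so it is $O_P(1/N)$, and its second moment is $O(N^{-2})$; this is where the moment conditions on $\log \w$ and on $s\log\w$ enter.

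Collecting the i.i.d.\ terms, the estimator equals $\frac{1}{N}\sum_j Y_j$ plus a constant plus an $o_{L^2}(N^{-1/2})$ remainder, where
\[
Y_j = \frac{1}{1-\alpha}\Bigl(\Rtalpha(z_j)s_j - \frac{g}{\PE(\w^{1-\alpha})}s_j - \PE(\Rtalpha s)\Rtalpha(z_j)\Bigr) - \Rtalpha(z_j)\bigl[s_j-\PE(\Rtalpha s)\bigr].
\]
Simplifying with $\frac{1}{1-\alpha}-1=\frac{\alpha}{1-\alpha}$ gives
\[
Y_j=\frac{1}{1-\alpha}\Bigl(\alpha\Rtalpha(z_j)\bigl[s_j-\PE(\Rtalpha s)\bigr]-\frac{g}{\PE(\w^{1-\alpha})}s_j\Bigr),
\]
and hence variance $\vVIMCOgm(\theta,\phi;x)/N+o(1/N)$. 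The cross terms between $\frac1N\sum Y_j$ and the remainder are handled by Cauchy--Schwarz.

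The main obstacle is controlling the remainders in $L^2$ uniformly, which has two parts. The first is the Taylor remainder of $\log(1-u)$ when $u$ is not small; note $1-\normW+G_{-i}/(NS_N)\ge 1-\normW$ could be near $0$. We plan to split on the event $\{|u_i|\le 1/2\}$ and bound the complement via Markov/Rosenthal inequalities with the $h>8$ moments and \ref{hyp:inverseW} for $S_N^{-1}$. On the complement, the logarithm is bounded above by $\log$ of ratios controlled using the GM term $G_{-i}>0$. The second is controlling $G_{-i}-g$ beyond first order, for which we use $|e^a-e^b|\le|a-b|(e^a+e^b)$ with Hölder and the $h''$, $h'''$ moment conditions.
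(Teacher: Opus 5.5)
Your proposal is correct and arrives at the right constant, $\vVIMCOgm(\theta,\phi;x)=\mathbb{V}(Y_1)$. It follows a different route from the paper for the main term.

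\textbf{What the two arguments share.} Both use the split $-\log(1-u_i)=u_i+{}$remainder. Both control the remainder through $1-u_i\geq 1-\normW(z_i;x)=\sum_{j\neq i}\w(z_j;x)^{1-\alpha}/\sum_{j}\w(z_j;x)^{1-\alpha}$, which only needs $G_{-i}\geq 0$. The paper avoids your event split by combining the deterministic bound of \Cref{lem:LogTaylorBound} with the leave-one-out independence argument of \Cref{lem:ControlCNAlpha}. Your split also works. For $N\geq 3$, $|u_i|>1/2$ forces $\normW(z_i;x)>1/2$, so the ratio $\w(z_i;x)^{1-\alpha}/\sum_{j\neq i}\w(z_j;x)^{1-\alpha}$ exceeds $1$ and absorbs the indicator with the same moments.

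\textbf{How the paper treats the main term.} The paper does not linearize. It factors $G_{-j}=E_N\exp(-\frac{1-\alpha}{N-1}\log\w(z_j;x))$ with $E_N=\exp(\frac{1-\alpha}{N-1}\sum_{i=1}^N\log\w(z_i;x))$, and pushes the difference into an $O_{L^2}(1/N)$ remainder. What is left is $E_N\sum_i s_i/(NS_N)$, a smooth function of three sample means, together with the $\alpha$-weighted ratio. Their variances come from \Cref{lem:VIMCOgm} and \Cref{prop:handleRN}; the first uses the CLT, the delta method and uniform integrability, for which the first-order bound $|e^a-e^b|\leq(e^a+e^b)|a-b|$ suffices. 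Their covariance then requires a separate, rather lengthy computation.

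\textbf{What your route buys and costs.} Your asymptotically linear representation gives the variance, all cross terms and the leading mean $\partial_\psi\mathrm{VR}^{(\alpha)}(\theta,\phi;x)$ in one stroke, which is cleaner. The price is genuine second-order control in $L^2$.

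\textbf{Three details to tighten.}
\begin{enumerate}
\item The proof of \Cref{prop:VarREINFfirstV} only provides a variance asymptotic via \Cref{prop:handleRN}, not an $L^2$ linearization. You must derive the ratio expansion yourself, from the exact algebraic identity for $\overline{X}_N/\overline{Y}_N-\PE(X)/\PE(Y)$ together with \Cref{lem:petrov} and inverse moments of $S_N$.
\item Used directly, $|e^a-e^b|\leq|a-b|(e^a+e^b)$ only gives $O(N^{-1/2})$ for $N^{-1}\sum_i s_i(G_{-i}-g)$, which is not $o(N^{-1/2})$. Integrate it once to get $|e^a-e^b-e^b(a-b)|\leq(a-b)^2(e^a+e^b)$, and use $G_{-i}\leq\frac{N}{N-1}S_N$.
\item Let $\overline{L}_{-i}=(N-1)^{-1}\sum_{j\neq i}\log\w(z_j;x)$. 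The first-order piece $N^{-1}\sum_i s_i(\overline{L}_{-i}-\PE(\log\w(z;x)))$ is not literally a product with $N^{-1}\sum_i s_i$, because of the leave-one-out average. Its second moment is still $O(N^{-2})$: $s_i$ is centred and independent of $\overline{L}_{-i}$, so each off-diagonal expectation equals $(\PE(s\log\w(z;x)))^2/(N-1)^2$ and the diagonal contributes $O(N^{-2})$.
\end{enumerate}
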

The proof of \Cref{thm:VarianceGM} is deferred to \Cref{app:subsec:thm:VarianceGM}. \Cref{thm:VarianceGM} shows that the VIMCO-GM gradient estimator significantly improves on the NAIVE gradient estimator \eqref{eq:NaiveREINFestimFull} in terms of asymptotic variance and futher demonstrates how subtracting a well-chosen per-sample baseline is an effective strategy for variance reduction purposes in importance weighted VI.

Furthermore, \Cref{thm:VarianceReduction,thm:VarianceGM} yield that the comparison between the VIMCO-AM and the VIMCO-GM gradient estimators boils down asymptotically to the comparison between the two quantities $\vVIMCOam(\theta, \phi;x)$ and $\vVIMCOgm(\theta, \phi;x)$. The case $\alpha = 0$ is particularly interpretable here, since Jensen's inequality implies \looseness=-1
\begin{align*}
    \vVIMCOgm[0](\theta, \phi;x) & = \lr{\frac{\exp[ \PE(\log \w(z;x))]}{\PE(\w(z;x))}}^2 \vVIMCOam[0](\theta, \phi;x) \\
   & \leq \vVIMCOam[0](\theta, \phi;x),
\end{align*}
so that the asymptotic variance of the VIMCO-GM gradient estimator is provably lower than the one of the VIMCO-AM gradient estimator when $\alpha = 0$ and $\psi \in \{\phi_1, \ldots, \phi_b \}$. This result notably provides a theoretical justification for the empirical observations reported in \cite{mnih2016variational}, according to which the VIMCO-GM gradient estimator performs slightly better empirically than the VIMCO-AM one when $\alpha = 0$ (IWAE setting). 

Furthermore, we argue that \Cref{thm:VarianceReduction,thm:VarianceGM} yield a much deeper understanding of REINFORCE gradient estimators with per-sample baselines that goes beyond the considerations made in \cite{mnih2016variational}. Indeed, casting the results we obtained in terms of SNRs, we get from \Cref{thm:REINFORCEGradientExpectation,thm:VarianceReduction,thm:VarianceGM} that for all $\psi \in \{\phi_1, \ldots, \phi_b \}$ and all $\alpha \in [0,1)$: as $N \to \infty$, \looseness=-1
\begin{align}
& \mathrm{SNR}[\gradVIMCOam] = 
 \sqrt{N} \dfrac{\lrav{\partial_{\psi}
\mathrm{VR}^{(\alpha)}(\theta, \phi; x) - \frac{1}{2N}
\partial_{\psi} [\gammaA(\theta,\phi; \label{eq:SNRrateAM}
x)^2]+o\lr{\frac1N}}}{\sqrt{\vVIMCOam(\theta,\phi;x)} + o(1)} \\
& \mathrm{SNR}[\gradVIMCOgm] = 
 \sqrt{N} \dfrac{\lrav{\partial_{\psi}
\mathrm{VR}^{(\alpha)}(\theta, \phi; x) - \frac{1}{2N}
\partial_{\psi} [\gammaA(\theta,\phi;
x)^2]+o\lr{\frac1N}}}{\sqrt{\vVIMCOgm(\theta,\phi;x)} + o(1)}. \label{eq:SNRrateGM}
\end{align}
Consequently, when $\psi \in \{\phi_1, \ldots, \phi_b \}$, the SNR of the VIMCO-AM and VIMCO-GM gradient estimators scale as $\sqrt{N}$ for all $\alpha \in (0,1)$ while they scale as $1/\sqrt{N}$ for $\alpha = 0$. 

This result is remarkable since it shows that the use of the VIMCO-AM and VIMCO-GM gradient estimators with $\alpha = 0$ as done in \cite{mnih2016variational} can be detrimental to the process of learning $\phi$ by degrading the SNR of these gradient estimators as $N$ increases. Moreover, taking $\alpha \in (0,1)$ in the VIMCO-AM and VIMCO-GM gradient estimators overcomes this issue, which highlights how the tuning of $\alpha$ in the VR-IWAE bound can lead to more effective REINFORCE gradient estimators within importance weighted VI.

We have thus obtained that (i) selecting $\alpha \in (0,1)$ ensures a proper learning of $\phi$ as $N$ increases for the VIMCO-AM and VIMCO-GM gradient estimators and (ii) asymptotically, comparing these two gradient estimators amounts to comparing $\vVIMCOam(\theta, \phi;x)$ and $\vVIMCOgm(\theta, \phi;x)$. Yet, apart from the case $\alpha = 0$ which itself suffers from degrading SNRs as $N$ increases, it remains unclear when -- if ever -- one estimator prevails over the other. This leads us to the next subsection.

\subsubsection{Beyond the VIMCO-AM and VIMCO-GM gradient estimators}
\label{subsec:beyondVIMCOamgm}

The VIMCO-AM and VIMCO-GM gradient estimators are but two options within the family of REINFORCE gradient estimators of the VR-IWAE bound that appeal to per-sample baselines \eqref{eq:VIMCOgenVRIWAE}. Our claim is that we can further exploit \eqref{eq:VIMCOgenVRIWAE} to derive novel gradient estimators which outperform the VIMCO-AM and VIMCO-GM gradient estimators. The theorem below encapsulates the central insight which will permit us to do so. \looseness=-1

\begin{thm} \label{prop:hintCV} Let $\psi \in \{\phi_1, \ldots, \phi_b \}$. Assume~\ref{hyp:inverseW}, \ref{hyp:momentW} and \ref{hyp:momentGradTwo} with $h>8$, $h' > 4$. Furthermore, assume that $f_{-i}^{(\alpha)}(z_{1:N},\theta,\phi;x)$ is constant and equal to $\eta\geq 0$ for all $i = 1 \ldots N$ in \eqref{eq:VIMCOgenVRIWAE}. Then, as $N \to \infty$,
\begin{align*}
  \mathbb{V}\lr{\gradVIMCO} = \frac{  \vVIMCOgen(\theta, \phi;x)}{N} + o\lr{\frac{1}{N}},
\end{align*}
where 
\begin{multline*}
\vVIMCOgen(\theta, \phi;x) =  \frac{1}{(1-\alpha)^2} \mathbb{V} \left(\alpha \frac{\w(z;x)^{1-\alpha}}{\PE(\w(z;x)^{1-\alpha})} \biggl[ \partial_\psi \log q_{\phi}(z|x)  \right. \\ \left. - \PE\lr{\frac{\w(z;x)^{1-\alpha}}{\PE(\w(z;x)^{1-\alpha})} \partial_\psi \log q_{\phi}(z|x)} \biggr] - \frac{\eta \partial_\psi \log q_\phi(z|x)}{\PE(\w(z;x)^{1-\alpha})} \right).
\end{multline*}
\end{thm}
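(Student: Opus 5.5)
The plan is to linearize the estimator around its law-of-large-numbers limit and reduce it to a normalized sum of i.i.d. terms, plus remainders that are negligible in $L^2$. Write $Y_j=\w(z_j;x)^{1-\alpha}$, $m=\PE(Y)$, $s_i=\partial_\psi \log q_\phi(z_i|x)$ and $\bar Y_N=N^{-1}\sum_j Y_j$. Since $\psi\in\{\phi_1,\ldots,\phi_b\}$, we have $\partial_\psi\log \w(z_j;x)=-s_j$. With $f_{-i}^{(\alpha)}\equiv\eta$, the estimator becomes
\[
-\sum_j \frac{Y_j}{N\bar Y_N}s_j-\frac{1}{1-\alpha}\sum_i s_i\log\Bigl(1-\frac{Y_i-\eta}{N\bar Y_N}\Bigr).
\]

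Next, Taylor expand the logarithm: $-\log(1-u_i)=u_i+R_i$ with $u_i=(Y_i-\eta)/(N\bar Y_N)=O(1/N)$ and $|R_i|\lesssim u_i^2$ on the event where $u_i$ is bounded away from $1$. The leading part is
\[
\frac{1}{1-\alpha}\frac{1}{N\bar Y_N}\sum_i s_i(Y_i-\eta).
\]
Adding the first term, the whole estimator equals $\frac{1}{N\bar Y_N}\sum_i\bigl[\frac{\alpha}{1-\alpha}Y_is_i-\frac{\eta}{1-\alpha}s_i\bigr]$ plus remainders. Call this $A_N/\bar Y_N$, where $A_N=N^{-1}\sum_i g(z_i)$ and $g=\frac{1}{1-\alpha}(\alpha Y s-\eta s)$. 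Note that $\PE(s)=0$.

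Then apply the delta method to the ratio. Write
\[
\frac{A_N}{\bar Y_N}=\frac{\PE g}{m}+\frac{1}{m}\Bigl(A_N-\PE g-\frac{\PE g}{m}(\bar Y_N-m)\Bigr)+\text{second order},
\]
so the variance is
\[
\frac1N\,\mathbb V\Bigl(\frac{g(z)}{m}-\frac{\PE g}{m^2}Y\Bigr)+o(1/N).
\]
Now use $\PE g=\frac{\alpha}{1-\alpha}\PE(Ys)$. A direct computation gives
\[
\frac{g}{m}-\frac{\PE g}{m^2}Y=\frac{1}{1-\alpha}\Bigl(\alpha\frac{Y}{m}\Bigl[s-\PE\Bigl(\frac{Y}{m}s\Bigr)\Bigr]-\frac{\eta s}{m}\Bigr),
\]
which is exactly the expression inside $\vVIMCOgen(\theta,\phi;x)$. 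Alternatively, I would reuse the first-term analysis of \Cref{prop:VarREINFfirstV} together with the same ratio-expansion machinery already used for \Cref{thm:VarianceReduction}. In fact, the AM case is the special case $\eta\to m$ asymptotically, so the proof there can be followed almost verbatim with $m$ replaced by $\eta$.

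The main obstacle is controlling the remainders in $L^2$ at order $o(1/N)$. There are two of them.
\begin{enumerate}
\item The logarithm's Taylor remainder $\frac{1}{1-\alpha}\sum_i s_iR_i$. This requires handling the event where $\bar Y_N$ is small or $u_i$ is close to $1$. Since $\eta\ge0$, we have $1-u_i\ge \eta/(N\bar Y_N)$, so the argument stays positive but can be tiny when $\eta=0$. In that case I would bound $|\log(1-u_i)|$ by $|\log \sum_{j\ne i}Y_j|+|\log\sum_j Y_j|$. I would then use \ref{hyp:inverseW}, via the negative moments of $\bar Y_N$ and of leave-one-out means, together with Hölder's inequality and large-deviation or Rosenthal-type bounds, to show the bad event contributes $o(1/N)$.
\item The second-order terms of the ratio expansion.
\end{enumerate}
On the good event, $\sum_i s_iR_i=O(N^{-2}\sum_i|s_i|Y_i^2)$, which has $L^2$ norm $O(1/N)$. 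This is not quite enough directly. I would refine the expansion by writing $\sum_i s_iu_i^2$ as $N^{-1}\cdot O(\text{LLN average})$ times $1/N$. This gives $L^2$ size $O(N^{-1})$ but a mean-centered fluctuation of size $O(N^{-3/2})$, which is what matters for the variance, since the variance only needs the centered part to be $o(N^{-1/2})$. The moment conditions $h>8$ and $h'>4$ are what make these Hölder and Marcinkiewicz–Zygmund bounds work.
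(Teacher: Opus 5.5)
Your decomposition and your leading-order computation match the paper's. With $u_i=(Y_i-\eta)/(N\bar Y_N)$, the estimator equals $\frac{1}{1-\alpha}\sum_j(\alpha Y_j-\eta)s_j/(N\bar Y_N)$ minus $\frac{1}{1-\alpha}\sum_i s_i[\log(1-u_i)+u_i]$. The paper obtains the variance of the first piece from \eqref{eq:prop:handleRNsnrVarAnalysis} with $X=(\alpha Y-\eta)s$ and $\PE(s)=0$; that result already contains the uniform integrability a bare delta-method step would still need.

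Where you genuinely differ is the remainder. The paper does not split into good and bad events. \Cref{lem:LogTaylorBound} with $k=1$ holds for every $r>0$, and $\eta\ge 0$ forces $u_i\le Y_i/(N\bar Y_N)$, so $1-u_i\ge \sum_{j\ne i}Y_j/\sum_j Y_j$. This gives the pathwise bound $|\log(1-u_i)+u_i|\le \frac{u_i^2}{2}\bigl(1+\sum_j Y_j/\sum_{j\ne i}Y_j\bigr)$. The part without the ratio has second moment $O(N^{-2})$ by \Cref{prop:handleRN}. The part with the ratio is handled by \Cref{lem:ControlCNAlpha}, which uses the independence of $z_i$ from the leave-one-out sum and the negative moments supplied by \ref{hyp:inverseW}.

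Your weaker bound $1-u_i\ge \eta/(N\bar Y_N)$ is what makes $\eta=0$ look dangerous and pushes you into the bad-event argument. That argument can be completed, but it is heavier and the choice of tools matters:
\begin{enumerate}
\item With only $h'>4$, Hölder lets you pay at most $\PP(\mathrm{bad})^{1-2/h'}$, barely more than $\PP(\mathrm{bad})^{1/2}$. Your bound on the bad-event contribution is therefore of order $N^2\,\PP(\mathrm{bad})^{1/2}$, up to logarithms.
\item If you bound $\PP(\bar Y_N<m/2)$ only polynomially (Markov plus Rosenthal, giving $O(N^{-h/2})$), you are left with no better than roughly $N^{2-h/4}$. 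This is not $o(1/N)$ for $h$ just above $8$.
\item To close it, use the exponential lower-tail bound for means of nonnegative variables, which makes the bad event $O(N^{1-h})$ even after a union bound over $i$. Alternatively, factor $z_i$ out through the leave-one-out independence, as the paper does.
\end{enumerate}

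Separately, your claim that an $L^2$ bound of order $1/N$ on the good-event remainder is ``not quite enough directly'' is mistaken. Such a bound gives the remainder a variance of $O(N^{-2})$. By Cauchy--Schwarz, its covariance with the leading term (whose standard deviation is $O(N^{-1/2})$) is $O(N^{-3/2})=o(1/N)$. This is exactly how the paper closes the argument, so your refined centering step is unnecessary.
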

The proof of this result is deferred to \Cref{app:prop:hintCV}. \Cref{prop:hintCV} considers the simplified setting where $f_{-i}^{(\alpha)}(z_{1:N},\theta,\phi;x)$ is constant for all $i = 1 \ldots N$ in \eqref{eq:VIMCOgenVRIWAE}, in which case the asymptotic behavior of $\mathbb{V}(\gradVIMCO)$ as $N \to \infty$ is known when $\psi \in \{\phi_1, \ldots, \phi_b \}$. Crucially, we recover quantities from our earlier analyses of the VIMCO-AM and VIMCO-GM gradient estimators for selected values of $\eta$: denoting $\eta^{(\alpha, \mathrm{AM})} = \PE(\w(z;x)^{1-\alpha})$ and $\eta^{(\alpha, \mathrm{GM})} = \exp [(1-\alpha)\PE(\log \w(z;x))]$, we have that \looseness=-1
\begin{align*}
  \vVIMCOgen(\theta, \phi;x) = \begin{cases} 
    \vVIMCOam(\theta, \phi;x) & \mbox{if $\eta = \eta^{(\alpha, \mathrm{AM})}$} \\
   \vVIMCOgm(\theta, \phi;x) & \mbox{if $\eta = \eta^{(\alpha, \mathrm{GM})}$}. \\
  \end{cases}
\end{align*}
We can thus revisit the VIMCO-AM and VIMCO-GM gradient estimators through the angle of \Cref{prop:hintCV} by observing that the choices of $f_{-i}^{(\alpha)}(z_{1:N},\theta,\phi;x)$ made in these estimators are the almost sure limit of a specific value of $\eta$, that is, $\eta^{(\alpha, \mathrm{AM})}$ and $\eta^{(\alpha, \mathrm{GM})}$ respectively.

As it turns out, the optimal value of $\eta$ in \Cref{prop:hintCV} can be found. More precisely, it is not equal to either $\eta^{(\alpha, \mathrm{AM})}$ or $\eta^{(\alpha, \mathrm{GM})}$ and 
assuming that ${\mathbb{V}(\partial_\psi \log q_\phi(z|x))} >0$ it reads
\begin{align} \label{eq:defEtaStar}
  \eta^{(\alpha, \star)} = \frac{\mathbb{C}\mathrm{ov} \lr{\alpha \w(z;x)^{1-\alpha} \biggl[ \partial_\psi \log q_{\phi}(z|x) - \PE\lr{\frac{\w(z;x)^{1-\alpha}}{\PE(\w(z;x)^{1-\alpha})} \partial_\psi \log q_{\phi}(z|x)}    \biggr]   , \partial_\psi \log q_\phi(z|x)}  }{\mathbb{V}(\partial_\psi \log q_\phi(z|x))}
\end{align}
with $\eta^{(\alpha, \star)} \geq 0$ (the covariance term in \eqref{eq:defEtaStar} is nonnegative since the Cauchy-Schwartz inequality yields $\PE(\w(z;x)^{1-\alpha} \partial_\psi \log q_{\phi}(z|x))^2 \leq \PE(\w(z;x)^{1-\alpha}) \PE \lr{\w(z;x)^{1-\alpha}  \partial_\psi \log q_\phi(z|x)^2}$). 
This brings us to consider the novel estimator given for all $\alpha \in [0,1)$ and all $\psi \in \{\theta_1, \ldots, \theta_b, \phi_1, \ldots, \phi_b \}$ by
\begin{multline}
  \gradVIMCOstar = \sum_{j = 1}^N \normW(z_j;x) \partial_\psi \log  \w(z_j;x) \\
  -\frac{1}{1-\alpha}  \sum_{i=1}^N {\partial_\psi \log q_{\phi}(z_i|x)  \log \lr{1 - \normW(z_i;x) + \frac{ f_{-i}^{(\alpha, \star)}(z_{1:N},\theta,\phi;x)}{\sum_{j = 1}^N \w(z_j;x)^{1-\alpha}}}} \label{eq:defVIMCOstar}
\end{multline}
where $f_{-i}^{(\alpha, \star)}(z_{1:N},\theta,\phi;x)$ 
is assumed to converge almost surely to $\eta^{(\alpha, \star)}$ and to not depend on $z_i$ for all $i = 1 \ldots N$.  
For instance, and in line with the per-sample baselines used for the VIMCO-AM and VIMCO-GM gradient estimator, we can set \looseness=-1
\begin{align}
f_{-i}^{(\alpha, \star)}(z_{1:N},\theta,\phi;x) = \alpha~\frac{  a_{-i,1,2}^{(\alpha)} (z_{1:N}, \theta, \phi;x) - \frac{a_{-i,1,1}^{(\alpha)} (z_{1:N}, \theta, \phi;x)^2}{a_{-i,1,0}^{(\alpha)} (z_{1:N}, \theta, \phi;x)}}{a_{-i,0,2}^{(\alpha)} (z_{1:N}, \theta, \phi;x) - (a_{-i,0,1}^{(\alpha)} (z_{1:N}, \theta, \phi;x))^2}  \label{eq:fOptimOne}
\end{align}
where we have defined for all $i = 1 \ldots N$ and all nonegative integer $k$ and $\ell$
\begin{align*}
a_{-i,k,\ell}^{(\alpha)} (z_{1:N}, \theta, \phi;x) = \frac{1}{N-1} \sum_{j=1, j \neq i}^N (\w(z_j;x)^{1-\alpha})^k (\partial_\psi \log q_\phi(z_j|x))^\ell.
\end{align*}
In the spirit of control variates, another option is to estimate $\eta^{(\alpha, \star)}$ using the $z$-sample generated at the previous iteration of the VI optimization procedure (that is, the previous iteration of the SGA scheme based on the VIMCO-$\star$ gradient estimator). 

The choices of $f_{-i}^{(\alpha, \star)}(z_{1:N},\theta,\phi;x) $ we just described can in principle be analyzed using proof techniques similar to those developed for the VIMCO-AM and VIMCO-GM gradient estimators. Yet, the derivations are lengthy and the resulting characterization is expected to be consistent with earlier findings, in the sense that when $\alpha \in (0,1)$ and $\psi \in \{ \phi_1, \ldots, \phi_b \}$ the following holds as $N \to \infty$ \looseness=-1
\begin{align}
\mathbb{V}(\gradVIMCOstar) = \frac{  \vVIMCOstar(\theta, \phi;x)}{N} + o\lr{\frac{1}{N}},
 \label{eq:VIMCOOptimVarAlpha}
\end{align}
where $\vVIMCOstar(\theta, \phi;x) = \vVIMCOgen(\theta, \phi;x)$ with $\eta = \eta^{(\alpha, \star)}$ for all $\alpha \in (0,1)$. For this reason, the derivations are not included in the paper and we focus instead on the take-away message from \eqref{eq:VIMCOOptimVarAlpha}, which is that the VIMCO-$\star$ gradient estimator enjoys the best asymptotic efficiency by design compared to the VIMCO-AM and VIMCO-GM ones when $\alpha \in (0,1)$. 

Note that we intentionally left out the case $\alpha = 0$ in \eqref{eq:VIMCOOptimVarAlpha}. When $\alpha = 0$, $\eta^{(\alpha, \star)} = 0$, so that no approximation of $\eta^{(\alpha, \star)}$ is needed and we can directly take $f_{-i}^{(\alpha, \star)}(z_{1:N},\theta,\phi;x) = 0$ in \eqref{eq:defVIMCOstar}. Besides advantageous computational considerations (which are discussed later in \Cref{subsec:discussion}), the case $\alpha = 0$ is of primary interest in the VIMCO-$\star$ gradient estimator since, as established in the next theorem, the behavior of this gradient estimator substantially departs from the earlier results we established for the VIMCO-AM and VIMCO-GM ones when $\alpha = 0$. 
\begin{thm} \label{thm:optimalVarReduction} Let $\psi \in \{\phi_1, \ldots, \phi_b \}$. Assume~\ref{hyp:inverseW}, \ref{hyp:momentW} and \ref{hyp:momentGradTwo} with $h > 12$, $h'' > 4$. Then, as $N \to \infty$, \looseness=-1
\begin{align*}
  \mathbb{V}\lr{\gradVIMCOstar[N][\psi][0]} = \frac{\vVIMCOstar[0](\theta,\phi;x)}{N^3} + o \lr{\frac{1}{N^3}}
\end{align*}
where 
\begin{multline*}
\vVIMCOstar[0] =  \mathbb{V} \left(\frac{ \w(z;x)^{2} \partial_\psi \log q_{\phi}(z|x)}{2 \PE(\w(z;x))^2}\right. \left.  - \w(z;x) \frac{\PE(\w(z;x)^{2} \partial_\psi \log q_{\phi}(z|x))}{\PE(\w(z;x))^3}     \right).
\end{multline*}
\end{thm}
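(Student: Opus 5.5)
With $\alpha=0$ and $f_{-i}^{(0,\star)}\equiv 0$, I would first rewrite the estimator in closed form. Write $s_i=\partial_\psi\log q_\phi(z_i|x)$, $w_i=\w(z_i;x)$ and $W_i=\normW[\theta][\phi][1](z_i;x)=w_i/\sum_\ell w_\ell$. Since $\psi\in\{\phi_1,\dots,\phi_b\}$, we have $\partial_\psi\log \w(z_i;x)=-s_i$. Then \eqref{eq:defVIMCOstar} reads
\[
\gradVIMCOstar[N][\psi][0]=\sum_{i=1}^N s_i\bigl(-\log(1-W_i)-W_i\bigr).
\]
Using $-\log(1-u)-u=\tfrac{u^2}{2}+R(u)$ with $0\le R(u)\le \tfrac{u^3}{3(1-u)}$, I would split the estimator as $T_N+E_N$, where
\[
T_N=\frac12\sum_i s_iW_i^2=\frac{1}{2N}\frac{A_N}{B_N^2},\qquad A_N=\frac1N\sum_i s_iw_i^2,\quad B_N=\frac1N\sum_i w_i,
\]
and $E_N=\sum_i s_iR(W_i)$.

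\textbf{Leading term.} I would apply a delta-method expansion of $A_N/B_N^2$ around $(a,b)=(\PE(s\,w^2),\PE(w))$:
\[
\frac{A_N}{B_N^2}=\frac{a}{b^2}+\frac{A_N-a}{b^2}-\frac{2a(B_N-b)}{b^3}+\mathcal R_N .
\]
The linear part equals $\frac1N\sum_i \bigl(\tfrac{s_iw_i^2}{b^2}-\tfrac{2a w_i}{b^3}\bigr)$ minus a constant. So $T_N$ has linear part $\frac{1}{N^2}\sum_i\bigl(\tfrac{s_iw_i^2}{2b^2}-\tfrac{a w_i}{b^3}\bigr)$, whose variance is exactly $\vVIMCOstar[0]/N^3$. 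It remains to show that $\mathbb V(\mathcal R_N)=o(1/N)$, so that by Cauchy--Schwarz the cross terms are $o(N^{-3})$. I would bound $\mathcal R_N$ by quadratic terms $(A_N-a)^2$, $(B_N-b)^2$ and $(A_N-a)(B_N-b)$, multiplied by powers of $B_N^{-1}$ (e.g.\ $B_N^{-4}$). I would then control these with Hölder's inequality: Marcinkiewicz--Zygmund/Rosenthal give $\PE|A_N-a|^{p}=O(N^{-p/2})$ and $\PE|B_N-b|^p=O(N^{-p/2})$, and \ref{hyp:inverseW} gives bounded negative moments of $B_N$ for $N$ large, following the technique used for \Cref{thm:VarianceReduction} and \Cref{prop:hintCV}. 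The moments needed are of $s\,w^2$ to a power slightly above $4$. Via Hölder, $h>12$ on $w$ together with the gradient moment condition should suffice; this is where the thresholds come from.

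\textbf{Remainder.} I expect $E_N$ to be the main obstacle, because $R(W_i)$ blows up as $W_i\to1$. Away from that event, $|E_N|\lesssim \sum_i|s_i|W_i^3 = N^{-2}\bigl(\frac1N\sum_i|s_i|w_i^3\bigr)/B_N^3$, whose second moment is $O(N^{-4})=o(N^{-3})$ given enough moments. I would handle this by splitting on the event $\{\max_i W_i\le 1/2\}$, on which $R(u)\le \tfrac23u^3$. On the complement, I would use $-\log(1-W_i)=\log\bigl(\sum_\ell w_\ell\bigr)-\log\bigl(\sum_{\ell\ne i}w_\ell\bigr)$, which gives integrable bounds through the inverse moment \ref{hyp:inverseW} applied to leave-one-out sums. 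Since $\PP(\max_iW_i>1/2)\le \PP(\max_i w_i> N B_N/2)$ decays polynomially fast under \ref{hyp:momentW} with $h>12$, Hölder makes this contribution $o(N^{-3})$. Combining $\mathbb V(T_N)$ with $\PE(E_N^2)=o(N^{-3})$ and Cauchy--Schwarz for the cross term then yields the stated asymptotic variance.
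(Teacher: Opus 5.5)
Your proposal is correct. The decomposition $T_N+E_N$ is exactly the paper's (there $E_N$ is called $R^{(0)}_{2,N}$). Your delta-method treatment of $T_N=\frac{1}{2N}A_N/B_N^2$ is what the paper obtains in one line by citing the ratio-variance result \eqref{eq:prop:handleRNsnrVarAnalysis} with $\ell=2$, $X=s\,w^2$, $Y=w$. The final Cauchy--Schwarz step for the cross term is also the same.

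One correction to your moment count. $\PE|s\,w^2|^{4+\epsilon}<\infty$ does \emph{not} follow from $h>12$, $h'>4$: Hölder would need $4/h'+8/h\le 1$, which fails. You do not need it, though. Write
\[
\mathcal R_N=-(A_N-a)(B_N-b)\frac{B_N+b}{B_N^2b^2}+a(B_N-b)^2\frac{2B_N+b}{B_N^2b^3}
\]
and put the large Hölder exponents on the $B_N$-factors. Then $\PE|s\,w^2|^{t}<\infty$ for some $t$ slightly above $2h/(h-2)$ suffices; the cited result needs only $t>2$. Such a $t$ is available because $(h-6)(h'-2)>12$. The thresholds $h>12$, $h'>4$ really enter through $\PE(s^2w^6)\le \PE(s^4)^{1/2}\PE(w^{12})^{1/2}$ in the remainder, not through the leading term.

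Where you genuinely differ is $E_N$, and here the paper's route is shorter. The paper never splits on $\{\max_i W_i\le 1/2\}$. Since $1-W_i=\sum_{j\ne i}w_j/\sum_j w_j$, your own inequality already gives, on the whole space,
\[
R(W_i)\le \frac{W_i^3}{3(1-W_i)}=\frac{w_i^3}{3\,(\sum_j w_j)^2\sum_{j\ne i}w_j}\le \frac{w_i^3}{3\,(\sum_{j\ne i}w_j)^3}.
\]
Then $(\sum_i u_i)^2\le N\sum_i u_i^2$, the i.i.d.\ structure, and independence of $z_i$ from the leave-one-out sum give
\[
\PE(E_N^2)\le \frac{N^2}{9(N-1)^6}\,\PE(s^2w^6)\,\PE\bigl(\overline{Y}_{N-1}^{-6}\bigr)=O(N^{-4}).
\]
Here $\overline{Y}_{N-1}$ is the mean of $N-1$ weights, whose negative moments are bounded under \ref{hyp:inverseW}. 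This is \Cref{lem:ControlCNAlpha} with $(\ell,\eta)=(2,0)$; the paper uses the slightly weaker factor $1+\frac{1}{1-W_i}$ from \Cref{lem:LogTaylorBound}.

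So the blow-up as $W_i\to 1$ is not really the main obstacle. The leave-one-out identity you reserve for the bad event neutralises it everywhere, with no tail estimates.

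Your split can be completed, but the Hölder step on the bad event is tight under these hypotheses. If you only use $\|E_N\|_{L^{2p}}=O(1)$ (available for $2p$ up to about $3$), you get $\PP(\max_iW_i>1/2)^{1/q}$ with $q\gtrsim 3$. A Markov bound $O(N^{-h/2})$ on the lower deviation of $B_N$ then yields only about $N^{-2}$. You would have to switch to a Chernoff-type lower-tail bound for averages of nonnegative variables. Alternatively, first bound the single offending term by $|s_i|\,w_i/\sum_{j\ne i}w_j$ to gain $N^{-2}$ before applying Hölder; this reaches $o(N^{-3})$ again precisely because $(h-6)(h'-2)>12$. The paper's global bound avoids all of this at no cost.
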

The proof of \Cref{thm:optimalVarReduction} is deferred to \Cref{app:subsec:thm:optimalVarReduction}. \Cref{thm:optimalVarReduction} shows that the variance of our novel VIMCO-$\star$ gradient estimator converges to $0$ at a $1/N^3$ convergence rate when $\alpha = 0$ and $\psi \in \{ \phi_1, \ldots, \phi_b \}$. For all $\psi \in \{\phi_1, \ldots, \phi_b \}$, \Cref{thm:REINFORCEGradientExpectation} and \Cref{thm:optimalVarReduction} then yield: as $N \to \infty$, \looseness=-1
\begin{align}
& \mathrm{SNR}[\gradVIMCOstar[N][\psi][0]] = \sqrt{N} \dfrac{|\frac{1}{2} \partial_\psi [\gammaA[0](\theta, \phi;x)^2] + o \lr{1}|}{ \sqrt{\vVIMCOstar[0](\theta,\phi;x)} + o(1)}. \label{eq:VIMCOstarSNRzero}
\end{align}
The VIMCO-$\star$ gradient estimator benefits from increasing $N$ when $\alpha = 0$ and $\psi \in \{ \phi_1, \ldots, \phi_b \}$: its SNR rate scales as $\sqrt{N}$ thanks to its variance decaying at a fast $1/N^3$ rate and compensating for the fact that its expectation decays at a $1/N$ rate.

The SNR rate \eqref{eq:VIMCOstarSNRzero} is in stark contrast with the $1/\sqrt{N}$ SNR rates for the VIMCO-AM and VIMCO-GM gradient estimators derived in \eqref{eq:SNRrateAM} and \eqref{eq:SNRrateGM} respectively when $\alpha = 0$. When $\alpha = 0$ and $\psi \in \{\phi_1, \ldots, \phi_b\}$, the VIMCO-$\star$ gradient estimator points in expectation in the direction that minimizes $\mathbb{V} \lr{ {p_\theta(z|x)}/{q_\phi(z|x)}}$ w.r.t $\phi$ and resolves the counterintuitive behavior of the state-of-the-art estimators proposed in \cite{mnih2016variational} (which are obtained by taking $\alpha = 0$ in the VIMCO-AM and VIMCO-GM gradient estimators). 

Our asymptotic analyses suggest that the VIMCO-$\star$ gradient estimator is the REINFORCE gradient estimator of the VR-IWAE bound which, unlike the VIMCO-AM and VIMCO-GM gradient estimators, fully leverages the importance weighted VI framework. We can reinforce this conclusion by considering the setting where we are at optimality, that is, $q_\phi(\cdot|x)=p_\theta(\cdot|x)$.
Letting $f_{-i}^{(\alpha, \star)}(z_{1:N},\theta,\phi;x)$ in \eqref{eq:defVIMCOstar} be as in \eqref{eq:fOptimOne} (or using its control variates-based alternative), it holds that: for all $N \in \mathbb{N}^\star$, all $\alpha \in [0,1)$ and all $\psi \in \{ \phi_1, \ldots, \phi_b \}$, \looseness=-1
\begin{align}
  & \mathbb{V}\lr{\gradVIMCOam} = \mathbb{V}\lr{\gradVIMCOgm} = \frac{1}{N} \mathbb{V}(\partial_\psi \log q_\phi(z|x)) \label{eq:OptimalityAMGM} \\
  & \mathbb{V}\lr{\gradVIMCOstar} = \frac{1}{N}\lrb{1+ \frac{N}{1-\alpha} \log \lr{ 1 - \frac{1-\alpha}{N}}}^2  \mathbb{V}(\partial_\psi \log q_\phi(z|x)). \label{eq:OptimalityStar}
\end{align}
The non-asymptotic results \eqref{eq:OptimalityAMGM} and \eqref{eq:OptimalityStar} indicate that the variance of the VIMCO-$\star$ gradient estimator is significantly lower than the variance of the VIMCO-AM and VIMCO-GM ones at optimality for all $N >2$, all $\alpha \in [0,1)$ and all $\psi \in \{ \phi_1, \ldots, \phi_b \}$. As a result, we anticipate the VIMCO-$\star$ gradient estimator to outperform the VIMCO-AM and VIMCO-GM ones as the variational distribution starts to closely match the posterior distribution and/or $N$ increases. \looseness=-1

To the best of our knowledge, the VIMCO-$\star$ estimator is the first REINFORCE gradient estimator within the importance weighted VI framework whose use is theoretically supported accross all values of $\alpha \in [0,1)$ not only in terms of asymptotic efficiency as $N \to \infty$ but also in terms of non-asymptotic efficiency at optimality. 

\subsection{Practical considerations}
\label{subsec:discussion}

The theoretical results we established for the VIMCO gradient estimator of the VR-IWAE bound \eqref{eq:VIMCOgenVRIWAE} permit to identify key quantities of interest whose behavior as a function of $(N,\alpha)$ determines how this gradient estimator learns the parameter $\phi$. Our theoretical results can be paired up with the ones obtained for the learning of $\theta$ in \cite{daudel2024learningimportanceweightedvariational} to obtain practical guidance for empirically selecting $(N, \alpha, f_{-i}^{(\alpha)})$ in \eqref{eq:VIMCOgenVRIWAE}.

More specifically, \Cref{thm:REINFORCEGradientExpectation} and Theorem 1 from \cite{daudel2024learningimportanceweightedvariational} (see \eqref{eq:partialVRiWAErep}) suggest increasing $N$ beyond $N = 1$ (ELBO) and taking $\alpha = 0$ in order to learn $(\theta, \phi)$. Indeed, when $\alpha = 0$, the gradient estimator \eqref{eq:VIMCOgenVRIWAE} points in expectation at a quick $1/N$ rate in the direction which maximizes the marginal log likelihood w.r.t. $\theta$, while simultaneously optimizing $q_\phi(\cdot|x)$ toward the optimal importance sampling density \citep{Owenbook2013}. Doing so also tightens the VR-IWAE bound by \eqref{eq:tighten} and \eqref{eq:tightenTwo}. 

\Cref{thm:optimalVarReduction,thm:VarianceGM,thm:VarianceGM,prop:hintCV,thm:VarianceReduction} and Theorem 2 from \cite{daudel2024learningimportanceweightedvariational} demonstrate that increasing $N$ yields a variance reduction phenomenon for the gradient estimator \eqref{eq:VIMCOgenVRIWAE} in many instances, which further motivates increasing $N$ in practice so long as this variance reduction occurs sufficiently rapidly to avoid vanishing SNRs. These results also indicate that the case $\alpha = 0$ is not guaranteed to achieve the lowest asymptotic variance within the range $\alpha \in [0,1)$ for the gradient estimator \eqref{eq:VIMCOgenVRIWAE}. This can be attributed to complex dependencies in $\alpha$ appearing in quantities such as $\vVIMCOstar$ \cite[when $\psi \in \{ \theta_1, \ldots, \theta_a \}$ the asymptotic variance of the gradient estimator \eqref{eq:VIMCOgenVRIWAE} scales as $\psi\text{-}V^{(\alpha)}(\theta, \phi;x) /N$ where $\psi\text{-}V^{(\alpha)}(\theta, \phi;x)$ depends nontrivially on $\alpha$, see][]{daudel2024learningimportanceweightedvariational}. In addition, the non-asymptotic variance of the VIMCO-$\star$ gradient estimator decreases monotonically as $\alpha$ increases at optimality when $\psi \in \{\phi_1, \ldots, \phi_b \}$ by \eqref{eq:OptimalityStar}, which illustrates how setting $\alpha = 0$ does not generally lead to the lowest variance for the gradient estimator \eqref{eq:VIMCOgenVRIWAE}. 

At this stage, we see that the choice of $(N,\alpha, f_{-i}^{(\alpha)})$ controls two distinct mechanisms. Firstly, increasing $N$ reduces the bias of the VR-IWAE bound relative to the marginal log-likelihood and the variance of its gradient estimator \eqref{eq:VIMCOgenVRIWAE}. While these gains can be accompanied by a decay of the SNR as $N$ increases, this effect is not inherent to $N$ itself (nor $\alpha$): selecting $f_{-i}^{(\alpha)}$ appropriately leads to SNRs that scale as $\sqrt{N}$. Secondly, the parameter $\alpha$ governs a bias-variance trade-off. Although small values of $\alpha$, and in particular $\alpha = 0$, are preferable from a bias standpoint, they may lead to a large variance in the early stages of the optimization procedure when the variational approximation is inaccurate. \looseness=-1

A natural strategy is therefore to (i) select $f_{-i}^{(\alpha)}$ in the gradient estimator \eqref{eq:VIMCOgenVRIWAE} such that its SNR scales at a fast $\sqrt{N}$ rate and (ii) initialize the resulting algorithm to account for a potentially high mismatch between the posterior density and its variational approximation. We elaborate on the aspects (i) and (ii) below.

\subsubsection{Selecting $f_{-i}^{(\alpha)}$ in the VIMCO gradient estimator \eqref{eq:VIMCOgenVRIWAE}}
 The VIMCO-$\star$ gradient estimator \eqref{eq:defVIMCOstar} is the preferred choice over the VIMCO-AM and VIMCO-GM ones: for all $\alpha \in [0,1)$, this estimator is designed to have the best asymptotic variance within the framework of \eqref{eq:VIMCOgenVRIWAE}, it enjoys an asymptotic SNR rate that scales as $\sqrt{N}$ and its non-asymptotic variance decays quicker than $1/N$ with $N$ at optimality. We now adress the issue of the computational cost to evaluate this estimator. One core advantage of the VIMCO-$\star$ gradient estimator lies in its ease of implementation and computational efficiency when $\alpha = 0$ as \eqref{eq:defVIMCOstar} then reads
\begin{multline*}
  \gradVIMCOstar[N][\psi][0] = \sum_{j = 1}^N \left[ \normW[\theta][\phi][0](z_j;x) \partial_\psi \log  \w(z_j;x) \right. \\ \left. - \log (1 - \normW[\theta][\phi][0](z_j;x)) \partial_\psi \log q_{\phi}(z_j|x) \right].
\end{multline*}
A computational bottleneck may arise when $\alpha \in (0,1)$ since estimating $\eta^{(\alpha, \star)}$ in \eqref{eq:defEtaStar} in a cheap yet accurate manner becomes central to the VIMCO-$\star$ approach in this case. The two options we suggested, namely \eqref{eq:fOptimOne} and a control variates alternative, both use a $z$-samples of size $N$. A direct way to bring down the computational cost is to use a subsample of size $N_0< N$. While this is not the focus of our paper, future work could seek to lower the computational budget used to estimate $\eta^{(\alpha, \star)}$ while also retaining the desirable theoretical properties of the VIMCO-$\star$ gradient estimator. We also note that another possiblity to reduce the computational cost when $\alpha \in (0,1)$ is to use the VIMCO-AM or the VIMCO-GM gradient estimators. These estimators are both cheap to compute, although their asymptotic variance (and non-asymptotic variance at optimality) does not outperform the VIMCO-$\star$ one. \looseness=-1

\subsubsection{Annealing schedule}\label{subsec:Annealing_schedule_for_alpha}

A significant mismatch between the posterior and the variational approximation often exists in the early stages of optimization. Annealing strategies help mitigate the noise in VIMCO gradient estimators; we present two options that may be used independently or in combination. \looseness=-1

\begin{enumerate}[wide=0pt, labelindent=\parindent]
  \item \textbf{Annealing schedule for $\alpha$.} Our analyses identify $\alpha$ as a critical lever for managing the bias-variance trade-off in importance-weighted VI. A high value of $\alpha$ helps stabilize the weights $\{\w(z_j;x)^{1-\alpha},j=1...N\}$ and increase the SNR of the VIMCO gradient estimator, whereas a small value of $\alpha$ achieves a tighter lower bound. To balance these competing demands while always targeting the true posterior density, we can use a dynamic annealing schedule for selecting $\alpha$, where $\alpha$ plays the role of tempering parameter as in the annealed importance sampling literature \citep{neal2001annealed}. One way to do so is to initialize $\alpha$ near $1$ and to adaptively reduce it based on the effective sample size \looseness=-1
\begin{equation}
\text{ESS}_{1-\alpha} = \frac{\lr{\sum_{j=1}^N \w(z_j;x)^{1-\alpha}}^2}{\sum_{j=1}^N \w(z_j;x)^{2(1-\alpha)}}=\frac{1}{\sum_{j=1}^N\normW(z_j;x)^2},
\end{equation}
as $\text{ESS}_{1-\alpha}$ permits to measure the efficiency of the weights $\{\w(z_j;x)^{1-\alpha},j=1...N\}$. In practice, we initialize $\alpha$ at either $0.99$ or $0.9$ in the VIMCO-$\star$ gradient estimator, and progressively reduce it whenever $\text{ESS}_{1-\alpha}$ exceeds a prescribed threshold, such as $0.5N$. This annealing approach is expected to accelerate training and to achieve the tightest VR-IWAE bound; reaching $\alpha=0$ by the end of the procedure indicates that the variational approximation has become sufficiently accurate.

\item \textbf{Annealing for the likelihood.} Apart from the novel annealing schedule for $\alpha$ proposed above, which is specific to our framework, one can employ more traditional likelihood annealing \citep{NF, SBN}. Annealing the likelihood allows the VIMCO-$\star$ gradient estimator to reach its asymptotic regime with fewer samples $N$ for all $\alpha \in [0,1)$, lowering the computational barrier to entry. This is particularly beneficial when considering the VIMCO-$\star$ gradient estimator with $\alpha = 0$ in high-dimensional (and/or multimodal) settings. While the VIMCO-$\star$ gradient estimator with $\alpha = 0$ is cheap to compute and yields a gradient direction that both maximizes the marginal log likelihood with respect to $\theta$ and targets the optimal importance sampling distribution $q_\phi(\cdot|x)$, it may require a large number of samples $N$ to achieve numerical stability. Pairing the VIMCO-$\star$ gradient estimator for $\alpha = 0$ with an annealing schedule, such as the one proposed in \cite{NF}, offers a practical solution to its high-variance updates in the early stages of optimization. 
\end{enumerate}

\section{Numerical experiments}
\label{sec:NumEx}

In this section, we provide empirical evidence supporting our theoretical claims. Our code is available at \url{https://github.com/zcrabbit/iwvi-no-reparameterization-trick-code}.
\subsection{Gaussian example}

\begin{figure}[t]
\begin{center}
    \includegraphics[width=\textwidth]{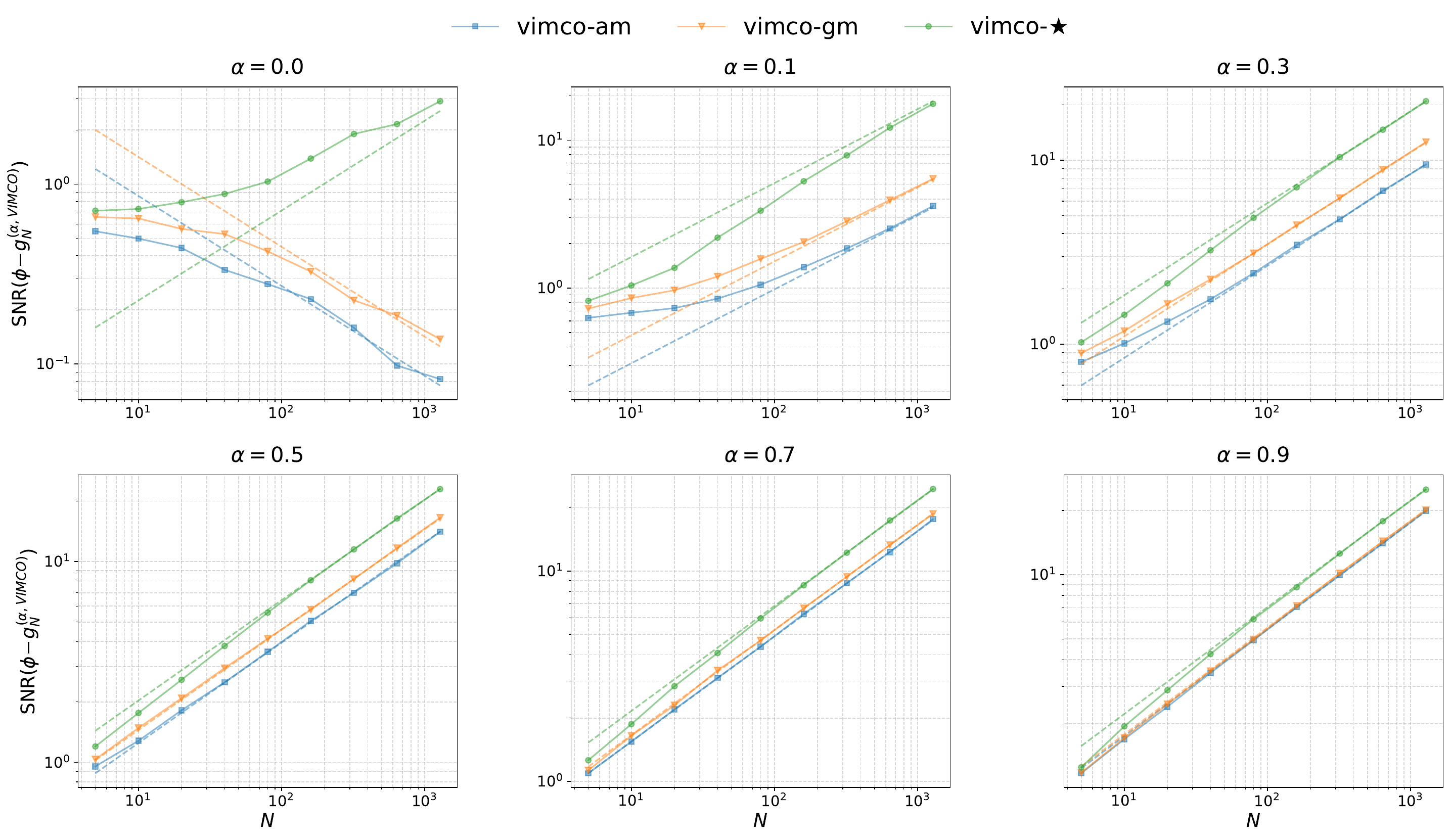}
    \caption{SNR as a function of $N$ for different values of $\alpha$ when $\theta=0, \;\phi=1$ in the Gaussian example. The solid lines correspond to the SNR estimates for different VIMCO gradient estimators, while the dashed lines correspond to the theoretical predictions.}
    \label{fig:Gaussian_distant_case}
\end{center}

\end{figure}

\label{subsec:ex}
Let $\theta, \phi \in \rset$. Set
$p_\theta(z|x) = \mathcal{N}(z;\theta, 1)$ and
$q_{\phi}(z|x) = \mathcal{N}(z; \phi, 1)$. Here, $\theta$ is fixed and the goal is to learn the variational parameter $\phi$. This Gaussian example satisfies the assumptions of \Cref{thm:REINFORCEGradientExpectation} as well as \Cref{thm:VarianceReduction,thm:VarianceGM,thm:optimalVarReduction,prop:hintCV}. Moreover, all quantities appearing in these results admit closed-form expressions, allowing an exact characterization of the gradient estimators and their variances. 
Detailed derivations are provided in \Cref{app:subsub:ex:Gaussian}.

To assess the validity of our results, we set $\theta = 0$ and consider two representative choices for $\phi$:
(i) $\phi = 1$, that is, $q_\phi(\cdot|x)$ is relatively far from $p_\theta(\cdot|x)$; and
(ii) $\phi = 0.1$, that is, $q_\phi(\cdot|x)$ is close to $p_\theta(\cdot|x)$. For both settings, we vary $\alpha \in \{0.0, 0.1, 0.3, 0.5, 0.7, 0.9\}$ and $N \in \{5 \cdot 2^j : j = 0,1,\ldots,8\}$. SNR estimates are computed using 1000 Monte Carlo samples, and the reported results are averaged over 10 independent replications. \looseness=-1

\begin{figure}[t]
\begin{center}
    \includegraphics[width=\textwidth]{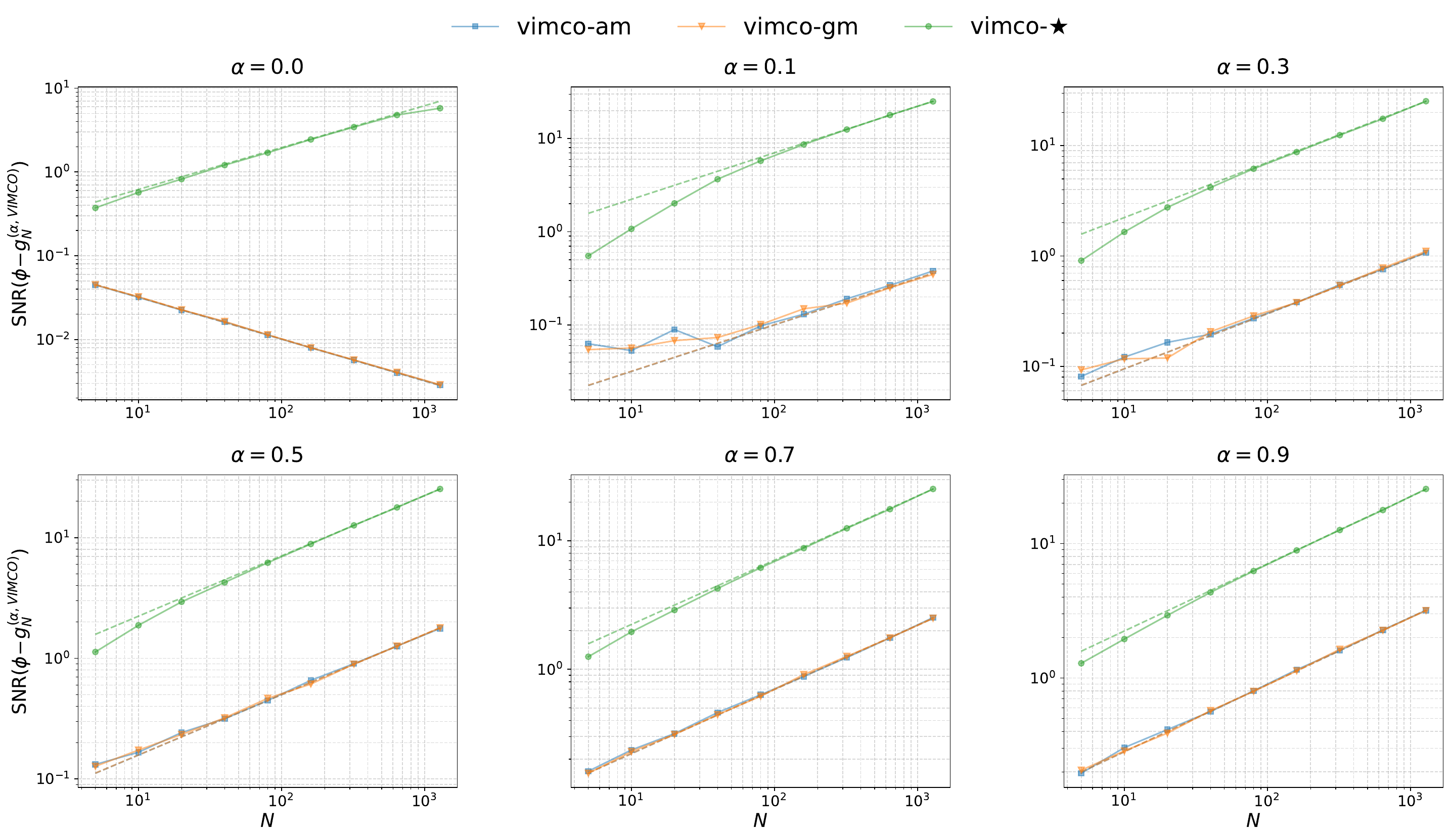}
    \caption{SNR as a function of $N$ for different values of $\alpha$ when $\theta=0, \;\phi=0.1$ in the Gaussian example. The solid lines correspond to the SNR estimates for different VIMCO gradient estimators, while the dashed lines correspond to the theoretical predictions.}
    \label{fig:Gaussian_close_case}
\end{center}

\end{figure}

\begin{figure}[t]
\begin{center}
    \includegraphics[width=\textwidth]{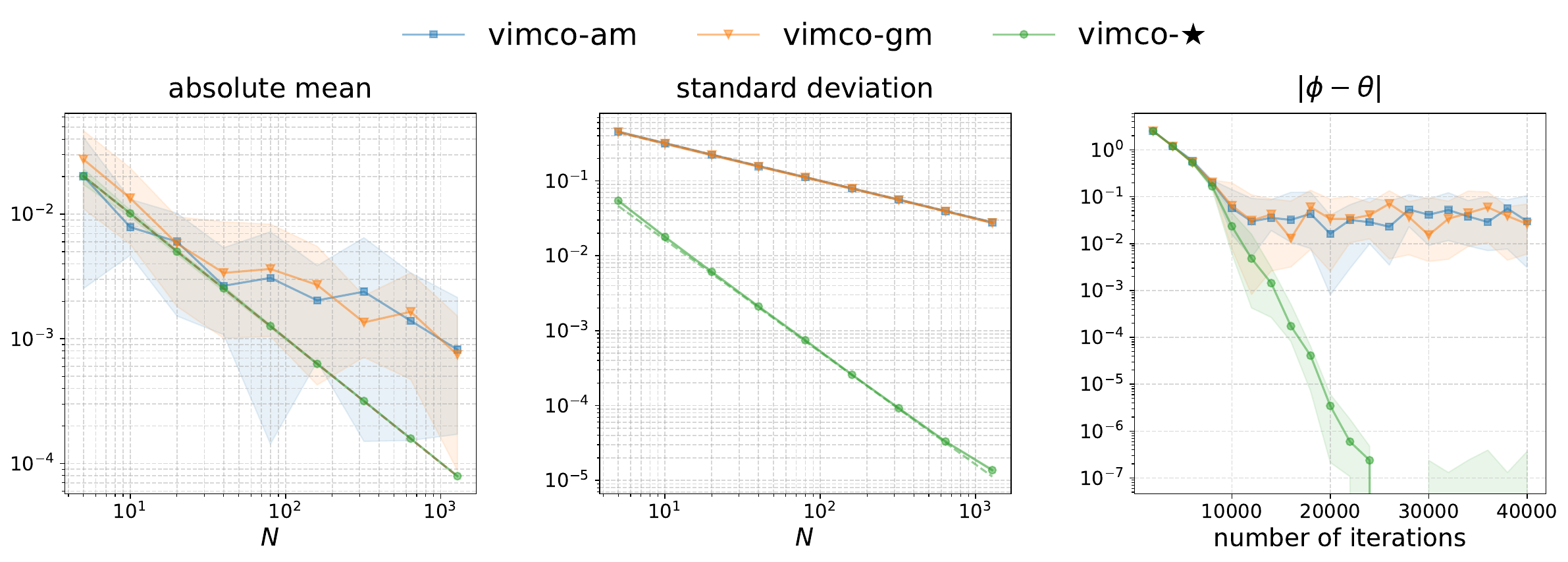}
    \caption{Performance of VIMCO gradient estimators when $\alpha=0$ in the Gaussian example. The left and middle plots: the absolute mean and standard deviation as functions of $N$. The right plot: the difference between $\phi$ and $\theta$ in absolute value as a function of the number of iterations.
    The dashed lines correspond to the theoretical predictions.
    The shaded area represents the 5\% to 95\% percentile range across 10 independent runs.}
    \label{fig:Gaussian_alpha_0}
\end{center}

\end{figure}

Figure~\ref{fig:Gaussian_distant_case} presents the results for the case $\phi = 1$. In this relatively distant regime, all gradient estimators closely follow their predicted asymptotic behavior as $N$ increases.
More specifically, when $\alpha > 0$, all three gradient estimators exhibit an $\sqrt{N}$ scaling of the SNR. In contrast, when $\alpha = 0$, only VIMCO-$\star$ retains the $\sqrt{N}$ SNR scaling, whereas the SNR of VIMCO-AM and VIMCO-GM decays at the slow rate of $1/\sqrt{N}$.
In addition, larger values of $\alpha$ lead to faster convergence to the asymptotic regime, in the sense that smaller values of $N$ are sufficient to achieve the predicted SNR scaling. Across all values of $\alpha$, VIMCO-$\star$ consistently attains the highest SNR among the three estimators. VIMCO-GM generally outperforms VIMCO-AM in terms of SNR, although this advantage diminishes as $\alpha$ increases. \looseness=-1

Figure~\ref{fig:Gaussian_close_case} displays the results for the case $\phi = 0.1$. As in the distant regime, all gradient estimators closely follow their theoretical asymptotic predictions for sufficiently large $N$. Compared with the distant setting, the performance gap between VIMCO-AM and VIMCO-GM is substantially reduced, whereas VIMCO-$\star$ continues to outperform the other two estimators by a wide margin across all values of $\alpha$. 
The case $\alpha = 0$ is particularly illuminating in the near-optimal regime $\phi = 0.1$. Indeed, the magnitude of the gradient is small in this regime, which exacerbates the instability of the VIMCO-AM and VIMCO-GM gradient estimates due to their comparatively high variance. This instability presents a practical challenge for evaluating the SNR when $\alpha = 0$, addressed in Figure~\ref{fig:Gaussian_close_case} by computing the SNR using the mean gradient estimates obtained from VIMCO-$\star$ when $\alpha = 0$ (since all the estimators considered are unbiased and thus share an identical expectation). \looseness=-1

Figure~\ref{fig:Gaussian_alpha_0} further examines the behavior of the gradient estimators in the case $\alpha = 0$.
We first consider the near-optimal regime where $\phi=0.1$.
As discussed above, VIMCO-AM and VIMCO-GM fail to yield reliable gradient estimates in this regime, whereas VIMCO-$\star$ produces stable and accurate estimates that closely match the theoretical prediction of a $1/N$ convergence rate. As for the standard deviations, VIMCO-AM and VIMCO-GM both exhibit a $1/\sqrt{N}$ rate, while VIMCO-$\star$ achieves the faster rate of $1/N^{3/2}$. Consequently, relative to VIMCO-$\star$, the gradients obtained from VIMCO-AM and VIMCO-GM are substantially noisier, limiting their effectiveness as optimization directions, particularly when the variational approximation is close to the target distribution.
The right panel of Figure~\ref{fig:Gaussian_alpha_0} reports the absolute difference $|\phi - \theta|$ along the optimization trajectories of the three methods. Although all methods behave similarly in the initial phase, the trajectories driven by VIMCO-AM and VIMCO-GM quickly plateau upon entering the near-optimal region. In contrast, optimization based on VIMCO-$\star$ continues to make progress until reaching machine precision. \looseness=-1

\subsection{Variational Bayesian inference for State-Space models}
 
Letting $\{x_t\}_{t\geq 1}$ be a time series of observations, state-space models (SSMs) use a sequence of latent variables $\{y_t\}_{t\geq 1}$ to explain the dynamics of the observations by assuming that: for all $t \geq 1$, $x_t|y_t\sim g_t(x_t|y_t,z)$, where $y_t$ follow a Markov chain $y_t|y_{t-1}\sim f_t(y_t|y_{t-1},z)$, $t\geq 2$ with initial state $y_1\sim\mu_z$ and $z$ denotes the model parameters. Given a training data set $x=x_{1:T}$ of size $T$ and denoting $y=y_{1:T}$, the likelihood function   
\begin{align*}
& p(x|z)=\int p(x|y,z)p(y|z) \rmd y \\
& \mbox{where} \quad p(x|y,z) = \prod_{t=1}^Tg_t(x_t|y_t,z),\;\;p(y|z)=\mu_z(y_1)\prod_{t=2}^Tf_t(y_t|y_{t-1},z),
\end{align*}
is intractable except for a few trivial cases, making it challenging to perform Bayesian inference about $z$. Nevertheless, this issue can be circumvented by constructing an unbiased particle filter estimator $\widehat{p}_\mathrm{PF}(x|z)$ of the intractable likelihood $p(x|z)$ \citep{del2004feynman}. A VI approach can then be employed, in which the target density is an approximation of the posterior density $p(z|x)$ where the likelihood has been replaced by its unbiased estimator $\widehat{p}_\mathrm{PF}(x|z)$ \citep{tran2017variational}. As the particle filter estimator $\widehat{p}_\mathrm{PF}(x|z)$ is often not differentiable with respect to $z$ \citep{malik2011particle}, this prevents the use of reparameterized gradient estimators in VI methods. We consider the stochastic volatility model, an important example within the class of state space models, and our goal is to approximate the parameter posterior $p(z|x)$ (rather than the latent variable posterior $p_\theta(z|x)$) using importance weighted VI
\begin{align*}
    x_t|y_t&\sim \mathcal{N}(0,\exp(y_t))\\
    y_t|y_{t-1}&\sim\mathcal{N}\lr{\beta_0+\beta_1(y_{t-1}-\beta_0),\sigma^2},\;\;t\geq 2,\;\;y_1\sim\mathcal{N}\lr{\beta_0,\frac{\sigma^2}{1-\beta_1^2}},
\end{align*}
with $\beta_0\in\mathbb{R}$, $\beta_1\in(-1,1)$ and $\sigma^2>0$. More precisely, following \cite{tran2017variational}, we use the daily exchange rates for the Australian Dollar/U.S. Dollar from 5/1/2010 to 31/12/2013 (further details, including the prior and the particle filter algorithm, can be found in their paper). The target density is the posterior density $p(z|x)$ in which the likelihood has been replaced by its unbiased estimator $\widehat{p}_\mathrm{PF}(x|z)$ and the transformed parameters are $z=(\beta_0,\log((1-\beta_1)/(1+\beta_1)),\log\sigma^2)\in\mathbb{R}^3$. The variational approximation $q_\phi(z|x)$ is a Gaussian $\mathcal{N}(x;\mu,\Sigma)$ where $\phi=(\mu,\Sigma)$ belongs to the product manifold $\mathbb{R}^d\otimes\mathbb{S}^d_{++}$ and $\mathbb{S}^d_{++}$ is the manifold of positive definite matrices. We then apply the manifold VI algorithm of \cite{tran2021variational} adapted to the VR-IWAE bound framework to optimize $\phi$ (which compared to traditional SGA amounts to first projecting VIMCO gradient estimates w.r.t. $\Sigma$ on the tangent spaces to get Riemannian gradients, before updating $\phi$). 
We report the data analysis results for the original parameters $\beta_0$, $\beta_1$ and $\sigma^2$. 

\Cref{fig:sv_example_ELBO_vs_VRIWAE} shows the marginal posterior estimates obtained by using the VIMCO-$\star$ gradient estimator with $N=500$ and the annealing strategy for $\alpha$ described in \Cref{subsec:Annealing_schedule_for_alpha}. It compares those marginals to the ones obtained by using the standard ELBO version of the manifold VI algorithm from \citep{tran2017variational} and the pseudo-marginal MCMC (PMCMC) method of \cite{andrieu2010particle}. As PMCMC can sample from the exact posterior even when the likelihood function is replaced by a non-negative and unbiased estimator, it can be seens as the ground truth. Observe then that the ELBO approach substantially underestimates the posterior variance, while our VR-IWAE bound approach corrects this underestimation and produces estimates that align more closely with the PMCMC benchmark. \looseness=-1
\begin{figure}[h]
    \centering
    \includegraphics[width = .8\textwidth]{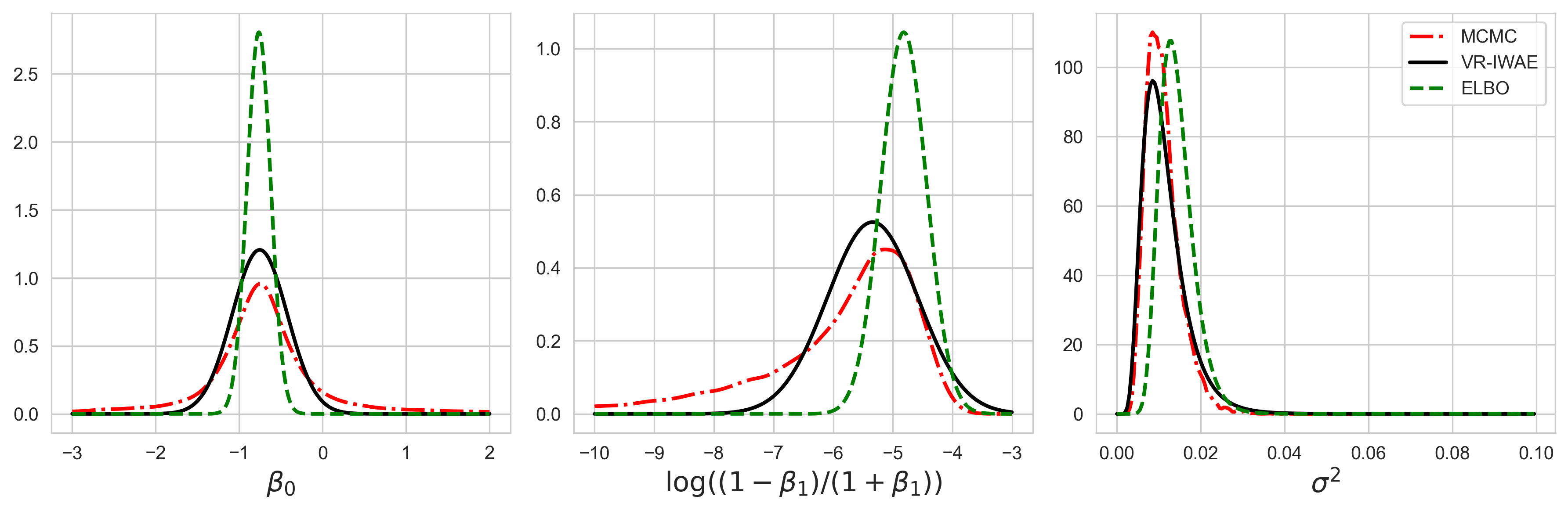}
    \caption{Final marginal posterior estimates in the Variational Bayesian SSMs example.}
    \label{fig:sv_example_ELBO_vs_VRIWAE}
\end{figure}

Additional empirical results deferred to \Cref{app:VBISSM} show that the SNR of the VIMCO-AM, VIMCO-GM and VIMCO-$\star$ gradient estimators match the theoretical behavior predicted in \Cref{subsec:gradientEstimREINF} as $N$ increases for various values of $\alpha$ and $\phi$, which also translates into a faster convergence of the algorithms that use the VIMCO-$\star$ gradient estimator over the ones based on the VIMCO-AM and VIMCO-GM gradient estimators.

\subsection{Variational Bayesian Phylogenetic Inference}

Let $Y=\{Y_1,Y_2,\ldots,Y_m\}\in \Omega^{n\times m}$ be an alignment of molecular sequences from $n$ taxa and $m$ sites, where $\Omega$ denotes the set of possible nucleotide or amino-acid characters.
A phylogenetic model specifies a bifurcating tree topology $\tau$, which encodes the ancestral relationships among the taxa, together with non-negative branch lengths $q$ that quantify the amount of evolutionary change along the edges. Character evolution along the tree is governed by a continuous-time Markov substitution process and for a given tree $(\tau, q)$, the likelihood for a single site $p(Y_i|\tau,q)$ is obtained by marginalizing over the unobserved character states at internal nodes.
The full likelihood $p(Y|\tau,q)=\prod_{i=1}^mp(Y_i|\tau,q)$ factorizes across sites and can be evaluated in linear time in the number of taxa using the pruning algorithm \citep{Felsenstein81}. Given a proper prior $p(\tau, q)$, Bayesian phylogenetics then amount to estimating the posterior distribution
\begin{align*}
p(\tau, q|Y)  \propto p(Y|\tau,q)p(\tau,q).
\end{align*} 
To that end, variational Bayesian phylogenetic inference (VBPI) posits a variational family for the joint posterior $p(\tau,q|Y)$ using the product $Q_{\phi}(\tau)Q_{\psi}(q|\tau)$, where $Q_{\phi}(\tau)$ is a variational distribution over tree topologies and $Q_{\psi}(q|\tau)$ a conditional distribution over branch lengths \citep{VBPI, VBPI-JMLR, Zhang2020ImprovedVB}. Since the tree topology is a discrete latent variable, VBPI methods cannot entirely rely on reparameterized gradients. For this reason, existing VBPI methods such as \cite{VBPI, VBPI-JMLR} have employed REINFORCE gradient estimators of the IWAE bound based on the per-sample control variates techniques of \cite{mnih2016variational}. We will thus assess the empirical performance of our novel gradient estimator VIMCO-$\star$ against the established VBPI benchmarks, that is, against the VIMCO-AM and VIMCO-GM gradient estimators when $\alpha = 0$. 

Following the approaches of \cite{VBPI,VBPI-JMLR}, and notably appealing to an annealing strategy for the likelihood as described in \Cref{subsec:Annealing_schedule_for_alpha} (see \Cref{app:VBPI} for further implementation details), we conduct experiments on DS1, which is commonly used to evaluate Bayesian phylogenetic methods \citep{Lakner08, Hhna2012-pm,Larget2013-et, Whidden2015-eq, VBPI, xie2023artree, koptagel2022vaiphy, mimori2023geophy}. 
We run 10 single-chain MrBayes~\citep{ronquist2012mrbayes} for one billion iterations, thinning every 1000 iterations and discarding the first $25\%$ as burn-in. 
The resulting samples serve as our ground truth reference for evaluating posterior estimates. The topology parameters $\phi$ are learnt using either the VIMCO-$\star$, VIMCO-AM and VIMCO-GM gradient estimators with $N = 10$, and we use the reparameterization trick to learn the branch length parameters $\psi$. The results are collected after 400{,}000 parameter updates. \looseness=-1
\begin{figure}[t]
\begin{center}
\includegraphics[width=0.45\textwidth]{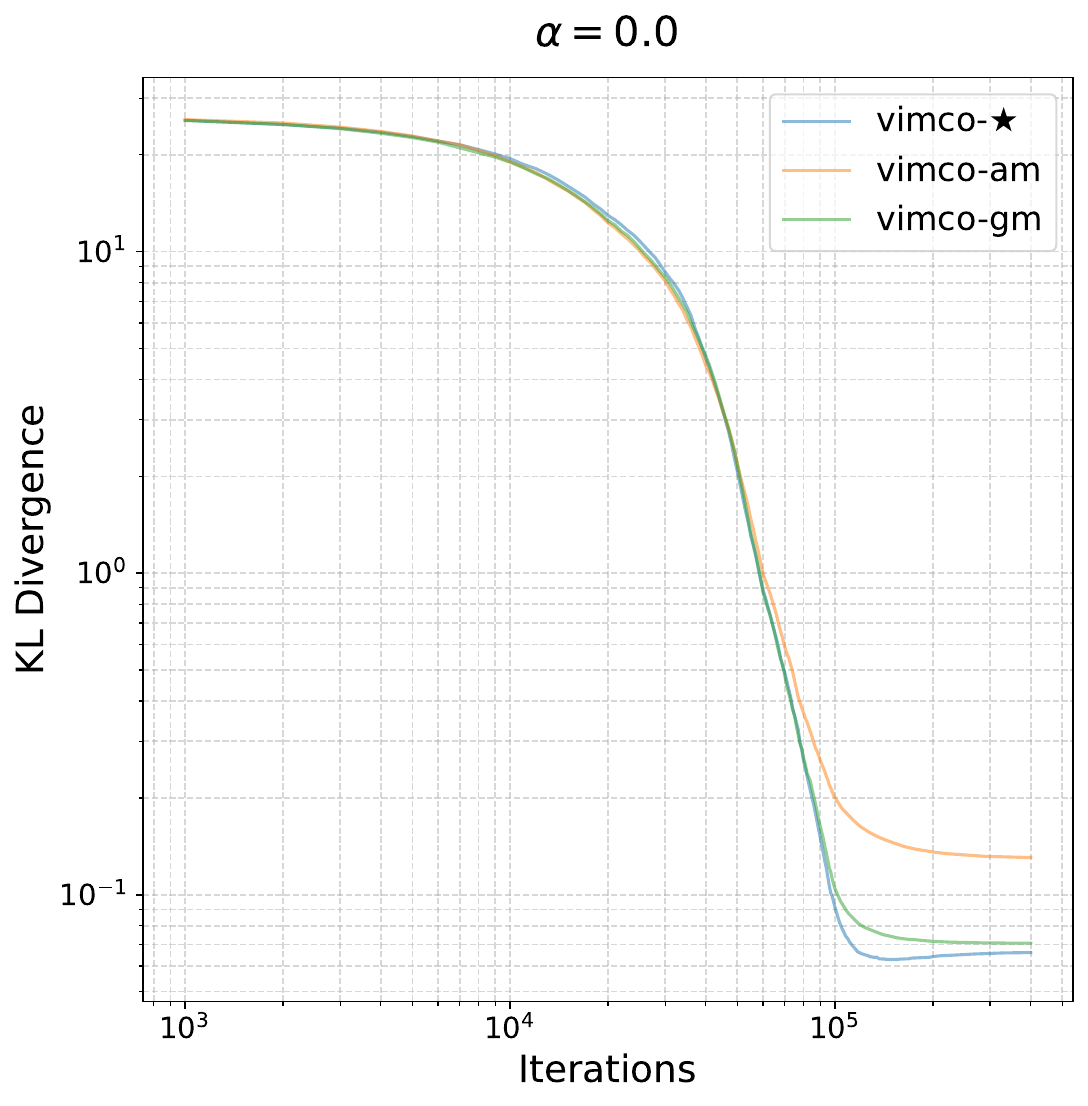}
\end{center}
\caption{Performance of VIMCO-$\star$, VIMCO-AM and VIMCO-GM as the topological gradient estimator for VBPI on DS1.}
\label{fig:vbpi}
\end{figure}

Figure \ref{fig:vbpi} shows the KL divergence from the variational approximations to the ground truth posterior over iterations. Although all methods behave similarly at the beginning, VIMCO-$\star$ ultimately yields the most accurate variational approximation.
Note that in this case ($\alpha=0$) the only difference between VIMCO-$\star$ and VIMCO-AM is that VIMCO-$\star$ omits the extra factor $\log\frac{N}{N-1}$, which demonstrates the power of  gradient estimates with a higher signal-to-noise ratio (Theorem \ref{thm:optimalVarReduction}). 


\section{Discussion}
\label{sec:ccl}

The versatility of VIMCO gradient estimators opens several compelling research directions. One promising direction involves developing theoretically grounded annealing schedules \cite[analogous to those established in the Sequential Monte Carlo literature, see, e.g.][]{del2012adaptive} and strategies to minimize computational cost while retaining the advantages of the VIMCO-$\star$ gradient estimator. 

Furthermore, our work provides an asymptotic characterization of VIMCO gradient estimators that complements established results under the reparameterization trick assumption \citep{daudel2024learningimportanceweightedvariational,rainforth2018tighter,Tucker2019DoublyRG}, thereby facilitating a principled comparison between these two paradigms. A first significant implication of our results is that the VIMCO-$\star$ gradient estimator with $\alpha = 0$ achieves an asymptotic SNR rate of $O(\sqrt{N})$ for the learning of $\phi$, markedly outperforming the $O(1/\sqrt{N})$ rate of the standard reparameterized IWAE gradient estimator \citep{rainforth2018tighter}. This finding challenges the prevailing heuristic that reparameterized gradient estimators are inherently superior to REINFORCE-based alternatives. In fact, the asymptotic SNR rate of the VIMCO-$\star$ gradient estimator matches that of the more sophisticated doubly-reparameterized gradient estimator \citep{Tucker2019DoublyRG,daudel2024learningimportanceweightedvariational}. Future research focusing on the behavior of the leading constants within these asymptotic rates is thus expected to further refine the understanding of these two fundamental estimation frameworks. 

Lastly, we anticipate that the assumptions we made throughout the paper are likely conservative and could be significantly weakened in future work, further generalizing the algorithmic framework introduced here.

\newpage

\appendix

\appendix

\section{Deferred proofs for \Cref{subsec:asympREINFgrad}}

In the following, $\psi \in \{\phi_1,\ldots, \phi_b \}$, $z \sim q_\phi(\cdot|x)$ and $z_1,z_2, \ldots$ are i.i.d. copies of $z$.  

\subsection{Proof of \Cref{thm:REINFORCEGradientExpectation}}
\label{app:subsec:thm:REINFORCEGradientExpectation}

We start by capturing the behavior as $N \to \infty$ of the first term in the right-hand side of \eqref{eq:reinfVRIWAEFull}. We then capture the behavior as $N \to \infty$ of the second term in this right-hand side, before pairing both results together to get \Cref{thm:REINFORCEGradientExpectation}.

\subsubsection{Behavior as $N \to \infty$ of the first term in the right-hand side of \eqref{eq:reinfVRIWAEFull}}

We have the following proposition.

\begin{prop} \label{prop:FirstTermExp} Let $\psi \in \{ \phi_1,\ldots, \phi_b \}$ and assume~\ref{hyp:inverseW}, \ref{hyp:momentW} and \ref{hyp:momentGradTwo} with $h, h'>2$. Then: as $N \to \infty$,
\begin{align}
  \PE \lr{\sum_{j = 1}^N \normW(z_j;x) \partial_\psi \log \w(z_j;x)} = - \frac{\PE(\w(z;x)^{1-\alpha} \partial_\psi \log q_\phi(z|x))   }{\PE(\w(z;x)^{1-\alpha})}  +o(1). \label{eq:FirstTermExpectOone}
\end{align}  
Further assuming that $h,h'\geq 4$, we have: as $N\to\infty$,
\begin{multline} \label{FirtTermExp1OverN}
  \PE \lr{\sum_{j = 1}^N \normW(z_j;x) \partial_\psi \log \w(z_j;x)} \\ = - \frac{\PE(\w(z;x)^{1-\alpha} \partial_\psi \log q_\phi(z|x))   }{\PE(\w(z;x)^{1-\alpha})}  - \frac{ R_1^{(\alpha)}(\theta, \phi;x)}{N}   + o \lr{\frac{1}{N}}
\end{multline}
where 
\begin{multline*}
  R_1^{(\alpha)}(\theta, \phi;x) := \frac{\PE(\w(z;x)^{1-\alpha} \partial_\psi \log q_\phi(z|x)) \PE(\w(z;x)^{2(1-\alpha)})}{\PE(\w(z;x)^{1-\alpha})^3} \\ -  \frac{\PE(\w(z;x)^{2(1-\alpha)} \partial_\psi \log q_\phi(z|x))}{\PE(\w(z;x)^{1-\alpha})^2}.
\end{multline*}
\end{prop}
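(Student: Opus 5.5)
Since $\psi\in\{\phi_1,\ldots,\phi_b\}$, we have $\partial_\psi \log \w(z_j;x) = -\partial_\psi \log q_\phi(z_j|x)$. Writing $A_j = \w(z_j;x)^{1-\alpha}$ and $B_j = A_j\,\partial_\psi\log q_\phi(z_j|x)$, the quantity to study is
\begin{align*}
\PE\lr{\sum_{j=1}^N \normW(z_j;x)\partial_\psi \log \w(z_j;x)} = -\PE\lr{\frac{\bar B_N}{\bar A_N}},
\end{align*}
with $\bar A_N = N^{-1}\sum_j A_j$ and $\bar B_N = N^{-1}\sum_j B_j$. The statement is therefore a ratio-of-means expansion, to order $O(1)$ and then to order $O(1/N)$. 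This is the same delta-method argument as in \cite[Theorem 1]{daudel2024learningimportanceweightedvariational}, with $B$ playing the role of the reparameterized numerator. Set $a=\PE(A)$ and $b=\PE(B)$. By H\"older's inequality, \ref{hyp:momentW} and \ref{hyp:momentGradTwo} give the moments of $B$ that we need: $\PE|B|^r<\infty$ whenever $1/r \ge 1/h + 1/h'$.

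\textbf{Key step.} Use the exact algebraic identity
\begin{align*}
\frac{1}{\bar A_N} = \frac1a - \frac{\bar A_N - a}{a^2} + \frac{(\bar A_N-a)^2}{a^3} - \frac{(\bar A_N-a)^3}{a^3\bar A_N}.
\end{align*}
For the $o(1)$ claim, truncate one order earlier at the remainder $(\bar A_N-a)^2/(a^2\bar A_N)$. Multiply by $\bar B_N$ and take expectations term by term.

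\textbf{Leading order.} $\PE(\bar B_N)/a = b/a$. The linear term equals $-\mathbb{C}\mathrm{ov}(\bar B_N,\bar A_N)/a^2 = -\mathbb{C}\mathrm{ov}(A,B)/(Na^2)$. The quadratic term $\PE(\bar B_N(\bar A_N-a)^2)/a^3$ equals $b\,\mathbb{V}(A)/(Na^3) + O(N^{-2})$, obtained by writing $\bar B_N = b + (\bar B_N-b)$ and using that third-order centred cross moments of i.i.d. means are $O(N^{-2})$. Collecting, the $1/N$ coefficient is
\begin{align*}
\frac{b\,\PE(A^2)}{a^3} - \frac{\PE(AB)}{a^2},
\end{align*}
because $-\mathbb{C}\mathrm{ov}(A,B)/a^2 + b\,\mathbb{V}(A)/a^3 = -\PE(AB)/a^2 + b\,\PE(A^2)/a^3$. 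With the overall minus sign this yields exactly $-b/a - R_1^{(\alpha)}/N$, which is \eqref{FirtTermExp1OverN}.

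\textbf{Remainder (main obstacle).} We must show
\begin{align*}
\PE\lrav{\bar B_N(\bar A_N-a)^3/\bar A_N} = o(1/N),
\end{align*}
and similarly that the quadratic remainder is $o(1)$ for \eqref{eq:FirstTermExpectOone}. Apply H\"older with exponents splitting $|\bar B_N|$, $|\bar A_N-a|^3$ and $\bar A_N^{-1}$. Assumption \ref{hyp:inverseW} together with \cite[Lemma]{daudel2024learningimportanceweightedvariational}-type arguments gives $\sup_{N\ge N_0}\PE(\bar A_N^{-\mu'})<\infty$ for some $\mu'>1$, since negative moments of averages are nonincreasing in $N$ by convexity/exchangeability. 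Marcinkiewicz–Zygmund (Rosenthal) inequalities give $\PE|\bar A_N-a|^{p}=O(N^{-p/2})$ as long as $\PE(A^p)<\infty$. Hence the cubic remainder is $O(N^{-3/2})=o(1/N)$ provided the H\"older exponents fit within the moment budget $h,h'\ge4$. Similarly, the second-order remainder is $O(N^{-1})=o(1)$ under $h,h'>2$.

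The delicate point is the exponent bookkeeping. The factor $\bar B_N$ has only $(1/h+1/h')^{-1}$ moments, so it is cleaner to bound $|\bar B_N|\le \bar C_N$ with $C=|B|$ and control it separately. If that is too costly, the fallback is to split on the event $\{\bar A_N > a/2\}$. On this event $\bar A_N^{-1}$ is bounded. On its complement, whose probability decays polynomially by Markov's inequality and the Rosenthal bound, H\"older combined with \ref{hyp:inverseW} handles the contribution.
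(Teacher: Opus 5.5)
Your reduction to $-\PE(\bar{B}_N/\bar{A}_N)$ is correct, and so is the algebra: the identity for $1/\bar{A}_N$, the linear and quadratic terms, and the identification of the $1/N$ coefficient with $R_1^{(\alpha)}(\theta,\phi;x)$. The paper makes the same reduction. It then applies \Cref{prop:handleRN} (Proposition 6 of \cite{daudel2024learningimportanceweightedvariational}) with $X=B$, $Y=A$. That lemma needs only $h>2$ and $\PE|B|^{\tilde h}<\infty$ for some $\tilde h>1$ (resp.\ $\tilde h=2$, by Cauchy--Schwarz when $h,h'\ge 4$); its item (ii) bounds precisely the second-order remainder and asserts only $o(1/N)$. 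Your substitute for that lemma, H\"older combined with Rosenthal rates, does not close under the stated moment budgets.

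For the $o(1)$ claim, keeping the linear term requires $\PE|AB|=\PE(\w(z;x)^{2(1-\alpha)}|\partial_\psi\log q_\phi(z|x)|)<\infty$. The moment assumptions guarantee this only when $2/h+1/h'\le 1$, which fails e.g.\ for $h=h'=5/2$. The quadratic remainder has the same defect: if one weight dominates, $\bar{B}_N(\bar{A}_N-a)^2/\bar{A}_N\approx B_1A_1/N^2$. So your term-by-term split can be $\infty-\infty$, and the claimed $O(N^{-1})$ bound is not available. Truncate at order zero instead: $\PE(\bar{B}_N/\bar{A}_N)-b/a=\PE((\bar{B}_N-b)/\bar{A}_N)-(b/a)\,\PE((\bar{A}_N-a)/\bar{A}_N)$. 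Then use $\PE|\bar{B}_N-b|^r\to 0$ for $r=(1/h+1/h')^{-1}>1$, together with H\"older and negative moments of $\bar{A}_N$.

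For the $1/N$ claim the difficulty is the boundary case $h=h'=4$. There $\bar{B}_N$ is only in $L^2$ and, being uncentered, does not decay. H\"older then needs $\PE(|\bar{A}_N-a|^6\bar{A}_N^{-2})^{1/2}=o(1/N)$. Four moments of $A$ do not deliver this through Rosenthal-type bounds: after lowering the power of $|\bar{A}_N-a|$ you get at best $O(N^{-\eta_0/2})$ with $\eta_0<2$. Passing to $\bar{C}_N$ changes nothing.

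Your fallback is worse. Bounding $\bar{A}_N^{-1}\le 2/a$ on $\{\bar{A}_N>a/2\}$ discards the cancellation of one power of $|\bar{A}_N-a|$ by $\bar{A}_N$. It leaves $\PE(|\bar{B}_N||\bar{A}_N-a|^3)$, whose finiteness involves $\PE(\w(z;x)^{4(1-\alpha)}|\partial_\psi\log q_\phi(z|x)|)$, which $h=h'=4$ does not control. On the complement, against an uncentered $\bar{B}_N$, the Markov bound $\PP(\bar{A}_N\le a/2)=O(N^{-2})$ is too weak.

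The missing step is to center, writing $\bar{B}_N=b+(\bar{B}_N-b)$.
\begin{itemize}
\item The uncentered part satisfies $|b\,\PE((\bar{A}_N-a)^3/\bar{A}_N)|\le |b|\,\PE(|\bar{A}_N-a|^4)^{3/4}\PE(\bar{A}_N^{-4})^{1/4}=O(N^{-3/2})$.
\item For the centered part, \Cref{lem:additional-tech-lemma-bis} gives $|\bar{A}_N-a|^3/\bar{A}_N\le|\bar{A}_N-a|^{\eta_0}(\bar{A}_N^{2-\eta_0}+a^{3-\eta_0}\bar{A}_N^{-1})$. H\"older with exponents $(2,4/\eta_0,4/(2-\eta_0))$, $\eta_0\in(1,2)$, then yields $O(N^{-(1+\eta_0)/2})=o(1/N)$.
\end{itemize}
This requires $\PE(\bar{A}_N^{-\mu})=O(1)$ for $\mu=4/(2-\eta_0)>4$. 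The monotonicity in $N$ you invoke only yields exponents up to the $\mu>1$ of \ref{hyp:inverseW}. You need \Cref{lem:equivlimsupconditionGen}, which upgrades \ref{hyp:inverseW} to all $\mu>0$.
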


\begin{proof}
First note that since $\psi \in \{ \phi_1, \ldots, \phi_b \}$, it holds that
\begin{align*}
 \PE \lr{\sum_{j = 1}^N \normW(z_j;x) \partial_\psi \log \w(z_j;x)} = - \PE \lr{\sum_{j = 1}^N \normW(z_j;x) \partial_\psi \log q_\phi(z_j|x)}. 
\end{align*}
Set $X = \w(z;x)^{1-\alpha}  \partial_\psi \log q_\phi(z|x)$, $Y = \w(z;x)^{1-\alpha}$ as well as 
$$
\overline{X}_N = \frac{1}{N} \sum_{j = 1}^N \w(z_j;x)^{1-\alpha}  \partial_\psi \log q_\phi(z_j|x) \quad \mbox{and} \quad \overline{Y}_N = \frac{1}{N} \sum_{j = 1}^N \w(z_j;x)^{1-\alpha}.$$ 
Under ~\ref{hyp:inverseW}, \ref{hyp:momentW} and \ref{hyp:momentGradTwo} with $h,h'>2$, \eqref{eq:prop:handleRNsnr1} from \Cref{prop:handleRN} holds and: as $N \to \infty$,
\begin{align*}
\PE & \lr{\sum_{j = 1}^N \normW(z_j;x) \partial_\psi \log q_\phi(z_j|x)} = \frac{\PE(\w(z;x)^{1-\alpha} \partial_\psi \log q_\phi(z|x))   }{\PE(\w(z;x)^{1-\alpha})} + o(1)
\end{align*} 
(as $\PE(Y^h) < \infty$ with $h > 2$ and $\PE(|X|^{\tilde{h}}) < \infty$ with $\tilde{h} > 1$ by the Cauchy-Schwartz inequality).  Further assuming assuming that $h \geq 6$ and $h' \geq 3$, \eqref{eq:prop:handleRNsnr1plus1N} from \Cref{prop:handleRN} holds and: as $N\to\infty$, \looseness=-1
\begin{align*}
  \PE & \lr{\sum_{j = 1}^N \normW(z_j;x) \partial_\psi \log q_\phi(z_j|x)} = \frac{\PE(\w(z;x)^{1-\alpha} \partial_\psi \log q_\phi(z|x))   }{\PE(\w(z;x)^{1-\alpha})} \\ & \qquad \qquad \qquad \qquad + \frac{1}{N}  \frac{\PE(\w(z;x)^{1-\alpha} \partial_\psi \log q_\phi(z|x)) \mathbb{V}(\w(z;x)^{1-\alpha})}{\PE(\w(z;x)^{1-\alpha})^3} \\ & \qquad \qquad \qquad \qquad - \frac{1}{N} \frac{\mathbb{C}\mathrm{ov}\lr{\w(z;x)^{1-\alpha} \partial_\psi \log q_\phi(z|x),\w(z;x)^{1-\alpha}}}{\PE(\w(z;x)^{1-\alpha})^2}   + o \lr{\frac{1}{N}}.
\end{align*}
(as $\PE(Y^h) < \infty$ with $h\geq 4$ and $\PE(|X|^{\tilde{h}}) < \infty$ with $\tilde{h} = 2$ by the Cauchy-Schwartz inequality). Observing that 
\begin{multline*}
\frac{\PE(\w(z;x)^{1-\alpha} \partial_\psi \log q_\phi(z|x)) \mathbb{V}(\w(z;x)^{1-\alpha})}{\PE(\w(z;x)^{1-\alpha})^3} \\ = \frac{\PE(\w(z;x)^{1-\alpha} \partial_\psi \log q_\phi(z|x)) \PE(\w(z;x)^{2(1-\alpha)})}{\PE(\w(z;x)^{1-\alpha})^3} - \frac{\PE(\w(z;x)^{1-\alpha} \partial_\psi \log q_\phi(z|x))}{\PE(\w(z;x)^{1-\alpha})} 
\end{multline*}
and that
\begin{multline*}
\frac{\mathbb{C}\mathrm{ov}\lr{\w(z;x)^{1-\alpha} \partial_\psi \log q_\phi(z|x),\w(z;x)^{1-\alpha}}}{\PE(\w(z;x)^{1-\alpha})^2} \\ =  \frac{\PE(\w(z;x)^{2(1-\alpha)} \partial_\psi \log q_\phi(z|x))}{\PE(\w(z;x)^{1-\alpha})^2}  - \frac{\PE(\w(z;x)^{1-\alpha} \partial_\psi \log q_\phi(z|x))}{\PE(\w(z;x)^{1-\alpha})}
\end{multline*}
yieds the desired result \eqref{FirtTermExp1OverN}.
\end{proof}

\subsubsection{Behavior as $N \to \infty$ of the second term in the right-hand side of \eqref{eq:reinfVRIWAEFull}}

We first provide a useful lemma.

\begin{lem} \label{lem:LogTaylorBound} For all $r > 0$, and all $k \in \mathbb{N}^\star$, it holds that
\begin{align}
&    \left| \log r - \lrb{\sum_{i=1}^k (-1)^{i-1} \frac{(r-1)^i}{i}} \right| \leq \frac{|r-1|^{k+1}}{k+1}\lr{1+\frac{1}{r}} \label{eq:LogNthOrderTwo}.
\end{align} 
\end{lem}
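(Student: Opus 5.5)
The plan is to use the integral representation of the remainder of the Taylor expansion of $\log$ around $1$. Set $u = r-1 > -1$. Differentiating the partial sum of the alternating series gives
\begin{align*}
\frac{\rmd}{\rmd u}\lrb{\sum_{i=1}^k (-1)^{i-1} \frac{u^i}{i}} = \sum_{i=0}^{k-1} (-u)^i = \frac{1-(-u)^k}{1+u},
\end{align*}
while $\frac{\rmd}{\rmd u}\log(1+u) = \frac{1}{1+u}$. Since both functions vanish at $u=0$, integrating from $0$ to $u$ yields the exact identity
\begin{align*}
\log r - \sum_{i=1}^k (-1)^{i-1} \frac{(r-1)^i}{i} = \int_0^{r-1} \frac{(-t)^k}{1+t}\, \rmd t .
\end{align*}

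It then remains to bound the absolute value of this integral by $\frac{|r-1|^{k+1}}{k+1}\lr{1+\frac1r}$. The key observation is that $t \mapsto \frac{1}{1+t}$ is monotone, so on the segment between $0$ and $r-1$ it is bounded by $\max(1, 1/r) \leq 1 + \frac1r$. The argument splits into two cases.
\begin{enumerate}[label=(\roman*)]
\item If $r \geq 1$, then $t \in [0, r-1]$, so $\frac{1}{1+t} \leq 1$ and the integral is at most $\int_0^{r-1} t^k \rmd t = \frac{(r-1)^{k+1}}{k+1}$ in absolute value.
\item If $0<r<1$, then $t \in [r-1,0]$, so $\frac{1}{1+t} \leq \frac{1}{r}$ and the absolute value is at most $\frac{1}{r}\int_{r-1}^0 |t|^k \rmd t = \frac{|r-1|^{k+1}}{(k+1)r}$.
\end{enumerate}
In both cases the bound is dominated by $\frac{|r-1|^{k+1}}{k+1}\lr{1+\frac1r}$, which proves \eqref{eq:LogNthOrderTwo}.

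No step is a real obstacle. The only point requiring care is the sign and orientation of the integral when $r<1$, which is handled by taking absolute values inside the integral over $[r-1,0]$ before applying the bound on $\frac{1}{1+t}$.
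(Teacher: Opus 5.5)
Your proof is correct and is essentially the paper's argument. Both write the remainder of the truncated series for $\log$ around $1$ as an exact integral, then bound $1/(1+t)$ by $1$ when $r\geq 1$ and by $1/r$ when $r<1$. The only cosmetic difference is that the paper integrates over $[0,1]$ via $f(t)=\log(1+t(r-1))$, obtaining $(-1)^k(r-1)^{k+1}\int_0^1 \frac{s^k}{1+s(r-1)}\,\rmd s$, which is exactly your $\int_0^{r-1}\frac{(-t)^k}{1+t}\,\rmd t$ after the substitution $t=s(r-1)$.
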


\begin{proof} 
Set $f(t) = \log (1+t(r-1))$ for all $t \in [0,1]$ and write that 
\begin{align} \label{eq:interNOrderTaylorLog}
  f(1) - f(0) - \sum_{i=1}^k \frac{f^{(i)}(0)}{i!} & = \int_0^1 \lr{f'(t) - \sum_{i=1}^k \frac{t^{i-1}}{(i-1)!} f^{(i)}(0)} \rmd t.
\end{align}
Next we use that: for all $i = 1 \ldots k$ and all $t \in [0,1]$, 
\begin{align*}
    f^{(i)}(t) = (-1)^{i-1}(i-1)! \lr{\frac{r-1}{1+t(r-1)}}^{i} 
\end{align*}
which follows easily by induction and we obtain that 
\begin{align*}
    &  f'(t) = \frac{r-1}{1+t(r-1)} \\
    & \sum_{i=1}^k \frac{t^{i-1}}{(i-1)!} f^{(i)}(0) = (r-1) \sum_{i=1}^k (-t(r-1))^{i-1} = (r-1) \lrb{ \frac{1-(-t(r-1))^{k}}{1+t(r-1)}}.\end{align*}
Consequently,
we deduce from \eqref{eq:interNOrderTaylorLog} that
\begin{align*}
\log r - \lrb{\sum_{i=1}^k (-1)^{i-1} \frac{(r-1)^i}{i}} & = (-1)^k (r-1)^{k+1} \int_0^1  \frac{t^k}{1+t(r-1)} \rmd t.
\end{align*}
The proof is concluded by using that
\begin{align*}
\int_0^1  \frac{t^k}{1+t(r-1)} \rmd t \leq  \begin{cases}
    \frac{1}{(k+1) r} & \mbox{if $r \leq 1$} \\
    \frac{1}{k+1} & \mbox{otherwise.}
  \end{cases}
\end{align*}
\end{proof}

\begin{rem} 
Our proof borrows elements from \cite[Lemma 5]{cherief2025asymptotics}. The main differences are that their result (i) controls $\left| \log r - \lrb{\sum_{i=1}^k (-1)^{i-1} \frac{(r-1)^i}{i}} \right|$ for $k \in \{1, 2 \}$ only and (ii) they handle integrals of the form $\int_0^1  \frac{t^k}{1+t(r-1)} \rmd t$ by using the following upper bound \looseness=-1
\begin{align*}
  \left|\int_0^1  \frac{t^k}{1+t(r-1)} \rmd t\right| \leq |(r-1)^{-1} \log r|, \quad k \in \{1,2\}.
\end{align*}
\end{rem}
We next present the following proposition.
%
%
%

\begin{prop} \label{prop:SecondTermExp} Let $\psi \in \{ \phi_1,\ldots, \phi_b \}$ and assume~\ref{hyp:inverseW}, \ref{hyp:momentW} with $h\geq 4$ and \ref{hyp:momentGradTwo} with $h'\geq 2$. Then, as $N \to \infty$:
\begin{multline} \label{eq:SecondTermExpOofOne}
  \PE\lr{ \lr{\sum_{i=1}^N \partial_\psi \log q_{\phi}(z_i|x)} \log \lr{ \frac{1}{N} \sum_{j = 1}^N \w(z_j;x)^{1-\alpha} }} \\ = \frac{\PE\lr{\w(z;x)^{1-\alpha} \partial_\psi \log q_{\phi}(z|x)}}{\PE(\w(z;x)^{1-\alpha})} + o(1).
\end{multline} 
Further assuming that $h \geq 8$, we have: as $N \to \infty$,
\begin{multline} \label{eq:SecondTermExpOofN}
\PE\lr{ \lr{\sum_{i=1}^N \partial_\psi \log q_{\phi}(z_i|x)} \log \lr{ \frac{1}{N} \sum_{j = 1}^N \w(z_j;x)^{1-\alpha} }} \\ 
= \frac{\PE\lr{\w(z;x)^{1-\alpha} \partial_\psi \log q_{\phi}(z|x)}}{\PE(\w(z;x)^{1-\alpha})} 
 + \frac{R_2^{(\alpha)}(\theta, \phi;x)}{N}  + o \lr{\frac{1}{N}}
\end{multline}
where 
\begin{multline*}
R_2^{(\alpha)}(\theta, \phi;x) = \frac{\PE\lr{\w(z;x)^{1-\alpha} \partial_\psi \log q_{\phi}(z|x)} \PE(\w(z;x)^{2(1-\alpha)})}{\PE(\w(z;x)^{1-\alpha})^3} \\ -\frac{1}{2}\frac{\PE\lr{\w(z;x)^{2(1-\alpha)} \partial_\psi \log q_{\phi}(z|x)}}{\PE(\w(z;x)^{1-\alpha})^2}.
\end{multline*}
\end{prop}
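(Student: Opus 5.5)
The plan is to remove the deterministic part of the logarithm and then Taylor expand around the mean, using \Cref{lem:LogTaylorBound} to control the remainder. Write $Y_j=\w(z_j;x)^{1-\alpha}$, $\mu=\PE(\w(z;x)^{1-\alpha})$, $s_i=\partial_\psi\log q_\phi(z_i|x)$, $S_N=\sum_{i=1}^N s_i$, $D_j=Y_j-\mu$ and $r_N=N^{-1}\sum_{j=1}^N Y_j/\mu$. Since $\PE(s_i)=0$ (score identity), we have $\PE(S_N\log\mu)=0$. Hence the left-hand side of \eqref{eq:SecondTermExpOofOne} equals $\PE(S_N\log r_N)$. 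By \Cref{lem:LogTaylorBound} with a given $k$,
\[
\PE(S_N\log r_N)=\sum_{i=1}^k\frac{(-1)^{i-1}}{i}\PE\bigl(S_N(r_N-1)^i\bigr)+\mathcal{R}_{N,k},
\]
where $|\mathcal{R}_{N,k}|\le (k+1)^{-1}\PE\bigl(|S_N||r_N-1|^{k+1}(1+r_N^{-1})\bigr)$. I will take $k=1$ for \eqref{eq:SecondTermExpOofOne} and $k=3$ for \eqref{eq:SecondTermExpOofN}.

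\emph{Polynomial terms.} The terms are explicit because $r_N-1=(N\mu)^{-1}\sum_j D_j$ and the $(s_i,D_i)$ are i.i.d. with $\PE(s)=\PE(D)=0$. We only keep the index configurations in which every index appears at least twice.
\begin{itemize}
\item For $i=1$: $\PE(S_N(r_N-1))=\PE(sD)/\mu=\PE(sY)/\mu$. This is exact and already gives the leading term.
\item For $i=2$: only the diagonal term survives, giving $\PE(S_N(r_N-1)^2)=\PE(sD^2)/(N\mu^2)=\bigl(\PE(sY^2)-2\mu\PE(sY)\bigr)/(N\mu^2)$.
\item For $i=3$: the contributions of order $1/N$ come from pairings $s_iD_i\cdot D_kD_k$ with $i\neq k$, of which there are $3$ placements. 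They give $3\PE(sY)\mathbb{V}(Y)/(N\mu^3)$; all other configurations are $O(N^{-2})$.
\end{itemize}
Combining with the signs $+1,-\tfrac12,+\tfrac13$, the $\PE(sY)/(N\mu)$ contributions cancel. What remains is $\PE(sY)/\mu+\bigl[\PE(sY)\PE(Y^2)/\mu^3-\tfrac12\PE(sY^2)/\mu^2\bigr]/N+O(N^{-2})$, which is exactly $R_2^{(\alpha)}$. The moments involved ($\PE|s|Y^2$ and similar) are finite by Hölder's inequality under \ref{hyp:momentW} and \ref{hyp:momentGradTwo}.

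\emph{Remainder (main obstacle).} We need $\mathcal{R}_{N,1}=o(1)$ and $\mathcal{R}_{N,3}=o(1/N)$. The difficulty is the factor $r_N^{-1}$ combined with only $h'\ge 2$ moments on the score, since Cauchy--Schwarz already spends all the integrability of $S_N$ ($\|S_N\|_2=O(\sqrt N)$). I will therefore split on the event $A_N=\{r_N\ge 1/2\}$.

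On $A_N$ we have $1+r_N^{-1}\le 3$. Cauchy--Schwarz together with the Marcinkiewicz--Zygmund inequality gives $\|(r_N-1)^{k+1}\|_2=\|r_N-1\|_{2(k+1)}^{k+1}=O(N^{-(k+1)/2})$, which requires $h\ge 2(k+1)$, i.e. $h\ge4$ resp. $h\ge8$. This yields bounds $O(N^{1/2-1})=o(1)$ and $O(N^{1/2-2})=o(1/N)$.

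On $A_N^c$ the argument is as follows.
\begin{itemize}
\item $\PP(A_N^c)\le \PP(|r_N-1|\ge1/2)=O(N^{-h/2})$ by Markov and Marcinkiewicz--Zygmund.
\item $\PE(r_N^{-\mu'})$ is bounded uniformly in large $N$ for some $\mu'>1$, by \ref{hyp:inverseW} as in \cite{daudel2024learningimportanceweightedvariational}.
\item A Hölder split of $|S_N||r_N-1|^{k+1}r_N^{-1}\mathbbm{1}_{A_N^c}$ into $S_N$, $r_N^{-1}$, powers of $|r_N-1|$ and $\mathbbm{1}_{A_N^c}$ then gives a bound polynomially small in $N$.
\end{itemize}
The exponents in this Hölder split must be checked carefully against $h$ and $h'$. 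If $h'=2$ is too tight, I would first truncate $S_N$, or use that $\PP(A_N^c)$ decays fast enough to absorb a slightly larger moment of $S_N$ obtained by interpolation. This bookkeeping is where I expect most of the work.
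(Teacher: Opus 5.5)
Your argument is correct. It shares the paper's skeleton: remove $\log \PE(\w(z;x)^{1-\alpha})$ via the score identity, expand with \Cref{lem:LogTaylorBound} at $k=1$ and $k=3$, compute the three polynomial moments exactly, and use Cauchy--Schwarz so that $S_N$ only needs to be in $L^2$. Your pairing count and the cancellation of the $\PE(sY)/(N\mu)$ terms are right and reproduce $R_2^{(\alpha)}$.

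The real difference is how you treat the singular factor $r_N^{-1}$ in the remainder. The paper never splits on an event. It applies \Cref{lem:additional-tech-lemma-bis} to get $|r_N-1|^4r_N^{-2}\le|r_N-1|^{\eta_0}\bigl(r_N^{2-\eta_0}+r_N^{-2}\bigr)$ with $\eta_0\in(2,4)$, trading part of the power of $|r_N-1|$ against the negative power of $r_N$. It then concludes with Hölder, \Cref{lem:petrov} and negative moments, obtaining $O(N^{1/2-\eta_0/4})=o(1)$. You instead trade the singular factor against the small probability of $A_N^c$. This separates typical from atypical behaviour more transparently and makes the requirement $h\ge 2(k+1)$ visible. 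It also handles the $k=3$ case explicitly, whereas the paper only says it follows ``by adapting (i)''.

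The bookkeeping you expect on $A_N^c$ goes away, and no truncation is needed, once you use the full strength of \Cref{lem:equivlimsupconditionGen}. Under \ref{hyp:inverseW}, which holds with an exponent larger than $1$, its items (iv) and then (iii) give $\limsup_N\PE(r_N^{-m})<\infty$ for \emph{every} $m>0$, not merely for some $m>1$. On $A_N^c$ you have $|r_N-1|\le1$ and $1+r_N^{-1}\le 2r_N^{-1}$. Hence, for any conjugate exponents $1/p+1/q=1$,
\[
\PE\bigl(|S_N|\,|r_N-1|^{k+1}(1+r_N^{-1})\mathbbm{1}_{A_N^c}\bigr)\le 2\,\|S_N\|_2\,\PE\bigl(r_N^{-2p}\bigr)^{\frac{1}{2p}}\,\PP(A_N^c)^{\frac{1}{2q}}=O\bigl(N^{\frac12-\frac{h}{4q}}\bigr).
\]
This is $o(1)$ for $h\ge4$ if you take $q<2$, and $o(1/N)$ for $h\ge8$ if you take $q<4/3$; in both cases $h'=2$ suffices. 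This all-orders control of negative moments is exactly what makes $h'=2$ enough, and the paper relies on it in the same way through $\PE(\overline{Y}_N^{-2q})$ with $q>1$. With only a negative moment of order at most $2$, neither Hölder split would leave room for the required rate.
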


\begin{proof}[Proof of \Cref{prop:SecondTermExp}]
Let $X = \partial_\psi \log q_{\phi}(z|x)$, $Y = {\w(z;x)^{1-\alpha}}/{\PE(\w(z;x)^{1-\alpha})}$ and \looseness=-1
\begin{align*}
  & \overline{X}_N = \frac{1}{N} \sum_{i=1}^N \partial_\psi \log q_{\phi}(z_i|x) \quad \mbox{and} \quad \overline{Y}_N = \frac{1}{N} \sum_{j = 1}^N \frac{\w(z_j;x)^{1-\alpha}}{\PE(\w(z;x)^{1-\alpha})}.
\end{align*}
The quantity of interest we want to study then reads $\PE(N \overline{X}_N \log \overline{Y}_N)$, with $\PE(X) = \PE(Y-1) = 0$ and thus $\PE(\overline{X}_N) = \PE(\overline{Y}_N-1) = 0$. To analyze this quantity, we will use a Taylor expansion of $\log \overline{Y}_N$ and to control the remainder term. First observe that \looseness=-1
\begin{align*}
   \PE\lr{\overline{X}_N (\overline{Y}_N-1)} & = \frac{\PE(X(Y-1))}{N} = \frac{\PE(XY)}{N} \\
\PE(\overline{X}_N (\overline{Y}_N-1)^2) & = \frac{1}{N^2} \PE\lr{X(Y-1)^2} = \frac{1}{N^2} \lrb{ \PE(XY^2) - 2 \PE(XY)} \\
  \PE(\overline{X}_N (\overline{Y}_N-1)^3) 
  & = \frac{3}{N^2} \PE \lr{XY} \mathbb{V}(Y) + \frac{1}{N^3} \lrb{ \PE(X (Y-1)^3) - 3 \PE \lr{XY} \mathbb{V}(Y)}.
\end{align*}
It follows that 
\begin{align}
\PE \lr{N \overline{X}_N (\overline{Y}_N -1)} = \PE(XY)  \label{eq:eSecondTermExp-one}
\end{align}
and we also have
\begin{multline}
\PE \lr{N \overline{X}_N \lrb{(\overline{Y}_N -1)- \frac{(\overline{Y}_N -1)^2}{2} + \frac{(\overline{Y}_N -1)^3}{3}}} = \PE(XY) \\ + \frac{1}{N} \lrb{\PE \lr{XY} \PE(Y^2) -\frac{1}{2}\PE(XY^2)} + \frac{1}{3 N^2} \lrb{ \PE(X (Y-1)^3) - 3 \PE \lr{XY} \mathbb{V}(Y)} \label{eq:eSecondTermExp-two}
\end{multline}
since $\mathbb{V}(Y) = \PE(Y^2) - \PE(Y)^2 = \PE(Y^2) -1$. We now prove \eqref{eq:SecondTermExpOofOne} and \eqref{eq:SecondTermExpOofN} separately. 
\begin{enumerateList}
    \item Proof of \eqref{eq:SecondTermExpOofOne}.
    We will show that 
    \begin{align}
    R_N := \PE \lr{N \overline{X}_N \lrcb{\log \overline{Y}_N - {(\overline{Y}_N -1)}}} = o (1) \label{eq:OofOneExp}.
    \end{align} 
    Paired up with \eqref{eq:eSecondTermExp-one}, this will imply 
    \begin{align} \label{eq:firstResultSecondTermExp}
        \PE \lr{N \overline{X}_N \log \overline{Y}_N } = \PE(XY) + o(1),
    \end{align}
    which is exactly the first result from \Cref{prop:SecondTermExp}. Let us thus show \eqref{eq:OofOneExp}. Using \eqref{eq:LogNthOrderTwo} from \Cref{lem:LogTaylorBound} with $r = \overline{Y}_N$ and $k = 1$, we get
\begin{align*}
    |R_N| \leq \frac{N}{2} \PE\lr{|\overline{X}_N| |\overline{Y}_N-1|^{2}\lr{1+\frac{1}{\overline{Y}_N}} }.
\end{align*}
Consequently, 
\begin{align*}
    |R_N| \leq \frac{N}{2} \lr{ \PE\lr{|\overline{X}_N| |\overline{Y}_N-1|^{2}} + \PE\lr{|\overline{X}_N| \frac{(\overline{Y}_N-1)^{2}}{\overline{Y}_N}}}
\end{align*}
and using the Cauchy-Schwartz inequality, we get 
\begin{align*}
        |R_N| \leq \frac{N}{2} \PE\lr{|\overline{X}_N|^2}^{\frac{1}{2}}   \lrb{  \PE\lr{|\overline{Y}_N-1|^{4}}^{\frac{1}{2}} + \PE\lr{\frac{|\overline{Y}_N-1|^{4}}{\overline{Y}_N^2}}^{\frac{1}{2}}}.
\end{align*}
Next using \Cref{lem:additional-tech-lemma-bis} with $a = \overline{Y}_N$, $b = 1$, $\mu_0 = 4$ and $\eta_0 \in (2,4)$ paired up with the fact that $|a + b|^{1/2} \leq |a|^{1/2} + |b|^{1/2}$ for all $a,b \in \rset$, we have 
\begin{align*}
   \PE \lr{ \frac{|\overline{Y}_N-1|^{4}}{\overline{Y}_N^2}}^{\frac{1}{2}} \leq \PE \lr{\lrav{\overline{Y}_N-1}^{\eta_0}\;\overline{Y}_N^{2-\eta_0}}^{\frac{1}{2}} + \PE \lr{\frac{\lrav{\overline{Y}_N-1}^{\eta_0}}{\overline{Y}_N^2}}^{\frac{1}{2}}.
\end{align*}
Using the Holder's inequality, we then get that: for all $p,q>1$ with $1/p + 1/q = 1$
\begin{align*}
   \PE \lr{ \frac{|\overline{Y}_N-1|^{4}}{\overline{Y}_N^2}}^{\frac{1}{2}} \leq \PE \lr{\lrav{\overline{Y}_N-1}^{p\eta_0}}^{\frac{1}{2p}} \lr{\PE\lr{\overline{Y}_N^{q(2-\eta_0)}}^{\frac{1}{2q}} + \PE \lr{{\overline{Y}_N^{-2q}}}^{\frac{1}{2q}}}.
\end{align*}
At this stage, we have thus obtained that 
\begin{align*}
    |R_N| \leq \frac{N}{2} \PE\lr{|\overline{X}_N|^2}^{\frac{1}{2}}   \lrb{  \PE\lr{|\overline{Y}_N-1|^{4}}^{\frac{1}{2}} + \PE \lr{\lrav{\overline{Y}_N-1}^{p\eta_0}}^{\frac{1}{2p}} \lr{\PE\lr{\overline{Y}_N^{q(2-\eta_0)}}^{\frac{1}{2q}} + \PE \lr{{\overline{Y}_N^{-2q}}}^{\frac{1}{2q}}}}
\end{align*}
Under our assumptions, we can then apply \eqref{eq:petrov-centered} from \Cref{lem:petrov} with $(Z, \overline{Z}_N) = (X,\overline{X}_N)$ and $(Z,\overline{Z}_N) = (Y-1,\overline{Y}_N - 1)$ respectively, which gives
\begin{align*}
  & \sup_{N \in \mathbb{N}^\star} N^{\frac{1}{2}} \PE(|\overline{X}_N|^2)^{\frac{1}{2}} < \infty \\
  & \sup_{N \in \mathbb{N}^\star} N \PE\lr{ |\overline{Y}_N-1|^{4}}^{\frac{1}{2}} < \infty \\
  & \sup_{N \in \mathbb{N}^\star} N^{\frac{\eta_0}{4}} \PE\lr{ |\overline{Y}_N-1|^{p\eta_0}}^{\frac{1}{2p}} < \infty.
\end{align*}
where in the last display we have used that $\eta_0 \in (2,4)$. To conclude the proof, we use that under \ref{hyp:inverseW} we have by \Cref{lem:equivlimsupconditionGen}: for all $\mu >0$ : as $N \to \infty$, $\PE((\overline{Y}_N)^{-\mu}) = O(1)$.

\item Proof of \eqref{eq:SecondTermExpOofN}. We will show that
\begin{align}
R_N:= \PE \lr{N \overline{X}_N \lrcb{\log \overline{Y}_N - \lrb{(\overline{Y}_N -1)- \frac{(\overline{Y}_N -1)^2}{2} + \frac{(\overline{Y}_N -1)^3}{3}}}} = o \lr{\frac{1}{N}}. \label{eq:OofOneOverNExp} 
\end{align}
Paired up with \eqref{eq:eSecondTermExp-two} (in which the last term behaves as $o(1/N)$ since $|\PE(X(Y-1)^3)| \leq \PE(|X|^2)^{1/2} \PE(|Y-1|^6)^{1/2}$ using the Cauchy-Schwartz inequality which implies $|\PE(X(Y-1)^3)| < \infty$ under our assumptions), this will give 
\begin{align*}
    \PE(N \overline{X}_N \log \overline{Y}_N) = \PE(XY) + \frac{1}{N} \lrb{\PE \lr{XY} \PE(Y^2) -\frac{1}{2}\PE(XY^2)} + o \lr{\frac{1}{N}}
\end{align*}
and this is exactly the second result from \Cref{prop:SecondTermExp}. 
Using \eqref{eq:LogNthOrderTwo} from \Cref{lem:LogTaylorBound} with $r = \overline{Y}_N$ and $k = 3$ yields 
\begin{align*}
|R_N| \leq \frac{N}{4} \lrb{ \PE \lr{|\overline{X}_N| |\overline{Y}_N-1|^4} + \PE\lr{|\overline{X}_N| \frac{|\overline{Y}_N-1|^4}{\overline{Y}_N}}}.
\end{align*}
The proof technique is very similar to (i) from there and we can deduce that \eqref{eq:OofOneOverNExp} holds under our assumptions by adapting (i).

\end{enumerateList}

\end{proof}

\subsubsection{Pairing \Cref{prop:FirstTermExp,prop:SecondTermExp} together to prove \Cref{thm:REINFORCEGradientExpectation}}

\begin{proof}[Proof of \Cref{thm:REINFORCEGradientExpectation}] The proof of \eqref{eq:REINFLeadingOrder} follows from \Cref{prop:FirstTermExp,prop:SecondTermExp} paired up with \eqref{eq:GradVR}. To get \eqref{eq:REINFfirstOrder}, we once again use \Cref{prop:FirstTermExp,prop:SecondTermExp} paired up with \eqref{eq:GradVR}. The proof is concluded if we can show that
\begin{align}
\frac{1}{2} \partial_\psi [\gammaA(\theta, \phi;x)^2] & = - \frac{\PE(\w(z;x)^{2(1-\alpha)} \partial_\psi \log q_\phi(z|x))}{\PE(\w(z;x)^{1-\alpha})^2} \nonumber \\
& \quad + \frac{1}{2(1-\alpha)}\frac{\PE(\w(z;x)^{2(1-\alpha)} \partial_\psi \log q_{\phi}(z|x) )}{\PE(\w(z;x)^{1-\alpha})^2} \nonumber \\ 
& \quad + \frac{\PE(\w(z;x)^{2(1-\alpha)}) \PE(\w(z;x)^{1-\alpha} \partial_\psi \log q_\phi(z|x))}{\PE(\w(z;x)^{1-\alpha})^3} \nonumber \\
& \quad - \frac{1}{1-\alpha} \frac{\PE(\w(z;x)^{2(1-\alpha)}) \PE(\w(z;x)^{1-\alpha} \partial_\psi \log q_{\phi}(z|x) )}{\PE(\w(z;x)^{1-\alpha})^3} \nonumber
\end{align}
that is 
\begin{multline}
\frac{1}{2}(1-\alpha) \partial_\psi [\gammaA(\theta, \phi;x)^2]  = \frac{1}{2}(1-2(1-\alpha)) \frac{\PE(\w(z;x)^{2(1-\alpha)} \partial_\psi \log q_\phi(z|x))}{\PE(\w(z;x)^{1-\alpha})^2} \\ - \alpha \frac{\PE(\w(z;x)^{2(1-\alpha)}) \PE(\w(z;x)^{1-\alpha} \partial_\psi \log q_{\phi}(z|x) )}{\PE(\w(z;x)^{1-\alpha})^3} \label{eq:GradGamma}
\end{multline}
Let us thus prove \eqref{eq:GradGamma}. 
Since 
\begin{align*}
  \frac{1}{2} (1-\alpha) \partial_\psi [\gammaA(\theta, \phi;x)^2] & = \frac{1}{2} \partial_\psi \lr{\frac{\PE(\w(z;x)^{2(1-\alpha)})}{\PE(\w(z;x)^{1-\alpha})^2}} \\
   & = \frac{1}{2} {\frac{\partial_\psi \PE(\w(z;x)^{2(1-\alpha)})}{\PE(\w(z;x)^{1-\alpha})^2}} - {\frac{\PE(\w(z;x)^{2(1-\alpha)}) \partial_\psi \PE(\w(z;x)^{1-\alpha})}{\PE(\w(z;x)^{1-\alpha})^3}}
  \end{align*}
we get \eqref{eq:GradGamma} by plugging in the above the following identity: for all $u>0$, 
  \begin{align*}
    \partial_\psi \PE(\w(z;x)^{u}) & = -u \PE(\w(z;x)^{u} \partial_\psi \log q_\phi(z|x)) + \PE(\w(z;x)^{u} \partial_\psi \log q_{\phi}(z|x) ) \\
    & = (1-u) \PE(\w(z;x)^{u} \partial_\psi \log q_\phi(z|x)),
  \end{align*}
which concludes the proof.
\end{proof}

\begin{rem}\label{eq:FurtherCommentsThmOne} The proof of \Cref{thm:REINFORCEGradientExpectation} consists in pairing up \Cref{prop:FirstTermExp,prop:SecondTermExp} together, where these two results capture the behavior as $N \to \infty$ of the first and second term in the right-hand side of \eqref{eq:reinfVRIWAEFull} when $\psi \in \{ \phi_1, \ldots, \phi_b \}$ respectively.
  
The proof of \Cref{prop:FirstTermExp} builds on the fact that the first term in the right-hand side of \eqref{eq:reinfVRIWAEFull} can be written as $\PE(\overline{X}_N / \overline{Y}_N)$ for well-chosen sample averages $\overline{X}_N$ and $\overline{Y}_N$, with the asymptotic behavior of estimators of the form $\overline{X}_N / \overline{Y}_N$ as $N \to \infty$ being studied in \cite[Proposition 6]{daudel2024learningimportanceweightedvariational} under general conditions. 
Obtaining \Cref{prop:FirstTermExp} thus follows from \cite[Proposition 6]{daudel2024learningimportanceweightedvariational} and the challenge in analyzing the asymptotic behavior of $\partial_\psi \liren(\theta, \phi;x)$ comes from the second term in the right-hand side of \eqref{eq:reinfVRIWAEFull}. One key ingredient in the proof of \Cref{prop:SecondTermExp} is \Cref{lem:LogTaylorBound}, which is a general lemma that establishes an upper bound on the difference between $\log(r)$ and its $k$-th order Taylor expansion for all $r> 0$. 
\end{rem}

\section{Deferred proofs for \Cref{subsec:gradientEstimREINF}}

  \subsection{Proof of \Cref{prop:VarREINFfirstV}}

\label{app:subsec:prop:VarREINFfirstV}

First note that since $\psi \in \{ \phi_1, \ldots, \phi_b \}$, it holds that
\begin{align*}
 \mathbb{V} \lr{\sum_{j = 1}^N \normW(z_j;x) \partial_\psi \log \w(z_j;x)} = \mathbb{V} \lr{\sum_{j = 1}^N \normW(z_j;x) \partial_\psi \log q_\phi(z_j|x)}. 
\end{align*}
Set $X = \w(z;x)^{1-\alpha}  \partial_\psi \log q_\phi(z|x)$, $Y = \w(z;x)^{1-\alpha}$ as well as 
$$
\overline{X}_N = \frac{1}{N} \sum_{j = 1}^N \w(z_j;x)^{1-\alpha}  \partial_\psi \log q_\phi(z_j|x) \quad \mbox{and} \quad \overline{Y}_N = \frac{1}{N} \sum_{j = 1}^N \w(z_j;x)^{1-\alpha}.$$ 
Applying \eqref{eq:prop:handleRNsnrVarAnalysis} from \Cref{prop:handleRN} yields the desired result.

\subsection{Proof of \Cref{prop:VarREINFsecondV}} 

\label{app:subsec:prop:VarREINFsecondV}

For notational convenience, we denote $X = \partial_\psi \log q_{\phi}(z|x)$, $Y = {\w(z;x)^{1-\alpha}}/{\PE(\w(z;x)^{1-\alpha})}$ as well as
\begin{align*}
  & \overline{X}_N = \frac{1}{N} \sum_{i=1}^N \partial_\psi \log q_{\phi}(z_i|x) \quad \mbox{and} \quad \overline{Y}_N = \frac{1}{N} \sum_{j = 1}^N \frac{\w(z_j;x)^{1-\alpha}}{\PE(\w(z;x)^{1-\alpha})},
\end{align*}
with $\PE(X) = \PE(Y-1) = 0$ and thus $\PE(\overline{X}_N) = \PE(\overline{Y}_N-1) = 0$. We will first study the quantity $\mathbb{V}(N \overline{X}_N \log \overline{Y}_N)$. To that end, note that the central limit theorem yields
\begin{align*}
  &\sqrt{N} \lrb{\begin{matrix} \overline{X}_N\\ \overline{Y}_N -1
      \end{matrix}}\weaklimit \mathcal{N}(0, \boldsymbol{\Sigma}) \quad \mbox{where} \quad \boldsymbol{\Sigma} = \lr{\begin{matrix}
      \mathbb{V}(X) & \mathbb{C}\mathrm{ov}(X, Y) \\
      \mathbb{C}\mathrm{ov}(X,Y) & \mathbb{V}\lr{Y}
    \end{matrix}}.
\end{align*}
Furthermore, denoting $f(u,v) = u \log v$, and observing that 
\begin{align*}
&\left.\lrb{\begin{matrix} \partial_u f(u,v) \\ \partial_v f(u,v)
      \end{matrix}}\right|_{(u,v) = (0,1)} = 0 \\
      & \left.\lr{\begin{matrix} \partial_{uu} f(u,v) & \partial_{uv} f(u,v)  \\ \partial_{vu} f(u,v) & \partial_{vv} f(u,v) 
      \end{matrix}}\right|_{(u,v) = (0,1)} = \lr{\begin{matrix} 0 & 1 \\ 1 & 0 \end{matrix} }
\end{align*}
we can apply the multivariate second order Delta method \citep[see, e.g. Theorem 3.7 from][]{beutner2024Delta} and we get that
\begin{align*}
  N \overline{X}_N \log \overline{Y}_N \weaklimit Z_1 Z_2
\end{align*}
where $\boldsymbol{Z} \sim \mathcal{N}(0, \boldsymbol{\Sigma})$ and $\boldsymbol{Z} = (Z_1, Z_2)$. Next we show that the sequence $((N \overline{X}_N \log \overline{Y}_N)^2)_{N \geq 1}$ is uniformly integrable,  that is we show that there exists $\delta > 0$ such that 
\begin{align}
  \limsup_{N \in \mathbb{N}^*} \PE \lr{(N \overline{X}_N \log \overline{Y}_N)^{2(1+\delta)}} < \infty. \label{eq:UnifIntegrabilityVar}
\end{align}
To that end, we write that by Holder's inequality: for all $p, q \geq 1$ with $1/p + 1/q = 1$, 
\begin{align*}
  \PE \lr{|N \overline{X}_N \log \overline{Y}_N|^{2(1+\delta)}} \leq \lrb{ N^{1+\delta} \PE(|\overline{X}_N|^{2(1+\delta) p})^\frac{1}{p} }\times \lrb{  N^{1+\delta} \PE (|\log \overline{Y}_N|^{2(1+\delta)q})^{1/q}}.
\end{align*}
Further observe that
\begin{align*}
  |\log \overline{Y}_N|^{2(1+\delta)q} \leq 2^{2(1+\delta)q-1} \lr{ |\log \overline{Y}_N - (\overline{Y}_N -1)|^{2(1+\delta)q}+|\overline{Y}_N-1|^{2(1+\delta)q}}
\end{align*}
so that 
\begin{multline*}
  \PE \lr{  |\log \overline{Y}_N|^{2(1+\delta)q}}^{1/q} \\ 
  \leq (2^{2(1+\delta)q-1})^{1/q} \lr{ \PE(|\log \overline{Y}_N - (\overline{Y}_N -1)|^{2(1+\delta)q})+\PE(|\overline{Y}_N-1|^{2(1+\delta)q})}^{1/q}.
\end{multline*}
Hence, we have that 
\begin{multline*}
  \PE \lr{  |\log \overline{Y}_N|^{2(1+\delta)q}}^{1/q} \\ 
  \leq (2^{2(1+\delta)q-1})^{1/q} \lr{ \PE(|\log \overline{Y}_N - (\overline{Y}_N -1)|^{2(1+\delta)q})^{1/q}+\PE(|\overline{Y}_N-1|^{2(1+\delta)q})^{1/q}}.
\end{multline*}
We have thus obtained that 
\begin{multline*}
  \PE \lr{|N \overline{X}_N \log \overline{Y}_N|^{2(1+\delta)}} \leq \lrb{ N^{1+\delta} \PE(|\overline{X}_N|^{2(1+\delta) p})^\frac{1}{p} }\\ \times \lrb{  N^{1+\delta} (2^{2(1+\delta)q-1})^{1/q} \lr{ \PE(|\log \overline{Y}_N - (\overline{Y}_N -1)|^{2(1+\delta)q})^{1/q}+\PE(|\overline{Y}_N-1|^{2(1+\delta)q})^{1/q}}}.
\end{multline*}
Under our assumptions, and setting $p=q=2$, $h=4(1+\delta)q$ and $h'' = 2(1+\delta)p$, we can then apply \eqref{eq:petrov-centered} from \Cref{lem:petrov} with $(Z, \overline{Z}_N) = (X,\overline{X}_N)$ and $(Z,\overline{Z}_N) = (Y-1,\overline{Y}_N - 1)$ respectively, which gives
\begin{align*}
 & \sup_{N \in \mathbb{N}^*} N^{1+\delta} \PE(|\overline{X}_N|^{2(1+\delta) p})^\frac{1}{p} < \infty. \\
 & \sup_{N \in \mathbb{N}^*} N^{1+\delta} \PE(|\overline{Y}_N-1|^{2(1+\delta)q})^{1/q} < \infty.
\end{align*}
The final step to obtain \eqref{eq:UnifIntegrabilityVar} is to show that 
\begin{align} \label{eq:secondPartToBound}
      \sup_{N \in \mathbb{N}^*} N^{1+\delta}  \PE(|\log \overline{Y}_N - (\overline{Y}_N -1)|^{2(1+\delta)q})^{1/q} < \infty.
\end{align}
Using \eqref{eq:LogNthOrderTwo} from \Cref{lem:LogTaylorBound} with $r = \overline{Y}_N$ and $k = 1$, we get
$$
\left| \log \overline{Y}_N - (\overline{Y}_N - 1) \right| \leq \frac{|\overline{Y}_N-1|^{2}}{2}\lr{1+\frac{1}{\overline{Y}_N}}
$$ 
so that
\begin{align*}
  \PE\lr{\left| \log \overline{Y}_N - (\overline{Y}_N - 1) \right|^{2(1+\delta)q} } \leq 2^{-2(1+\delta)q} \PE \lr{|\overline{Y}_N-1|^{4(1+\delta)q}\lr{1+\frac{1}{\overline{Y}_N}}^{2(1+\delta)q}}.
\end{align*}
As a result, we have that 
\begin{align*}
  \PE\lr{\left| \log \overline{Y}_N - (\overline{Y}_N - 1) \right|^{2(1+\delta)q} } \leq \PE\lr{|\overline{Y}_N-1|^{4(1+\delta)q} \lr{1 + \frac{1}{\overline{Y}_N^{2(1+\delta)q}}}}
\end{align*}
from which we deduce 
\begin{align*}
  \PE\lr{\left| \log \overline{Y}_N - (\overline{Y}_N - 1) \right|^{2(1+\delta)q} }^{\frac{1}{q}} & \leq { \PE\lr{|\overline{Y}_N-1|^{4(1+\delta)q}}^{\frac{1}{q}} + \PE\lr{ \frac{|\overline{Y}_N-1|^{4(1+\delta)q}}{\overline{Y}_N^{2(1+\delta)q}}}^{\frac{1}{q}}},
\end{align*}
Similarly to the proof techniques used in \Cref{prop:SecondTermExp}, we get that \eqref{eq:secondPartToBound} holds under our assumptions. 
Consequently,  
\begin{align*}
  & \lim_{N \to \infty} \PE\lr{(N \overline{X}_N \log \overline{Y}_N)^2} = \PE((Z_1 Z_2)^2) \\
  & \lim_{N \to \infty} \PE\lr{N \overline{X}_N \log \overline{Y}_N} = \PE(Z_1 Z_2)
\end{align*}
and thus 
\begin{align*}
  \lim_{N \to \infty} \mathbb{V}\lr{N \overline{X}_N \log \overline{Y}_N} = \mathbb{V}(Z_1 Z_2). 
\end{align*}
By Isserlis's theorem, we have that $\PE((Z_1 Z_2)^2) = \PE(Z_1^2) \PE(Z_2^2) + 2 (\PE(Z_1 Z_2))^2$ so that
\begin{align*}
  \mathbb{V}(Z_1 Z_2) & = \PE(Z_1^2) \PE(Z_2^2) + (\PE(Z_1 Z_2))^2 = \mathbb{V}(X) \mathbb{V}\lr{Y} + \lr{\mathbb{C}\mathrm{ov}\lr{X,Y}}^2.
\end{align*}
We thus get that 
\begin{align*}
  \mathbb{V}\lr{N \overline{X}_N \log \overline{Y}_N} = \mathbb{V}(X) \mathbb{V}\lr{Y} + {\mathbb{C}\mathrm{ov}\lr{X,Y}}^2
\end{align*}
that is
\begin{multline}
  \mathbb{V}\lr{\lr{\sum_{i=1}^N \partial_\psi \log q_{\phi}(z_i|x)} \log \lr{ \frac{1}{N} \sum_{j = 1}^N \frac{\w(z_j;x)^{1-\alpha}}{\PE(\w(z;x)^{1-\alpha})} } } \\ = \mathbb{V}(\partial_\psi \log q_{\phi}(z|x)) \mathbb{V}\lr{\frac{\w(z;x)^{1-\alpha}}{\PE(\w(z;x)^{1-\alpha})}} + {\mathbb{C}\mathrm{ov}\lr{\partial_\psi \log q_{\phi}(z|x), \frac{\w(z;x)^{1-\alpha}}{\PE(\w(z;x)^{1-\alpha})}}}^2 + o(1). \label{eq:UsefulForAnalysis}
\end{multline}
Moving on to
\begin{align*}
    \mathbb{V}\lr{\lr{\sum_{i=1}^N \partial_\psi \log q_{\phi}(z_i|x)} \log \lr{ \frac{1}{N} \sum_{j = 1}^N {\w(z_j;x)^{1-\alpha}}} },
\end{align*}
that is, $\mathbb{V}(N \overline{X}_N [\log \overline{Y}_N + \log \PE(\w(z;x)^{1-\alpha})])$ and writing
\begin{multline*}
  \mathbb{V}\lr{N \overline{X}_N \lrb{\log \overline{Y}_N + \log \PE(\w(z;x)^{1-\alpha})} } =   \mathbb{V}\lr{N \overline{X}_N \log \overline{Y}_N} \\ + N \mathbb{V}\lr{X}\lr{\log \PE(\w(z;x)^{1-\alpha})}^2 + 2\mathbb{C}\mathrm{ov}(N\overline{X}_N \log \overline{Y}_N, N\overline{X}_N \log \PE(\w(z;x)^{1-\alpha})),
\end{multline*} 
we obtain using Cauchy-Schwartz inequality
\begin{multline*}
\left|\mathbb{C}\mathrm{ov}(N\overline{X}_N \log \overline{Y}_N, N\overline{X}_N \log \PE(\w(z;x)^{1-\alpha}))\right| \\ \leq \sqrt{N} \sqrt{\mathbb{V}\lr{N \overline{X}_N \log \overline{Y}_N}} \sqrt{\mathbb{V}\lr{X}\lr{\log \PE(\w(z;x)^{1-\alpha})}^2}.
\end{multline*}
Using \eqref{eq:UsefulForAnalysis}, that is using the fact that $\mathbb{V}\lr{N \overline{X}_N \log \overline{Y}_N} = O(1)$ gives 
\begin{align}
    \mathbb{V}\lr{\lr{\sum_{i=1}^N \partial_\psi \log q_{\phi}(z_i|x)} \log \lr{ \frac{1}{N} \sum_{j = 1}^N {\w(z_j;x)^{1-\alpha}}} } = o(N). \label{eq:NaiveOneV}
\end{align}
The proof is concluded by expanding $\mathbb{V}(\gradNOREP)$ and observing that the covariance term which appears in the expansion is controlled by the Cauchy-Schwarz inequality thanks to \eqref{eq:NaiveOneV} and \Cref{prop:VarREINFfirstV}.

\subsection{Proof of \Cref{thm:VarianceReduction}}

\label{app:subsec:thm:VarianceReduction}

We first provide a useful lemma.

\begin{lem} \label{lem:ControlCNAlpha} Let $\ell \in \{ 1, 2 \}$, $\psi \in \{ \phi_1, \ldots, \phi_b \}$ and $\eta \geq 0$. Assume \ref{hyp:inverseW} and that 
  \begin{align*}
  & \PE(({\w(z;x)^{(1+\ell)(1-\alpha)}} |\partial_\psi \log q_{\phi}(z|x)|)^2) < \infty \\
  & \eta \PE(|\partial_\psi \log q_{\phi}(z|x)|^2) < \infty.  
  \end{align*} 
  Then, as $N \to \infty$,
 \begin{align*}
\PE\lr{ \lr{\sum_{i=1}^N \lr{\frac{\w(z_i;x)^{1-\alpha} - \eta}{\sum_{j = 1}^N \w(z_j;x)^{1-\alpha}}}^{1+\ell} |\partial_\psi \log q_{\phi}(z_i|x)| \frac{1}{1 - \frac{\w(z_i;x)^{1-\alpha}}{\sum_{j = 1}^N \w(z_j;x)^{1-\alpha}}} }^2  } = o \lr{\frac{1}{N^{2\ell-1}}}.
 \end{align*}
\end{lem}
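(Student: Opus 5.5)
The plan is to strip out the normalisation and the factor $1/(1-\normW(z_i;x))$ by an exact algebraic rewriting. Then independence between $z_i$ and the leave-one-out sum leaves a moment of a single sample times a negative moment of a sample mean, which \ref{hyp:inverseW} controls.

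\textbf{Rewriting.} Write $Y_j=\w(z_j;x)^{1-\alpha}$, $S_N=\sum_{j=1}^N Y_j$, $S_{-i}=S_N-Y_i$ and $X_j=\partial_\psi \log q_\phi(z_j|x)$. Since $1-Y_i/S_N = S_{-i}/S_N$, the $i$-th summand equals
\begin{align*}
\frac{(Y_i-\eta)^{1+\ell}}{S_N^{1+\ell}}\,|X_i|\,\frac{S_N}{S_{-i}} = \frac{(Y_i-\eta)^{1+\ell}\,|X_i|}{S_N^{\ell}\,S_{-i}}.
\end{align*}
For $\ell=2$ the power $(Y_i-\eta)^{3}$ may be negative, so we bound the sum in absolute value, which suffices because it is squared. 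Using $S_N\geq S_{-i}>0$ a.s. and setting $A_i=|Y_i-\eta|^{1+\ell}|X_i|$, we get
\begin{align*}
\lrav{\sum_{i=1}^N \cdots} \leq \sum_{i=1}^N \frac{A_i}{S_{-i}^{1+\ell}} = \frac{1}{(N-1)^{1+\ell}}\sum_{i=1}^N \frac{A_i}{\overline{Y}_{-i}^{\,1+\ell}},
\end{align*}
where $\overline{Y}_{-i}=S_{-i}/(N-1)$ is a mean of $N-1$ i.i.d. copies of $Y$ that is independent of $z_i$.

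\textbf{Second moment.} By Cauchy--Schwarz, $(\sum_{i=1}^N a_i)^2\leq N\sum_{i=1}^N a_i^2$. Independence of $A_i$ and $\overline{Y}_{-i}$, together with exchangeability, then gives
\begin{align*}
\PE\lr{\lr{\sum_{i=1}^N \cdots}^2} \leq \frac{N^2}{(N-1)^{2(1+\ell)}}\,\PE(A_1^2)\,\PE\lr{\overline{Y}_{N-1}^{\,-2(1+\ell)}}.
\end{align*}
It remains to bound the two expectations.
\begin{itemize}
\item For $\PE(A_1^2)$, use $|Y-\eta|^{1+\ell}\leq 2^{\ell}(Y^{1+\ell}+\eta^{1+\ell})$. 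This gives $\PE(A_1^2)\leq 2^{2\ell+1}\big(\PE((Y^{1+\ell}|X|)^2)+\eta^{2(1+\ell)}\PE(X^2)\big)$. Both terms are finite by the two stated assumptions; the second is trivial when $\eta=0$.
\item For the negative moment, \ref{hyp:inverseW} together with \Cref{lem:equivlimsupconditionGen} (as already used in the proof of \Cref{prop:SecondTermExp}) gives $\PE(\overline{Y}_{N-1}^{\,-\mu})=O(1)$ for every $\mu>0$. We take $\mu=2(1+\ell)$, restricting to $N$ large enough for the lemma to apply.
\end{itemize}

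\textbf{Conclusion.} The whole expression is therefore $O(N^2/N^{2+2\ell})=O(N^{-2\ell})$, which is $o(N^{-(2\ell-1)})$, as claimed. The only delicate point is the negative-moment step: one must check that \Cref{lem:equivlimsupconditionGen} applies to the leave-one-out mean with $N-1$ terms and arbitrary exponent $\mu$. This is where \ref{hyp:inverseW} enters. The rest is elementary, and the crude bound $S_N\geq S_{-i}$ loses nothing in the rate.
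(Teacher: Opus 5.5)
Your proof is correct and follows the paper's argument essentially step for step. The shared steps are the rewriting $1-\normW(z_i;x)=S_{-i}/S_N$, the bound $S_N\ge S_{-i}$, Cauchy--Schwarz in the form $(\sum_i a_i)^2\le N\sum_i a_i^2$, independence to factor out $\PE(\overline{Y}_{N-1}^{-2(1+\ell)})$, and \Cref{lem:equivlimsupconditionGen} to control that negative moment.

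The only difference is in how $\eta$ is handled. You absorb it into $A_i$ in a single pass via $|Y-\eta|^{1+\ell}\le 2^{\ell}(Y^{1+\ell}+\eta^{1+\ell})$, taking absolute values, which deals with the possibly negative cube when $\ell=2$ more cleanly. The paper instead treats the $\eta=0$ term and the pure-$\eta$ term separately, then combines them with the one-sided inequality $(w^{1-\alpha}-\eta)^{1+\ell}\le 2^{\ell}(w^{(1+\ell)(1-\alpha)}+\eta^{1+\ell})$.
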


\begin{proof}
Denoting \begin{align*}
 C_{N}^{(\alpha)} =  \PE\lr{ \lr{\sum_{i=1}^N \lr{\frac{\w(z_i;x)^{1-\alpha}}{\sum_{j = 1}^N \w(z_j;x)^{1-\alpha}}}^{1+\ell} |\partial_\psi \log q_{\phi}(z_i|x)| \frac{1}{1 - \frac{\w(z_i;x)^{1-\alpha}}{\sum_{j = 1}^N \w(z_j;x)^{1-\alpha}}} }^2  },
\end{align*}
we have that 
\begin{align*}
   C_{N}^{(\alpha)} =  \PE\lr{ \lr{\sum_{i=1}^N \frac{\w(z_i;x)^{(1+\ell)(1-\alpha)}}{\lr{\sum_{j = 1}^N \w(z_j;x)^{1-\alpha}}^\ell} |\partial_\psi \log q_{\phi}(z_i|x)| \frac{1}{\sum_{\overset{j = 1}{j\neq i}}^N \w(z_j;x)^{1-\alpha}} }^2  }.  
\end{align*}
Using the Cauchy-Schwartz inequality $(1/N \sum_i u_i)^2 \leq 1/N \sum_i u_i^2$ that is $(\sum_i u_i)^2 \leq N \sum_i u_i^2$,
\begin{align*}
  C_{N}^{(\alpha)} & \leq N \PE\lr{\sum_{i=1}^N \lr{\frac{\w(z_i;x)^{(1+\ell)(1-\alpha)}}{\lr{\sum_{j = 1}^N \w(z_j;x)^{1-\alpha}}^\ell} |\partial_\psi \log q_{\phi}(z_i|x)| \frac{1}{\sum_{\overset{j = 1}{j\neq i}}^N \w(z_j;x)^{1-\alpha}} }^2  }
\end{align*}
so that using first the fact that $z_1, \ldots, z_N$ are i.i.d. and then that $\sum_{{j = 1}}^{N} \w(z_j;x)^{1-\alpha} \geq \sum_{{j = 1}}^{N-1} \w(z_j;x)^{1-\alpha}$, we get 
\begin{align*}
   C_{N}^{(\alpha)} & \leq N^2 \PE\lr{\lr{\frac{\w(z_N;x)^{(1+\ell)(1-\alpha)}}{\lr{\sum_{j = 1}^N \w(z_j;x)^{1-\alpha}}^{\ell}} |\partial_\psi \log q_{\phi}(z_N|x)| \frac{1}{\sum_{{j = 1}}^{N-1} \w(z_j;x)^{1-\alpha}} }^2  } \\
   & \leq N^2 \PE\lr{\lr{{\w(z_N;x)^{(1+\ell)(1-\alpha)}} |\partial_\psi \log q_{\phi}(z_N|x)| \frac{1}{(\sum_{{j = 1}}^{N-1} \w(z_j;x)^{1-\alpha})^{1+\ell}} }^{2}}\\
   &= N^2 \PE\lr{\lr{{\w(z_N;x)^{(1+\ell)(1-\alpha)}} |\partial_\psi \log q_{\phi}(z_N|x)|}^2} \PE\lr{\frac{1}{(\sum_{{j = 1}}^{N-1} \w(z_j;x)^{1-\alpha})^{2(1+\ell)}}}.
\end{align*}
Thus, we have that
\begin{align*}
  C_{N}^{(\alpha)} \leq \frac{N^2}{(N-1)^{2(1+\ell)}} \PE\lr{\lr{{\w(z;x)^{(1+\ell)(1-\alpha)}} |\partial_\psi \log q_{\phi}(z|x)|}^2} \PE((\overline{Y}_{N-1})^{-2(1+\ell)})
\end{align*}
with $\overline{Y}_{N} = 1/N \sum_{i=1}^N \w(z_j;x)^{1-\alpha}$. We conclude that $C_{N}^{(\alpha)} = o(1/N^{2\ell-1})$ using that under our assumptions (i) $\PE(({\w(z;x)^{(1+\ell)(1-\alpha)}} |\partial_\psi \log q_{\phi}(z|x)|)^2) < \infty$ (ii) for all $\mu >0$ : as $N \to \infty$, $\PE((\overline{Y}_N)^{-\mu}) = O(1)$ by \Cref{lem:equivlimsupconditionGen}. Similarly, we obtain under our assumptions that 
\begin{align*}
 C_{N}^{(\alpha)'} = \eta^{2(1+\ell)} \PE\lr{ \lr{\sum_{i=1}^N \lr{\frac{1}{\sum_{j = 1}^N \w(z_j;x)^{1-\alpha}}}^{1+\ell} |\partial_\psi \log q_{\phi}(z_i|x)| \frac{1}{1 - \frac{\w(z_i;x)^{1-\alpha}}{\sum_{j = 1}^N \w(z_j;x)^{1-\alpha}}} }^2  }
\end{align*}
satisfies $ C_{N}^{(\alpha)'} = o(1/N^{2\ell -1})$, which concludes the proof since $(\w(z;x)^{1-\alpha}-\eta)^{1+\ell} \leq 2^{\ell} (\w(z;x)^{(1+\ell)(1-\alpha)}+ \eta^{1+\ell})$.

\end{proof}

\begin{proof}[Proof of \Cref{thm:VarianceReduction}] We first write that for all $\alpha \in [0,1)$,
\begin{align*}
& \gradVIMCOam \\ & = \frac{\alpha}{1-\alpha} \sum_{j = 1}^N \normW(z_j;x) \partial_\psi \log q_{\phi}(z_j|x) - \frac{1}{1-\alpha} \log\lr{\frac{N}{N-1}} \sum_{i=1}^N \partial_\psi \log q_{\phi}(z_i|x) \\ & \quad - \frac{1}{1-\alpha} \sum_{i=1}^N \partial_\psi \log q_{\phi}(z_i|x) \lrb{\log \lr{1 - \normW(z_i;x)} + \normW(z_i;x)}.
\end{align*}
Observe next that  
\begin{align*}
\mathbb{V}\lr{\log\lr{\frac{N}{N-1}} \sum_{i=1}^N \partial_\psi \log q_{\phi}(z_i|x)} & =  \lr{\log\lr{\frac{N}{N-1}}}^2 N \mathbb{V}\lr{\partial_\psi \log q_{\phi}(z|x)} \nonumber \\
& = \frac{1}{N} \mathbb{V}\lr{\partial_\psi \log q_{\phi}(z|x)} + o \lr{\frac{1}{N}}. 
\end{align*}
The proof of \Cref{thm:VarianceReduction} will be concluded if we can show that
\begin{align}
\mathbb{V}\lr{\sum_{i=1}^N \partial_\psi \log q_{\phi}(z_i|x) \lrb{\log \lr{1 - \normW(z_i;x)} + \normW(z_i;x)}} = o \lr{\frac{1}{N}} \label{eq:ToProveTwoVIMCOam}
\end{align}
and
\begin{multline}
  \mathbb{V}\lr{\alpha\sum_{j = 1}^N \normW(z_j;x) \partial_\psi \log q_\phi(z_j|x)} \\ = \frac{1}{N} \mathbb{V} \lr{\alpha~\frac{\w(z;x)^{1-\alpha}}{\PE(\w(z;x)^{1-\alpha})} \lrb{ \partial_\psi \log q_{\phi}(z|x)  - \frac{\PE(\w(z;x)^{1-\alpha} \partial_\psi \log q_{\phi}(z|x))}{\PE(\w(z;x)^{1-\alpha})} } } + o \lr{\frac{1}{N}} \label{eq:ToproveThreeVIMCOam}
\end{multline}
as well as
\begin{multline}
\alpha \log\lr{\frac{N}{N-1}} \mathbb{C}\mathrm{ov} \lr{\sum_{j = 1}^N \normW(z_j;x) \partial_\psi \log q_{\phi}(z_j|x)   ,    \sum_{i=1}^N \partial_\psi \log q_{\phi}(z_i|x) } \\
 = \frac{1}{N} \mathbb{C}\mathrm{ov}  \left(\frac{\alpha~\w(z;x)^{1-\alpha}}{\PE(\w(z;x)^{1-\alpha})} \biggl[ \partial_\psi \log q_{\phi}(z|x)  \right. \\ \left. - \frac{\PE(\w(z;x)^{1-\alpha} \partial_\psi \log q_{\phi}(z|x))}{\PE(\w(z;x)^{1-\alpha})}    \biggr], \partial_\psi \log q_\phi(z|x) \right) + o\lr{\frac{1}{N}}. \label{eq:ToProveOneVIMCOam}
\end{multline}

\begin{enumerateList}
  \item Proof of \eqref{eq:ToProveTwoVIMCOam}. Denoting
\begin{align*}
B_{N}^{(\alpha)} = \mathbb{V}\lr{\sum_{i=1}^N \partial_\psi \log q_{\phi}(z_i|x) \lrb{\log \lr{1 - \normW(z_i;x)} + \normW(z_i;x)}},
\end{align*}
we will get \eqref{eq:ToProveTwoVIMCOam} if we can show that $B_{N}^{(\alpha)} =  o \lr{1/N}$. Since
\begin{align*}
  B_{N}^{(\alpha)} \leq \PE \lr{ \lr{\sum_{i=1}^N \partial_\psi \log q_{\phi}(z_i|x) \lrb{\log \lr{1 - \normW(z_i;x)} + \normW(z_i;x)}}^2  },
\end{align*}
setting $r = 1 - \normW(z_i;x)$, we have that $r> 0$ and \eqref{eq:LogNthOrderTwo} from \Cref{lem:LogTaylorBound} with $k = 1$ implies 
\begin{align*}
  |B_{N}^{(\alpha)}| \leq \frac{1}{4} \PE \lr{ \lr{\sum_{i=1}^N \lr{\normW(z_i;x)}^2 |\partial_\psi \log q_{\phi}(z_i|x)| \lrb{1+ \frac{1}{1 - \normW(z_i;x)}} }^2  }. 
\end{align*}
Consequently, we have that 
\begin{multline}
    |B_{N}^{(\alpha)}| \leq \PE \lr{ \lr{\sum_{i=1}^N \lr{\normW(z_i;x)}^2 |\partial_\psi \log q_{\phi}(z_i|x)|}^2} \\ + \PE\lr{ \lr{\sum_{i=1}^N \lr{\normW(z_i;x)}^2 |\partial_\psi \log q_{\phi}(z_i|x)| \frac{1}{1 - \normW(z_i;x)} }^2  }. \label{eq:decompositionVarREINF1overN}
\end{multline}
Denoting $X = \w(z;x)^{2(1-\alpha)} |\partial_\psi \log q_{\phi}(z|x)|$, $\overline{X}_N = 1/N \sum_{i=1}^N \w(z_i;x)^{2(1-\alpha)} |\partial_\psi \log q_{\phi}(z_i|x)|$, $Y = \w(z;x)^{1-\alpha}$ and $\overline{Y}_N = 1/N \sum_{i=1}^N \w(z_i;x)^{1-\alpha}$, the first term in the right-hand side of \eqref{eq:decompositionVarREINF1overN} becomes
\begin{align*}
\frac{1}{N^2} \PE \lr{\lr{\frac{\overline{X}_N}{\overline{Y}_N^2}}^2} = \frac{1}{N^2} \lrb{ \mathbb{V}\lr{\frac{\overline{X}_N}{\overline{Y}_N^2}} + \mathbb{E}\lr{\frac{\overline{X}_N}{\overline{Y}_N^2}}^2}.
\end{align*}
Under our assumptions, we can apply \eqref{eq:prop:handleRNsnrVarAnalysis} and \eqref{eq:prop:handleRNsnr1} from \Cref{prop:handleRN} which yields: as $N \to \infty$, 
\begin{align*}
& \mathbb{V} \lr{\frac{\overline{X}_N}{\overline{Y}_N^2}} = \frac{1}{N}  \frac{\mathbb{V}\lr{X-2Y \frac{\PE(X)}{\PE(Y)}}}{\PE(Y)^4}   +o\lr{\frac{1}{N}} \\
& \PE \lr{\frac{\overline{X}_N}{\overline{Y}_N^2}} = \frac{\PE(X)}{\PE(Y)^2} + o(1)
\end{align*}
so that the first term in the right-hand side of \eqref{eq:decompositionVarREINF1overN} is $o(1/N)$. As for the second term in the right-hand side of \eqref{eq:decompositionVarREINF1overN}, it is $o(1/N)$ by \Cref{lem:ControlCNAlpha} with $(\ell,\eta) = (1,0)$. Consequently, we have obtained that $|B_{N}^{(\alpha)}| = o(1/N)$.

  \item Proof of \eqref{eq:ToproveThreeVIMCOam} and \eqref{eq:ToProveOneVIMCOam}. First note that \eqref{eq:ToproveThreeVIMCOam} and \eqref{eq:ToProveOneVIMCOam} are immediate when $\alpha = 0$, so we only need to show them for $\alpha \in (0,1)$. Letting $\alpha \in (0,1)$ from now on, we can apply \eqref{eq:prop:handleRNsnrVarAnalysis} from \Cref{prop:handleRN} under our assumptions, which yields \eqref{eq:ToproveThreeVIMCOam}. Furthermore, denoting $X = \w(z;x)^{1-\alpha} \partial_\psi \log q_{\phi}(z|x)$, $Y = \w(z;x)^{1-\alpha}$, $Z = \partial_\psi \log q_{\phi}(z|x)$, $\overline{X}_N = \sum_{j = 1}^N \w(z_j;x)^{1-\alpha} \partial_\psi \log q_{\phi}(z_j|x)$, $\overline{Y}_N = \sum_{j = 1}^N \w(z_j;x)^{1-\alpha}$ as well as $\overline{Z}_N = N^{-1} \sum_{i=1}^N \partial_\psi \log q_{\phi}(z_i|x)$, we can rewrite \eqref{eq:ToProveOneVIMCOam} as 
  \begin{align}
 \alpha \log \lr{\frac{N}{N-1}} N \PE \lr{\frac{\overline{X}_N}{\overline{Y}_N} \overline{Z}_N} = \frac{\alpha}{N} \mathbb{C}\mathrm{ov}\lr{\frac{X}{\PE(Y)} - Y \frac{\PE(X)}{\PE(Y)^2},Z} + o \lr{\frac{1}{N}} \label{eq:ToProveOneVIMCOamRewritten}.
  \end{align}
  Let us thus show \eqref{eq:ToProveOneVIMCOamRewritten}. Using that $\PE(\overline{Z}_N) = 0$, we begin by writing that 
\begin{align*}
  \PE \lr{\frac{\overline{X}_N}{\overline{Y}_N} \overline{Z}_N} = \PE \lr{\frac{(\overline{X}_N - \PE(X))\overline{Z}_N}{\overline{Y}_N}} - \frac{\PE(X)}{\PE(Y)} \PE\lr{\frac{(\overline{Y}_N- \PE(Y))\overline{Z}_N}{\overline{Y}_N}}.
\end{align*}
Next using that $1/\overline{Y}_N = 1/\PE(Y) + (\PE(Y)- \overline{Y}_N)/(\PE(Y) \overline{Y}_N)$, we get
\begin{multline*}
    \PE \lr{\frac{\overline{X}_N}{\overline{Y}_N} \overline{Z}_N} = \frac{\PE((\overline{X}_N - \PE(X))\overline{Z}_N)}{\PE(Y)} - \frac{1}{\PE(Y)} \PE \lr{\frac{(\overline{X}_N - \PE(X))\overline{Z}_N(\overline{Y}_N - \PE(Y))}{\overline{Y}_N}} \\ - \frac{\PE(X)}{\PE(Y)^2} \PE\lr{{(\overline{Y}_N- \PE(Y))\overline{Z}_N}} + \frac{\PE(X)}{\PE(Y)^2} \PE\lr{\frac{(\overline{Y}_N- \PE(Y))^2\overline{Z}_N}{\overline{Y}_N}}.
\end{multline*}
In other words, 
\begin{multline*}
    \PE \lr{\frac{\overline{X}_N}{\overline{Y}_N} \overline{Z}_N} = \frac{1}{N} \lr{\frac{\mathbb{C}\mathrm{ov}(X,Z)}{\PE(Y)} - \frac{\PE(X)\mathbb{C}\mathrm{ov}(Y,Z)}{\PE(Y)^2} } \\ + \frac{1}{\PE(Y)} \lrb{ \frac{\PE(X)}{\PE (Y)}\PE\lr{\frac{(\overline{Y}_N- \PE(Y))^2\overline{Z}_N}{\overline{Y}_N}} -  \PE \lr{\frac{(\overline{X}_N - \PE(X))\overline{Z}_N(\overline{Y}_N - \PE(Y))}{\overline{Y}_N}} }.
\end{multline*}
Under our assumptions, 
 we have using \ref{item:eq:prop:handleRN1} from \Cref{prop:handleRN} that 
\begin{align*}
  \PE\lr{\frac{(\overline{Y}_N- \PE(Y))^2\overline{Z}_N}{\overline{Y}_N}} = o \lr{\frac{1}{N}}.
\end{align*}
Furthermore, using the Cauchy-Schwartz inequality twice yields
\begin{align*}
  \PE \lr{\frac{(\overline{X}_N - \PE(X))\overline{Z}_N(\overline{Y}_N - \PE(Y))}{\overline{Y}_N}} \leq \lr{\frac{\mathbb{V}(X)}{N}}^{\frac{1}{2}} \PE(\overline{Z}_N^4)^\frac{1}{4} \PE \lr{ \frac{(\overline{Y}_N- \PE(Y))^4}{\overline{Y}_N^4}}^{\frac{1}{4}}.
\end{align*}
Under our assumptions, we can apply (i) equation \eqref{eq:petrov-centered} from \Cref{lem:petrov} and (ii) \Cref{lem:equivlimsupconditionGen} which gives for all $\mu >0$ : as $N \to \infty$, $\PE((\overline{Y}_N)^{-\mu}) = O(1)$ so that
\begin{align*}
  & \sup_{N \in \mathbb{N}^*} N^{\frac{1}{2}} \PE(\overline{Z}_N^4)^\frac{1}{4} < \infty \\
  & \sup_{N \in \mathbb{N}^*} N^{\frac{1}{2}}  \PE \lr{ \frac{(\overline{Y}_N- \PE(Y))^4}{\overline{Y}_N^4}}^{\frac{1}{4}} < \infty.
\end{align*}
We deduce  
\begin{align*}
  \PE \lr{\frac{(\overline{X}_N - \PE(X))\overline{Z}_N(\overline{Y}_N - \PE(Y))}{\overline{Y}_N}}  = o \lr{\frac{1}{N}}.
\end{align*}
and the proof of \eqref{eq:ToProveOneVIMCOamRewritten} is completed.
\end{enumerateList}
\end{proof}

\begin{rem}[From $\gradNOREP$ to $\gradVIMCOam$] \label{eq:remGlobalBaseline}
The VIMCO-AM gradient estimator can be constructed as the logical estimator with per-sample baselines to build from an intermediate (INTER) estimator with a global baseline and a higher asymptotic variance. To see this, we highlight an intermediary result used in the proof of \Cref{prop:VarREINFsecondV}: under the assumptions of \Cref{prop:VarREINFsecondV} and as $N \to \infty$, \eqref{eq:UsefulForAnalysis} from \Cref{app:subsec:prop:VarREINFsecondV} states that \looseness=-1
\begin{multline*}
  \mathbb{V}\lr{\lr{\sum_{i=1}^N \partial_\psi \log q_{\phi}(z_i|x)} \log \lr{ \frac{1}{N} \sum_{j = 1}^N \frac{\w(z_j;x)^{1-\alpha}}{\PE(\w(z;x)^{1-\alpha})} } } \\ = \mathbb{V}(\partial_\psi \log q_{\phi}(z|x)) \mathbb{V}\lr{\frac{\w(z;x)^{1-\alpha}}{\PE(\w(z;x)^{1-\alpha})}} + {\mathbb{C}\mathrm{ov}\lr{\partial_\psi \log q_{\phi}(z|x), \frac{\w(z;x)^{1-\alpha}}{\PE(\w(z;x)^{1-\alpha})}}}^2 + o(1).
\end{multline*}
Now denoting 
\begin{multline}
\gradIDEAL = \sum_{j = 1}^N \normW(z_j;x) \partial_\psi \log \w(z_j;x) \\ + 
\frac{1}{1-\alpha}  \sum_{i=1}^N \partial_\psi \log q_{\phi}(z_i|x) \lrb{\log \lr{ \frac{1}{N} \sum_{j = 1}^N \w(z_j;x)^{1-\alpha}} - \log \PE(\w(z;x)^{1-\alpha})}, \label{eq:BASEREINFestim} 
\end{multline}
it holds that $\PE(\gradIDEAL) = \partial_\psi \liren(\theta, \phi;x)$, with \Cref{prop:VarREINFfirstV} and \eqref{eq:UsefulForAnalysis} also implying that $\mathbb{V}(\gradIDEAL) = O(1)$ as $N \to \infty$ under the assumptions of \Cref{prop:VarREINFsecondV}.

Hence, the asymptotic variance of the INTER gradient estimator is lower than that of the NAIVE gradient estimator \eqref{eq:NaiveREINFestimFull} when $\psi \in \{\phi_1, \ldots, \phi_b \}$ thanks to having included a global baseline in its second term which corresponds to the almost sure limit as $N \to \infty$ of $\log (N^{-1} \sum_{j = 1}^N \w(z_j;x)^{1-\alpha})$. Noting that the INTER gradient estimator \eqref{eq:BASEREINFestim} is unusable in practice due to $\PE(\w(z;x)^{1-\alpha})$ being untractable, the VIMCO-AM gradient estimator is the more asymptotically efficient practical version of the INTER one. Indeed, writing that
\begin{align*}
  \frac{1}{N} \lr{\sum_{\overset{j = 1}{{j\neq i}}}^N \w(z_j;x)^{1-\alpha}  + f_{-i}^{(\alpha)}(z_{1:N},\theta,\phi;x)} = \frac{1}{N-1} \sum_{\overset{j = 1}{{j\neq i}}}^N \w(z_j;x)^{1-\alpha},
\end{align*}
the VIMCO-AM gradient estimator replaces the baseline $\log \PE(\w(z;x)^{1-\alpha})$ by per-sample estimators of it and enjoys an improved asymptotic variance compared to the INTER one. \looseness=-1
\end{rem}

\subsection{Proof of \Cref{thm:VarianceGM}}
\label{app:subsec:thm:VarianceGM}

We will make use of the following inequality: 
for all $a,b \in \rset$,
\begin{align} \label{eq:exponBound}
  |\exp(a) -\exp(b)| = \left| \int_{b}^{a} \exp(t) \rmd t \right| \leq \lr{\exp(a) + \exp(b)}|a - b|,
\end{align}
which follows since $u \mapsto \exp(u)$ is an increasing function on $\rset$. We next present a lemma.

\begin{lem} \label{lem:VIMCOgm} Let $(Y,Z)$ be an $\rset_+^* \times \rset$ random vector. Let $(Y_1,Z_1), (Y_2,Z_2), \ldots$ be a sequence of i.i.d. random vectors with the same distribution as $(Y,Z)$. Assume that $\PE(Y^h)< \infty$, $\PE(|Z|^{h'}) < \infty$, $\PE((Y|Z|)^{h''})$ and $\PE(|\log(Y)|^{h'''}) < \infty$ with $h>2$, $h', h'', h'''>4$. For all $N \in \mathbb{N}^*$, denote $\overline{X}_N = (N-1)^{-1} \sum_{i=1}^N \log Y_i$, $\overline{Y}_N = N^{-1} \sum_{i=1}^N Y_i$ and $\overline{Z}_N = N^{-1} \sum_{i=1}^N Z_i$. Further assume that there exist $N \geq 1$ and $\mu > 0$ such that $\PE(\overline{Y}_N^{-\mu}) < \infty$. Then: as $N \to \infty$,
  \begin{align}
    & \PE \lr{\frac{\exp (\overline{X}_N)}{\overline{Y}_N} \overline{Z}_{N}} =  \frac{\exp(\PE(X))}{\PE(Y)} \PE(Z) + o(1) \label{eq:VIMCOgmExpec} \\
    & \mathbb{V} \lr{\frac{\exp (\overline{X}_N)}{\overline{Y}_N} \overline{Z}_{N}} = \frac{\boldsymbol{v}^T \boldsymbol{\Sigma} \boldsymbol{v}}{N} + o \lr{\frac{1}{N}}. \label{eq:VIMCOgmVar}
  \end{align}
where 
\begin{align*}
\boldsymbol{\Sigma} = \lr{\begin{matrix}
      \mathbb{V}(X) & \mathbb{C}\mathrm{ov}(X, Y) &  \mathbb{C}\mathrm{ov}(X,Z)\\
      \mathbb{C}\mathrm{ov}(X,Y) & \mathbb{V}\lr{Y} &  \mathbb{C}\mathrm{ov}(Y,Z)\\
      \mathbb{C}\mathrm{ov}(X,Z) & \mathbb{C}\mathrm{ov}(Y,Z) & \mathbb{V}(Z) \\
    \end{matrix}} \quad \mbox{and} \quad
  \boldsymbol{v}
      = \frac{\exp(\PE(X))}{\PE(Y)}   \lrb{\begin{matrix} {\PE(Z)} \\ - \frac{\PE(Z)}{\PE(Y)} \\
    1
      \end{matrix}}.
  \end{align*}
\end{lem}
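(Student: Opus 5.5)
The plan is to treat $\exp(\overline{X}_N)\overline{Z}_N/\overline{Y}_N$ as a smooth function of three sample means and use a delta-method argument, followed by uniform integrability to upgrade convergence in distribution to convergence of moments. Here $X=\log Y$, and $\overline{X}_N$ uses the normalisation $(N-1)^{-1}$. Define $\widetilde{X}_N = N^{-1}\sum_{i=1}^N \log Y_i$, so that $\overline{X}_N = \frac{N}{N-1}\widetilde{X}_N$ and $\overline{X}_N - \widetilde{X}_N = \widetilde{X}_N/(N-1) = O_P(1/N)$. Set $g(u,v,w) = e^u w/v$, whose gradient at $(\PE X,\PE Y,\PE Z)$ is exactly $\boldsymbol{v}$.

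\textbf{Almost sure limit and the mean \eqref{eq:VIMCOgmExpec}.} By the strong law of large numbers, $(\overline{X}_N,\overline{Y}_N,\overline{Z}_N)\to(\PE X,\PE Y,\PE Z)$ almost surely. The integrand therefore converges a.s. to $g(\PE X,\PE Y,\PE Z)$. To get \eqref{eq:VIMCOgmExpec} it suffices to bound a moment of order strictly greater than $1$ uniformly in $N$. By Jensen's inequality (concavity of $\log$), $\exp(\widetilde{X}_N)\le\overline{Y}_N$, so the geometric mean is dominated by the arithmetic mean. Hence $\exp(\overline{X}_N)/\overline{Y}_N\le \overline{Y}_N^{1/(N-1)}$, or more crudely $\le \exp(|\widetilde{X}_N|/(N-1))$. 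Its $p$-th moments are bounded uniformly by Hölder's inequality, together with \Cref{lem:equivlimsupconditionGen} (which gives $\PE(\overline{Y}_N^{-\mu})=O(1)$ for every $\mu$) and $\PE(\overline{Y}_N^{h})=O(1)$. Combining this with $\sup_N\PE|\overline{Z}_N|^{h'}<\infty$ from \Cref{lem:petrov} yields uniform integrability and hence \eqref{eq:VIMCOgmExpec}.

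\textbf{Variance \eqref{eq:VIMCOgmVar}.} Write $T_N = g(\overline{X}_N,\overline{Y}_N,\overline{Z}_N)$ and $c=g(\PE X,\PE Y,\PE Z)$. First, the first-order delta method with the trivariate CLT gives $\sqrt N(T_N - c)\weaklimit \mathcal N(0,\boldsymbol v^T\boldsymbol\Sigma\boldsymbol v)$, since the $1/(N-1)$ correction is negligible at scale $\sqrt N$. Second, I would establish uniform integrability of $(N(T_N-c)^2)_N$ by bounding $\PE|\sqrt N(T_N-c)|^{2+2\delta}$. Together these give $N\PE(T_N-c)^2\to\boldsymbol v^T\boldsymbol\Sigma\boldsymbol v$ and $\sqrt N\,(\PE T_N - c)\to 0$. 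Since $\mathbb{V}(T_N)=\PE(T_N-c)^2-(\PE T_N-c)^2$, this yields \eqref{eq:VIMCOgmVar}.

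For the uniform integrability bound I would decompose
\[
T_N-c=\frac{e^{\overline X_N}-e^{\PE X}}{\overline Y_N}\overline Z_N+e^{\PE X}\Bigl(\frac1{\overline Y_N}-\frac1{\PE Y}\Bigr)\overline Z_N+\frac{e^{\PE X}}{\PE Y}(\overline Z_N-\PE Z).
\]
The third term is controlled directly by \Cref{lem:petrov}. The second term uses $\frac1{\overline Y_N}-\frac1{\PE Y}=(\PE Y-\overline Y_N)/(\overline Y_N\PE Y)$, together with Hölder's inequality, inverse moments of $\overline Y_N$, and \Cref{lem:petrov} applied to $\overline Y_N-\PE Y$; this is the same approach as in the proof of \Cref{thm:VarianceReduction}. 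For the first term I would use \eqref{eq:exponBound}: $|e^{\overline X_N}-e^{\PE X}|\le (e^{\overline X_N}+e^{\PE X})|\overline X_N-\PE X|$. The factor $e^{\overline X_N}/\overline Y_N$ is controlled as in the mean step, and $|\overline X_N-\PE X|\le|\widetilde X_N-\PE X|+|\widetilde X_N|/(N-1)$ has $L^{q}$ norm $O(N^{-1/2})$ by \Cref{lem:petrov}, using the moment assumption on $\log Y$ with exponent $>4$. A Hölder split across the three factors then works, provided the exponents fit within $h,h',h'',h'''$.

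\textbf{Main obstacle.} The hardest step is controlling $e^{\overline X_N}$ in high moments without assuming exponential moments of $\log Y$. The key device is the AM--GM bound $\exp(\widetilde X_N)\le\overline Y_N$, which converts it into moments of $\overline Y_N$. This leaves the extra factor $\exp(\widetilde X_N/(N-1))\le \max(\overline Y_N,\overline Y_N^{-1})^{1/(N-1)}$, which is harmless given the positive and inverse moments of $\overline Y_N$. The remaining difficulty is bookkeeping: checking that the Hölder exponents can be chosen compatibly with $h>2$ and $h',h'',h'''>4$. If they cannot, I would first try routing the product through $\PE((Y|Z|)^{h''})$, bounding $\overline{Y}_N|\overline{Z}_N|$ jointly instead of separately.
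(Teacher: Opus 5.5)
Your architecture is the paper's. You use the first-order delta method for $\sqrt N(T_N-c)$, then uniform integrability of $N(T_N-c)^2$ via $\sup_N N^{1+\delta}\PE|T_N-c|^{2(1+\delta)}<\infty$, with \eqref{eq:exponBound} applied to $e^{\overline X_N}-e^{\PE X}$. The paper gets \eqref{eq:VIMCOgmExpec} from that same uniform integrability, so your separate mean step is redundant but harmless.

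The genuine difference is how you control $e^{\overline X_N}$. Your global AM--GM bound gives $e^{\overline X_N}/\overline Y_N\le \overline Y_N^{1/(N-1)}$. By concavity, $\PE[\overline Y_N^{r/(N-1)}]\le(\PE Y)^{r/(N-1)}$ once $N-1\ge r$, so this factor has uniformly bounded moments of every order. Say this explicitly rather than appealing vaguely to $\max(\overline Y_N,\overline Y_N^{-1})$. The first term then closes by a three-factor H\"older using only $h',h'''>4$.

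The paper instead writes $e^{\overline X_N}\overline Z_N$ as an average of leave-one-out geometric means times $Y_i^{1/(N-1)}Z_i$. It bounds the former by $\frac{N}{N-1}\overline Y_N$ and $Y_i^{1/(N-1)}$ by $1+Y_i$, and so needs moments of $(1+Y)|Z|$. Your route shows that the hypothesis on $\PE((Y|Z|)^{h''})$ is dispensable.

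The step that does not close as you set it up is the middle term $e^{\PE X}(\overline Y_N^{-1}-(\PE Y)^{-1})\overline Z_N$, not the exponential one. It pairs $\overline Y_N-\PE Y$ with the uncentred $\overline Z_N$, so all of the $N^{-(1+\delta)}$ decay must come from $|\overline Y_N-\PE Y|^{2(1+\delta)}$. H\"older then needs $2(1+\delta)p_1\le h$ and $2(1+\delta)p_2\le h'$ with $1/p_1+1/p_2<1$, essentially $2/h+2/h'<1$. The hypotheses $h>2$, $h'>4$ do not give this (e.g.\ $h=5/2$, $h'=9/2$), and your contingency of routing through $\PE((Y|Z|)^{h''})$ does not address it. The fix is the paper's regrouping of your last two terms:
\[
e^{\PE X}\Bigl(\frac{\overline Z_N}{\overline Y_N}-\frac{\PE Z}{\PE Y}\Bigr)=e^{\PE X}\Bigl(\frac{\overline Z_N-\PE Z}{\overline Y_N}-\frac{\PE Z}{\PE Y}\cdot\frac{\overline Y_N-\PE Y}{\overline Y_N}\Bigr).
\]
Here each piece has a single centred mean against inverse moments of $\overline Y_N$. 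H\"older with an exponent close to $1$ on the centred factor then needs only $h,h'>2$. With that change your argument is complete.
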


\begin{proof} 
The central limit theorem paired up with the multivariate Delta method yields
\begin{align*}
\sqrt{N} \lr{\frac{\exp (\overline{X}_N)}{\overline{Y}_N} \overline{Z}_{N} - \frac{\exp(\PE(X))}{\PE(Y)} \PE(Z)} \weaklimit \mathcal{N}(0, \boldsymbol{v}^T \boldsymbol{\Sigma} \boldsymbol{v}).
\end{align*}
As a result, the desired expressions \eqref{eq:VIMCOgmExpec} and \eqref{eq:VIMCOgmVar} will follow by showing that the sequence $\lr{N \lr{\frac{\exp (\overline{X}_N)}{\overline{Y}_N} \overline{Z}_{N} - \frac{\exp(\PE(X))}{\PE(Y)} \PE(Z)}^2}_{N \geq 1}$ is uniformly integrable. Let us thus show that for a certain $\delta>0$, we have that 
\begin{align}
  \limsup_{N \to \infty} \PE \lr{  N^{1+\delta} \left|\frac{\exp (\overline{X}_N)}{\overline{Y}_N} \overline{Z}_{N} - \frac{\exp(\PE(X))}{\PE(Y)} \PE(Z)\right|^{2(1+ \delta)}} < \infty. \label{eq:lem:VIMCOgmUnif}
\end{align}
We begin by writing that 
\begin{multline*}
2^{1-2(1+\delta)} \PE \lr{\left|\frac{\exp (\overline{X}_N)}{\overline{Y}_N} \overline{Z}_{N} - \frac{\exp(\PE(X))}{\PE(Y)} \PE(Z)\right|^{2(1+ \delta)}} \\
 \leq  \PE \lr{\left|\frac{\exp (\overline{X}_N) - \exp(\PE(X))}{\overline{Y}_N} \overline{Z}_{N}\right|^{2(1+ \delta)}}  + \exp(2(1+\delta)\PE(X)) \PE \lr{ \left|\frac{\overline{Z}_{N}}{\overline{Y}_N}  - \frac{\PE(Z)}{\PE(Y)} \right|^{2(1+ \delta)}}.
\end{multline*}
Hence, \eqref{eq:lem:VIMCOgmUnif} will follow if we can show that 
\begin{align}
 & \limsup_{N \to \infty} \PE \lr{N^{1+\delta} \left|\frac{\overline{Z}_{N}}{\overline{Y}_N}  - \frac{\PE(Z)}{\PE(Y)} \right|^{2(1+ \delta)}} < \infty. \label{eq:toProveLemVIMCOgmTwo} \\
 & \limsup_{N \to \infty} \PE \lr{N^{1+\delta}\left|\frac{\exp (\overline{X}_N) - \exp(\PE(X))}{\overline{Y}_N} \overline{Z}_{N}\right|^{2(1+ \delta)}} < \infty \label{eq:toProveLemVIMCOgmOne}.
\end{align}
We prove \eqref{eq:toProveLemVIMCOgmTwo} and \eqref{eq:toProveLemVIMCOgmOne} separately.
\begin{enumerateList}

 \item Proof of \eqref{eq:toProveLemVIMCOgmTwo}. Since
 \begin{align*}
  \frac{\overline{Z}_{N}}{\overline{Y}_N}  - \frac{\PE(Z)}{\PE(Y)} = \frac{\overline{Z}_{N} - \PE(Z)}{\overline{Y}_N} - \frac{\PE(X)}{\PE(Y)}\frac{\overline{Y}_{N}-\PE(Y)}{\overline{Y}_N}, 
\end{align*}
\eqref{eq:toProveLemVIMCOgmTwo} holds if we can show that
\begin{align*}
  \PE \lr{N^{1+\delta} \left|\frac{\overline{Z}_{N}-\PE(Z)}{\overline{Y}_N} \right|^{2(1+ \delta)}} = O(1) \quad \mbox{and} \quad   \PE \lr{N^{1+\delta} \left|\frac{\overline{Y}_{N}-\PE(Y)}{\overline{Y}_N}\right|^{2(1+ \delta)}} = O(1).
\end{align*}
Letting $p,q >1$ and setting $2(1+\delta)p = h$ and $2(1+\delta)q = h'$, the two previous asymptotic bounds hold using Hölder's inequality paired up with \eqref{eq:petrov-centered} and the fact that for all $\mu >0$ : $\PE((\overline{Y}_N)^{-\mu}) = O(1)$ as $N \to \infty$ by \Cref{lem:equivlimsupconditionGen}.

 \item Proof of \eqref{eq:toProveLemVIMCOgmOne}. Using \eqref{eq:exponBound} with $(a,b) = (\overline{X}_N, \PE(X))$ yields
\begin{multline}
2^{1-2(1+\delta)}  \PE \lr{  \left|\frac{\exp (\overline{X}_N) - \exp(\PE(X))}{\overline{Y}_N} \overline{Z}_{N}\right|^{2(1+ \delta)}} \leq \PE \lr{ \lr{\frac{\exp (\overline{X}_N)|\overline{X}_N - \PE(X)|}{\overline{Y}_N} |\overline{Z}_{N}|}^{2(1+ \delta)}} \\ + \exp(2(1+\delta)\PE(X))  \PE \lr{  \lr{\frac{|\overline{X}_N - \PE(X)|}{\overline{Y}_N} |\overline{Z}_{N}|}^{2(1+ \delta)}}  \label{eq:lemVIMCOgm}.
\end{multline}
Focusing first on the first term of the right-hand side above, we get that
\begin{multline*}
  \PE \lr{  \lr{\frac{\exp (\overline{X}_N)|\overline{X}_N - \PE(X)|}{\overline{Y}_N} |\overline{Z}_{N}|}^{2(1+ \delta)}} \\ \leq \PE \lr{\lr{|\overline{X}_N - \PE(X)| \frac{1}{N} \sum_{i=1}^N \frac{\exp\lr{\frac{1}{N-1} \sum_{{j=1},{j\neq i}}^N \log Y_j}}{\overline{Y}_N} \exp\lr{\frac{1}{N-1} \log Y_i} |Z_i|}^{2(1+ \delta)}}.
\end{multline*}
Furthermore, Jensen's inequality yields  
\begin{align*}
{\exp\lr{\frac{1}{N-1} \sum_{{j=1},{j\neq i}}^N \log Y_j}} \leq \frac{1}{N-1} \sum_{{j=1},{j\neq i}}^N Y_j \leq \frac{N}{N-1} {\overline{Y}_N}
\end{align*}
and for all $i = 1 \ldots N$ we also have that $\exp\lr{\frac{1}{N-1} \log Y_i} \leq 1+ Y_i$. As a result, 
we deduce 
\begin{align*}
  \PE \lr{  \lr{\frac{\exp (\overline{X}_N)|\overline{X}_N - \PE(X)|}{|\overline{Y}_N|} |\overline{Z}_{N}|}^{2(1+ \delta)}} \leq \PE \lr{\lr{|\overline{X}_N - \PE(X)| \frac{1}{N-1}\sum_{i=1}^N (1+Y_i)|Z_i|}^{2(1+ \delta)}}.
\end{align*}
Consequently, we will obtain that 
\begin{align*}
 \limsup_{N \to \infty} \PE \lr{ N^{1+\delta} \lr{\frac{\exp (\overline{X}_N)|\overline{X}_N - \PE(X)|}{|\overline{Y}_N|} |\overline{Z}_{N}|}^{2(1+ \delta)}} < \infty
\end{align*}
if we can show that 
\begin{align*}
&  \PE(N^{1+\delta} |\overline{X}_N - \PE(X)|^{2(1+\delta)}) = O(1) \\
 & \PE \lr{ N^{1+\delta} \lr{|\overline{X}_N - \PE(X)| \times \left|\frac{1}{N}\sum_{i=1}^N (1+Y_i)|Z_i| - \PE((1+Y)|Z|) \right|}^{2(1+ \delta)}} = O(1).
\end{align*}
The first asymptotic bound follows from \eqref{eq:petrov-centered}. As for the second asymptotic bound, it follows from the Cauchy-Schwartz inequality paired up with \eqref{eq:petrov-centered}. Similarly, the second term in the right-hand side of \eqref{eq:lemVIMCOgm} satisfies 
$$
\limsup_{N\to \infty} \PE \lr{ N^{1+\delta} \lr{\frac{|\overline{X}_N - \PE(X)|}{\overline{Y}_N} |\overline{Z}_{N}|}^{2(1+ \delta)}} < \infty,
$$
and the proof of \eqref{eq:toProveLemVIMCOgmOne} is concluded.

\end{enumerateList}
\end{proof}

\begin{proof}[Proof of \Cref{thm:VarianceGM}]
We decompose $\gradVIMCOgm$ as follows
\begin{multline*}
\gradVIMCOgm \\ = \frac{1}{1-\alpha} \sum_{j = 1}^N \frac{\alpha~\w(z_j;x)^{1-\alpha}-\exp \lr{\frac{1-\alpha}{N-1} \sum_{i=1}^N \log \w(z_i;x)}}{\sum_{\ell = 1}^N \w(z_\ell;x)^{1-\alpha}} \partial_\psi \log q_\phi(z_j|x)
\\ + R_{1,N}^{(\alpha)}(\theta, \phi;x) + R_{1, N}^{'(\alpha)}(\theta, \phi;x),
\end{multline*}
where
\begin{align*}
& R_{1,N}^{(\alpha)}(\theta, \phi;x) = - \frac{1}{1-\alpha}  \sum_{i=1}^N \partial_\psi \log q_{\phi}(z_i|x) \lrb{ \log \lr{1 - g_i^{(\alpha)}(z_{1:N},\theta,\phi;x) } + g_i^{(\alpha)}(z_{1:N},\theta,\phi;x) } \\
& g_i^{(\alpha)}(z_{1:N},\theta,\phi;x) = \frac{\w(z_i;x)^{1-\alpha} - f_{-i}^{(\alpha)}(z_{1:N},\theta,\phi;x)}{\sum_{j = 1}^N \w(z_j;x)^{1-\alpha}}, \quad i = 1 \ldots N \\
& f_{-i}^{(\alpha)}(z_{1:N},\theta,\phi;x) = \exp \lr{\frac{1}{N-1} \sum_{\overset{j = 1}{{j\neq i}}}^{N} \log (\w(z_j;x)^{1-\alpha})}, \quad i = 1 \ldots N 
\end{align*}
and
\begin{multline*}
  R_{1, N}^{'(\alpha)}(\theta, \phi;x) \\ = \frac{1}{1-\alpha} \exp \lr{\frac{1-\alpha}{N-1} \sum_{i=1}^N \log \w(z_i;x)}  \sum_{j = 1}^N \frac{\exp\lr{-\frac{1-\alpha}{N-1} \log \w(z_j;x)} - 1}{\sum_{\ell = 1}^N \w(z_\ell;x)^{1-\alpha}} \partial_\psi \log q_\phi(z_j|x). 
\end{multline*}
The proof of \Cref{thm:VarianceGM} will be concluded if we can show that 
\begin{align} \label{eq:VIMCOgmRemainder}
&  \mathbb{V}\lr{R_{1,N}^{(\alpha)}(\theta, \phi;x) + R_{1, N}^{'(\alpha)}(\theta, \phi;x)} = o \lr{\frac{1}{N}} \\
& \mathbb{V} \lr{\sum_{j = 1}^N \frac{\exp \lr{\frac{1-\alpha}{N-1} \sum_{i=1}^N \log \w(z_i;x)}}{\sum_{\ell = 1}^N \w(z_\ell;x)^{1-\alpha}} \partial_\psi \log q_\phi(z_j|x)} \nonumber \\
& \qquad \qquad \qquad \qquad = \lr{\frac{\exp[(1-\alpha) \PE(\log \w(z;x))]}{\PE(\w(z;x)^{1-\alpha})}}^2 \frac{\mathbb{V}(\partial_\psi \log q_\phi(z|x))}{N} + o \lr{\frac{1}{N}} \label{eq:ToproveVIMCOgmOne}
\end{align}
that \eqref{eq:ToproveThreeVIMCOam} holds and that
\begin{multline}
\alpha \mathbb{C}\mathrm{ov} \lr{ \sum_{j = 1}^N \normW(z_j;x) \partial_\psi \log q_\phi(z_j|x)   , \sum_{j = 1}^N \frac{\exp \lr{\frac{1-\alpha}{N-1} \sum_{i=1}^N \log \w(z_i;x)}}{\sum_{\ell = 1}^N \w(z_\ell;x)^{1-\alpha}} \partial_\psi \log q_\phi(z_j|x)} \\
 = \frac{1}{N} \mathbb{C}\mathrm{ov}\lr{U, V} + o \lr{\frac{1}{N}} \label{eq:ToproveVIMCOgmTwo}
\end{multline}
where 
\begin{align*}
 & U =  \frac{\alpha~\w(z;x)^{1-\alpha}}{\PE(\w(z;x)^{1-\alpha})} \biggl[ \partial_\psi \log q_{\phi}(z|x)   - \frac{\PE(\w(z;x)^{1-\alpha} \partial_\psi \log q_{\phi}(z|x))}{\PE(\w(z;x)^{1-\alpha})}    \biggr] \\
 & V =  \frac{\exp[(1-\alpha) \PE(\log \w(z;x))]}{\PE(\w(z;x)^{1-\alpha})}  \partial_\psi \log q_\phi(z|x).
\end{align*}


\begin{enumerateList}
  \item Proof of \eqref{eq:VIMCOgmRemainder}. We begin by writing that 
\begin{align*} \mathbb{V}\lr{R_{1,N}^{(\alpha)}(\theta, \phi;x) + R_{1, N}^{'(\alpha)}(\theta, \phi;x)} & \leq \PE\lr{\lr{R_{1,N}^{(\alpha)}(\theta, \phi;x) + R_{1, N}^{'(\alpha)}(\theta, \phi;x)}^2} \\
  & \leq 2 \lr{\PE(R_{1,N}^{(\alpha)}(\theta, \phi;x)^2) + \PE(R_{1,N}^{'(\alpha)}(\theta, \phi;x)^2)}. 
\end{align*}
On the one hand, \Cref{lem:LogTaylorBound} with $k=1$ yields: for all $i = 1 \ldots N$,
\begin{multline*}
 \left| \log \lr{1 - g_i^{(\alpha)}(z_{1:N},\theta,\phi;x) } + g_i^{(\alpha)}(z_{1:N},\theta,\phi;x)  \right| \\  \leq \frac{1}{2} \lr{ g_i^{(\alpha)}(z_{1:N},\theta,\phi;x)}^2 \lr{1 + \frac{1}{1 -  g_i^{(\alpha)}(z_{1:N},\theta,\phi;x)}}
\end{multline*}
and since $f_{-i}^{(\alpha)}(z_{1:N},\theta,\phi;x) \geq 0$, we deduce
\begin{multline}
 \left| \log \lr{1 - g_i^{(\alpha)}(z_{1:N},\theta,\phi;x) } + g_i^{(\alpha)}(z_{1:N},\theta,\phi;x)  \right| \\ \leq \frac{1}{2} \lr{ g_i^{(\alpha)}(z_{1:N},\theta,\phi;x) }^2 \lr{1+ \frac{\sum_{j = 1}^N \w(z_j;x)^{1-\alpha}}{\sum_{\overset{j = 1}{{j\neq i}}}^{N} \w(z_j;x)^{1-\alpha}}}. \label{eq:VIMCOgmInterBoundRemainder}
\end{multline}
On the other hand, \eqref{eq:exponBound} with $(a,b) = (-\frac{1-\alpha}{N-1} \log \w(z_j;x),0)$ implies that for all $j = 1 \ldots N$, 
\begin{multline}
 \left| \exp\lr{-\frac{1-\alpha}{N-1} \log \w(z_j;x)} - 1 \right| \\ \leq \lr{\exp\lr{-\frac{1-\alpha}{N-1} \log \w(z_j;x)} +1} \left|\frac{1-\alpha}{N-1} \log \w(z_j;x) \right|. \label{eq:VIMCOgmInterBoundRemainderTwo}
\end{multline}
We thus deduce from \eqref{eq:VIMCOgmInterBoundRemainder} and \eqref{eq:VIMCOgmInterBoundRemainderTwo} that 
\begin{align}
 (1-\alpha)^2 \mathbb{V}\lr{R_{1,N}^{(\alpha)}(\theta, \phi;x) + R_{1, N}^{'(\alpha)}(\theta, \phi;x)} \leq A_{N}^{(\alpha)}(\theta, \phi;x) + B_{N}^{(\alpha)}(\theta\phi;x) \label{eq:VIMCOgmRemainderBound}
\end{align}
where 
\begin{multline*}
 2 A_{N}^{(\alpha)}(\theta, \phi;x) \\ = \PE \lr{ \lr{     \sum_{i=1}^N \lr{ g_i^{(\alpha)}(z_{1:N},\theta,\phi;x) }^2 |\partial_\psi \log q_{\phi}(z_i|x)|  \lr{1+ \frac{\sum_{j = 1}^N \w(z_j;x)^{1-\alpha}}{\sum_{\overset{j = 1}{{j\neq i}}}^{N} \w(z_j;x)^{1-\alpha}}}}^2  }
\end{multline*}
and
\begin{multline*}
  B_{N}^{(\alpha)}(\theta\phi;x) = 2\PE\left( \left[  \exp \lr{\frac{1-\alpha}{N-1} \sum_{i=1}^N \log \w(z_i;x)} \right. \right.  \\ \times \left. \left. \sum_{j = 1}^N \frac{\lr{\exp\lr{-\frac{1-\alpha}{N-1} \log \w(z_j;x)}+1}  \left|\frac{1-\alpha}{N-1} \log \w(z_j;x)\right|}{\sum_{\ell = 1}^N \w(z_\ell;x)^{1-\alpha}} |\partial_\psi \log q_\phi(z_j|x)|\right]^2 \right).
\end{multline*}
We now study $A_{N}^{(\alpha)}(\theta, \phi;x) $ and $B_{N}^{(\alpha)}(\theta, \phi;x)$ separately and show that they are both $o(1/N)$.
\paragraph{Study of $A_{N}^{(\alpha)}(\theta, \phi;x) $.} 
First note that by Jensen's inequality: for all $i = 1 \ldots N$,
\begin{align}
  f_{-i}^{(\alpha)}(z_{1:N},\theta,\phi;x) \leq \frac{1}{N-1} \sum_{\overset{j = 1}{j\neq i}}^N \w(z_j;x)^{1-\alpha} \leq \frac{1}{N-1} \sum_{{j = 1}}^N \w(z_j;x)^{1-\alpha}. \label{eq:Jensenfminusi}
\end{align}
Since $(\w(z_i;x)^{1-\alpha} - f_{-i}^{(\alpha)}(z_{1:N},\theta,\phi;x))^2 \leq 2 \lr{\w(z_i;x)^{2(1-\alpha)} +f_{-i}^{(\alpha)}(z_{1:N},\theta,\phi;x)^2}$, we thus have that

\begin{align}
& A_{N}^{(\alpha)}(\theta, \phi;x)
\nonumber \\ 
& \leq 4 \PE \lr{ \lr{    \sum_{i=1}^N \lr{\normW(z_i;x)}^2 |\partial_\psi \log q_{\phi}(z_i|x)|  \lr{1+ \frac{\sum_{j = 1}^N \w(z_j;x)^{1-\alpha}}{\sum_{\overset{j = 1}{{j\neq i}}}^{N} \w(z_j;x)^{1-\alpha}}}}^2  } \nonumber \\ 
& \quad \frac{4}{(N-1)^4} \PE \lr{ \lr{ \sum_{i=1}^N |\partial_\psi \log q_{\phi}(z_i|x)|  \lr{1+ \frac{\sum_{j = 1}^N \w(z_j;x)^{1-\alpha}}{\sum_{\overset{j = 1}{{j\neq i}}}^{N} \w(z_j;x)^{1-\alpha}}}}^2  }. \label{eq:interVIMCOgm}
\end{align}
From there, \Cref{lem:ControlCNAlpha} with $(\ell,\eta) = (1,0)$ then implies that the first term in the right-hand side of \eqref{eq:interVIMCOgm} is $o(1/N)$. As for the second term in the right-hand side of \eqref{eq:interVIMCOgm}, writing that
\begin{align*}
1+ \frac{\sum_{j = 1}^N \w(z_j;x)^{1-\alpha}}{\sum_{\overset{j = 1}{{j\neq i}}}^{N} \w(z_j;x)^{1-\alpha}} = 2 + \frac{\w(z_i;x)^{1-\alpha}}{\sum_{\overset{j = 1}{{j\neq i}}}^{N} \w(z_j;x)^{1-\alpha}}
\end{align*}
we deduce
\begin{multline*}
  \PE \lr{ \lr{\sum_{i=1}^N |\partial_\psi \log q_{\phi}(z_i|x)|  \lr{1+ \frac{\sum_{j = 1}^N \w(z_j;x)^{1-\alpha}}{\sum_{\overset{j = 1}{{j\neq i}}}^{N} \w(z_j;x)^{1-\alpha}}}}^2  } \\ \leq  8 \PE \lr{ \lr{ \sum_{i=1}^N |\partial_\psi \log q_{\phi}(z_i|x)|}^2 } + 2 \PE \lr{ \lr{\sum_{i=1}^N   \frac{\w(z_i;x)^{1-\alpha} |\partial_\psi \log q_{\phi}(z_i|x)|}{\sum_{\overset{j = 1}{{j\neq i}}}^{N} \w(z_j;x)^{1-\alpha}}}^2  }.
\end{multline*}
Now using that $(\sum_{i=1}^N u_i)^2 \leq N \sum_{i=1}^N u_i^2$ by the Cauchy-Schwartz inequality, we get
\begin{align*}
&  \PE \lr{ \lr{ \sum_{i=1}^N |\partial_\psi \log q_{\phi}(z_i|x)|}^2 } \leq N^2 \mathbb{V}\lr{\partial_\psi \log q_{\phi}(z|x)} \\
& \PE \lr{ \lr{\sum_{i=1}^N   \frac{\w(z_i;x)^{1-\alpha} |\partial_\psi \log q_{\phi}(z_i|x)|}{\sum_{\overset{j = 1}{{j\neq i}}}^{N} \w(z_j;x)^{1-\alpha}}}^2  } \leq N   \PE \lr{ \sum_{i=1}^N  \lr{ \frac{\w(z_i;x)^{1-\alpha} |\partial_\psi \log q_{\phi}(z_i|x)|}{\sum_{\overset{j = 1}{{j\neq i}}}^{N} \w(z_j;x)^{1-\alpha}}}^2  }.
\end{align*}
Furthermore, since $z_1, \ldots z_N$ are i.i.d. and denoting $\overline{Y}_{N} = N^{-1} \sum_{i=1}^N \w(z_i;x)^{1-\alpha}$, we get
\begin{align*}
\PE \lr{ \sum_{i=1}^N  \lr{ \frac{\w(z_i;x)^{1-\alpha} |\partial_\psi \log q_{\phi}(z_i|x)|}{\sum_{\overset{j = 1}{{j\neq i}}}^{N} \w(z_j;x)^{1-\alpha}}}^2} \leq N \PE \lr{ \lr{{\w(z;x)^{1-\alpha} |\partial_\psi \log q_{\phi}(z|x)|}}^2  } \PE \lr{\overline{Y}_{N-1}^{-2}}.
\end{align*} 
Under our assumptions (i) $\PE \lr{ \lr{{\w(z;x)^{1-\alpha} |\partial_\psi \log q_{\phi}(z|x)|}}^2  } < \infty$ (ii) for all $\mu >0$ : as $N \to \infty$, $\PE((\overline{Y}_N)^{-\mu}) = O(1)$ by \Cref{lem:equivlimsupconditionGen}, we obtain by combining with the above that 
\begin{align}
A_{N}^{(\alpha)}(\theta, \phi;x) = o \lr{\frac{1}{N}}. \label{eq:ANVIMCOgm}
\end{align}

\paragraph{Study of $B_{N}^{(\alpha)}(\theta, \phi;x)$.} Write that
\begin{multline}
 2^{-1} B_{N}^{(\alpha)}(\theta,\phi;x) \leq \PE\left( \left( \sum_{j = 1}^N \frac{f_{-j}^{(\alpha)}(z_{1:N},\theta,\phi;x) \left|\frac{1-\alpha}{N-1} \log \w(z_j;x)\right|}{\sum_{\ell = 1}^N \w(z_\ell;x)^{1-\alpha}} |\partial_\psi \log q_\phi(z_j|x)|\right)^2 \right) \\ + \PE\left( \left(  \exp \lr{\frac{1-\alpha}{N-1} \sum_{i=1}^N \log \w(z_i;x)} \sum_{j = 1}^N \frac{\left|\frac{1-\alpha}{N-1} \log \w(z_j;x)\right|}{\sum_{\ell = 1}^N \w(z_\ell;x)^{1-\alpha}} |\partial_\psi \log q_\phi(z_j|x)|\right)^2 \right). \label{eq:StudyBNgm}
\end{multline}
Using \eqref{eq:Jensenfminusi} in the first term of the right-hand side above, we obtain that 
\begin{multline*}
\PE\left( \left( \sum_{j = 1}^N \frac{f_{-j}^{(\alpha)}(z_{1:N},\theta,\phi;x) \left|\frac{1-\alpha}{N-1} \log \w(z_j;x)\right|}{\sum_{\ell = 1}^N \w(z_\ell;x)^{1-\alpha}} |\partial_\psi \log q_\phi(z_j|x)|\right)^2 \right) \\ 
\leq \frac{(1-\alpha)^2}{(N-1)^4} \PE\left( \left( \sum_{j = 1}^N \left|\log \w(z_j;x) \partial_\psi \log q_\phi(z_j|x)\right|\right)^2 \right)
\end{multline*}
and next using the fact that $(\sum_{i=1}^N u_i)^2 \leq N \sum_{i=1}^N u_i^2$ thanks to the Cauchy-Schwartz inequality, we deduce that 
\begin{multline*}
\PE\left( \left( \sum_{j = 1}^N \frac{f_{-j}^{(\alpha)}(z_{1:N},\theta,\phi;x) \left|\frac{1-\alpha}{N-1} \log \w(z_j;x)\right|}{\sum_{\ell = 1}^N \w(z_\ell;x)^{1-\alpha}} |\partial_\psi \log q_\phi(z_j|x)|\right)^2 \right) 
\\ \leq (1-\alpha)^2\frac{N^2}{(N-1)^4} \PE\left(|\log \w(z;x) \partial_\psi \log q_\phi(z|x)|^2 \right)
\end{multline*}
and thus the first term of the right-hand side of \eqref{eq:StudyBNgm} is $o(1/N)$. 

Finally, denoting $X = (1-\alpha) \log \w(z;x)$, $\overline{X}_N = (N-1)^{-1} \sum_{i=1}^N (1-\alpha) \log \w(z_i;x)$, $Y = \w(z;x)^{1-\alpha}$, $\overline{Y}_N = N^{-1} \sum_{i=1}^N \w(z_i;x)^{1-\alpha}$, $Z =  |\log \w(z,x) \partial_\psi \log q_\phi(z|x)|$ and $\overline{Z}_{N} = N^{-1} \sum_{i=1}^N |\log \w(z_i;x) \partial_\psi \log q_\phi(z_i|x)|$, \Cref{lem:VIMCOgm} implies that the second term on the right-hand side of \eqref{eq:StudyBNgm} is $o(1/N)$ and we have thus obtained that $B_N^{(\alpha)}(\theta, \phi;x) = o (1/N)$. Pairing up this result with \eqref{eq:VIMCOgmRemainderBound} and \eqref{eq:ANVIMCOgm} yields \eqref{eq:VIMCOgmRemainder}.

\item Proof of \eqref{eq:ToproveVIMCOgmOne}. Denoting $X = (1-\alpha) \log \w(z;x)$, $\overline{X}_N = (N-1)^{-1} \sum_{i=1}^N (1-\alpha) \log \w(z_i;x)$, $Y = \w(z;x)^{1-\alpha}$, $\overline{Y}_N = N^{-1} \sum_{i=1}^N \w(z_i;x)^{1-\alpha}$, $Z = \partial_\psi \log q_\phi(z|x)$ and $\overline{Z}_{N} = N^{-1} \sum_{i=1}^N \partial_\psi \log q_\phi(z_i|x)$, \eqref{eq:ToproveVIMCOgmOne} follows from \Cref{lem:VIMCOgm} paired up with the fact that $\PE(Z) = 0$ and thus $\boldsymbol{v}^T \boldsymbol{\Sigma} \boldsymbol{v} = \mathbb{V}(Z)$.

\item Proof of \eqref{eq:ToproveThreeVIMCOam} and \eqref{eq:ToproveVIMCOgmTwo}. First note that \eqref{eq:ToproveThreeVIMCOam} and \eqref{eq:ToproveVIMCOgmTwo} are immediate when $\alpha = 0$, so we only need to show them when $\alpha \in (0,1)$. Letting $\alpha \in (0,1)$ from now on, we can apply \eqref{eq:prop:handleRNsnrVarAnalysis} from \Cref{prop:handleRN} under our assumptions, which yields \eqref{eq:ToproveThreeVIMCOam}. Now denote $Y = \w(z;x)^{1-\alpha}$, $Z = \partial_\psi \log q_\phi(z|x)$, $X = YZ$, $X' = \log Y$ as well as $\overline{Y}_N = N^{-1} \sum_{i=1}^N \w(z_i;x)^{1-\alpha}$, $\overline{Z}_N = N^{-1} \sum_{i=1}^N \partial_\psi \log q_\phi(z_i|x)$, $\overline{X}_N = N^{-1} \sum_{i=1}^N \w(z_i;x)^{1-\alpha}\partial_\psi \log q_\phi(z_i|x)$ and $\overline{X}'_N = N^{-1} \sum_{i=1}^N \log (\w(z_i;x)^{1-\alpha})$. Observe next that 
\begin{multline*}
  \mathbb{C}\mathrm{ov} \lr{ \frac{\overline{X}_N}{\overline{Y}_N} , \frac{\exp \lr{\frac{N}{N-1} \overline{X}'_N} }{\overline{Y}_N} \overline{Z}_N } = \PE \lr{{\overline{X}_N \overline{Z}_N}  \frac{\exp \lr{\frac{N}{N-1} \overline{X}'_N}}{\overline{Y}_N^2}} \\ - \PE \lr{ \frac{\overline{X}_N}{\overline{Y}_N}} \PE\lr{\frac{\exp \lr{\frac{N}{N-1} \overline{X}'_N} }{\overline{Y}_N} \overline{Z}_N }.
\end{multline*}
Notice also that 
and thus that 
\begin{align*}
 & \PE \lr{{\overline{X}_N \overline{Z}_N}  \frac{\exp \lr{\frac{N}{N-1} \overline{X}'_N}}{\overline{Y}_N^2}}  = \frac{1}{N} \frac{\exp(\PE(X'))}{\PE(Y)^2} \lr{\PE(XZ) - \frac{\PE(X)}{\PE(Y)} \PE(ZY)} \\ & \qquad  + \PE \lr{{(\overline{X}_N-\PE(X)) \overline{Z}_N}  \lr{\frac{\exp \lr{\frac{N}{N-1} \overline{X}'_N}}{\overline{Y}_N^2} -  \frac{\exp(\PE(X'))}{\PE(Y)^2}  }} \\
  & \qquad  + \frac{\PE(X)}{\PE(Y)} \PE \lr{{\overline{Z}_N}  \frac{\exp \lr{\frac{N}{N-1} \overline{X}'_N}}{\overline{Y}_N}} \\
  & \qquad - \frac{\PE(X)}{\PE(Y)} \PE \lr{{\overline{Z}_N(\overline{Y}_N - \PE(Y))}  \lr{\frac{\exp \lr{\frac{N}{N-1} \overline{X}'_N}}{\overline{Y}_N^2} - \frac{\exp(\PE(X'))}{\PE(Y)^2}}    },
\end{align*}
so that using that $X = ZY$ 
\begin{align*}
  \mathbb{C}\mathrm{ov} \lr{ \frac{\overline{X}_N}{\overline{Y}_N} , \frac{\exp \lr{\frac{N}{N-1} \overline{X}'_N} }{\overline{Y}_N} \overline{Z}_N } = \frac{1}{N} \frac{\exp(\PE(X'))}{\PE(Y)^2} \lr{\PE(XZ) - \frac{\PE(X)^2}{\PE(Y)}} + R_N,
\end{align*}
with 
\begin{multline*}
  R_N = \PE \lr{{\lrb{(\overline{X}_N-\PE(X))  - \frac{\PE(X)}{\PE(Y)} (\overline{Y}_N - \PE(Y))} \overline{Z}_N}  \lr{\frac{\exp \lr{\frac{N}{N-1} \overline{X}'_N}}{\overline{Y}_N^2} -  \frac{\exp(\PE(X'))}{\PE(Y)^2}  }} \\
   + \lrb{ \frac{\PE(X)}{\PE(Y)} - \PE\lr{\frac{\overline{X}_N}{\overline{Y}_N}}} \PE \lr{{\overline{Z}_N}  \frac{\exp \lr{\frac{N}{N-1} \overline{X}'_N}}{\overline{Y}_N}}.
\end{multline*}
Using \eqref{eq:prop:handleRNsnr1plus1N} from \Cref{prop:handleRN}, we have that 
\begin{align*}
\PE \lr{\frac{\overline{X}_N}{ \overline{Y}_N} } = \frac{\PE(X)}{\PE(Y)}+ \frac1N\lr{\frac{\PE(X)\,\mathbb{V}\lr{Y}}{\lr{\PE(Y)}^3}-\frac{\mathbb{C}\mathrm{ov}\lr{X,Y}}{\lr{\PE(Y)}^2}}+o\lr{\frac1N}
\end{align*}
while \Cref{lem:VIMCOgm} gives 
\begin{align*}
  \PE \lr{{\overline{Z}_N}  \frac{\exp \lr{\frac{N}{N-1} \overline{X}'_N}}{\overline{Y}_N^2}} = o(1),
\end{align*}
where we have used that $\PE(Z) = 0$. Consequently, the second term in the right-hand side of $R_N$ is $o(1/N)$. The proof of \eqref{eq:ToproveVIMCOgmTwo} will be concluded if we can show that the first term of right-hand side of $R_N$ is $o(1/N)$ too, since using our notation
\begin{align*}
  \mathbb{C}\mathrm{ov}(U,V) = \frac{\exp(\PE(X'))}{\PE(Y)^2} \lr{\PE(XZ) - \frac{\PE(X)^2}{\PE(Y)}}. 
\end{align*}
Writing that
\begin{align*}
  & \PE \lr{{\lrb{(\overline{X}_N-\PE(X))  - \frac{\PE(X)}{\PE(Y)} (\overline{Y}_N - \PE(Y))} \overline{Z}_N}  \lr{\frac{\exp \lr{\frac{N}{N-1} \overline{X}'_N}}{\overline{Y}_N^2} -  \frac{\exp(\PE(X'))}{\PE(Y)^2}  }} \\
  & = \PE \lr{{\lrb{(\overline{X}_N-\PE(X))  - \frac{\PE(X)}{\PE(Y)} (\overline{Y}_N - \PE(Y))} \overline{Z}_N} {\frac{\exp \lr{\frac{N}{N-1} \overline{X}'_N}-\exp(\PE(X'))}{\overline{Y}_N^2}}} \\
  & \qquad - \frac{\exp(\PE(X'))}{\PE(Y)^2} \PE \lr{{\lrb{(\overline{X}_N-\PE(X))  - \frac{\PE(X)}{\PE(Y)} (\overline{Y}_N - \PE(Y))} \overline{Z}_N}  \lr{\frac{\overline{Y}_N^2 - \PE(Y)^2}{\overline{Y}_N^2}}}, 
\end{align*} 
we will show that both terms in the right-hand side above are $o(1/N)$. For the second term of the right-hand side above, we write that
\begin{align*}
& \PE \lr{{\lrb{(\overline{X}_N-\PE(X))  - \frac{\PE(X)}{\PE(Y)} (\overline{Y}_N - \PE(Y))} \overline{Z}_N}  \lr{\frac{\overline{Y}_N^2 - \PE(Y)^2}{\overline{Y}_N^2}}} \\
& = \PE \lr{{\lrb{(\overline{X}_N-\PE(X))  - \frac{\PE(X)}{\PE(Y)} (\overline{Y}_N - \PE(Y))} \overline{Z}_N} {\frac{\overline{Y}_N - \PE(Y)}{\overline{Y}_N}}} \\
& \qquad + \PE(Y) \PE \lr{{\lrb{(\overline{X}_N-\PE(X))  - \frac{\PE(X)}{\PE(Y)} (\overline{Y}_N - \PE(Y))} \overline{Z}_N}  {\frac{\overline{Y}_N - \PE(Y)}{\overline{Y}_N^2}}},
\end{align*}
so this is $o(1/N)$ if we assume enough moments for $\overline{X}_N$, $\overline{Y}_N$ and $\overline{Z}_N$. For the first term in the right-hand side, \eqref{eq:exponBound} implies 
\begin{multline*}
\left| \exp \lr{\frac{N}{N-1} \overline{X}'_N}-\exp(\PE(X')) \right| \\ \leq \left[\exp \lr{\frac{N}{N-1} \overline{X}'_N} + \exp(\PE(X')) \right] \left| \frac{N}{N-1} \overline{X}'_N - \PE(X') \right|.
\end{multline*}
Writing that
\begin{align*}
  \exp \lr{\frac{N}{N-1} \overline{X}'_N} & =  \exp \lr{\overline{X}'_N}  \exp \lr{\frac{1}{N-1} \overline{X}'_N} \leq \exp \lr{\overline{X}'_N} \lr{ \exp \lr{\overline{X}'_N} +1} \\
  & \leq \overline{Y}_N (\overline{Y}_N + 1)
\end{align*}
we deduce that 
\begin{align*}
\left| \exp \lr{\frac{N}{N-1} \overline{X}'_N}-\exp(\PE(X')) \right|  \leq \left[\overline{Y}_N + \overline{Y}_N^2 + \exp(\PE(X')) \right] \left| \frac{N}{N-1} \overline{X}'_N - \PE(X') \right|.
\end{align*}
so the first term of the right-hand side is $o(1/N)$ if we assume enough moments for $\overline{X}_N'$, $\overline{X}_N$, $\overline{Y}_N$ and $\overline{Z}_N$.

\end{enumerateList}
\end{proof}

\subsection{Proof of \Cref{prop:hintCV}}

\label{app:prop:hintCV}

Using that $f_{-i}^{(\alpha)}(z_{1:N},\theta,\phi;x)$ is constant equal to $\eta\geq 0$ for all $i = 1 \ldots N$ in \eqref{eq:VIMCOgenVRIWAE} yields
\begin{multline*}
  \gradVIMCO = - \sum_{j = 1}^N \normW(z_j;x) \partial_\psi \log  q_\phi(z_j|x) \\
  -\frac{1}{1-\alpha}  \sum_{i=1}^N {\partial_\psi \log q_{\phi}(z_i|x)  \log \lr{1 -  \normW(z_i;x) + \frac{\eta}{\sum_{j = 1}^N \w(z_j;x)^{1-\alpha}}}}
\end{multline*}
which we can rewrite as 
\begin{multline}
  \gradVIMCO = \frac{1}{1-\alpha} \sum_{j = 1}^N \frac{\alpha~\w(z_j;x)^{1-\alpha} - \eta}{\sum_{\ell = 1}^N \w(z_\ell;x)^{1-\alpha}} \partial_\psi \log q_\phi(z_j|x) \\
  - \frac{1}{1-\alpha}  \sum_{i=1}^N \partial_\psi \log q_{\phi}(z_i|x) \lrb{ \log \lr{1 - \frac{\w(z_i;x)^{1-\alpha} - \eta}{\sum_{j = 1}^N \w(z_j;x)^{1-\alpha}}} + \frac{\w(z_i;x)^{1-\alpha} - \eta}{\sum_{j = 1}^N \w(z_j;x)^{1-\alpha}}}. \label{eq:gradVIMCOcv}
\end{multline}
By \eqref{eq:prop:handleRNsnrVarAnalysis} from \Cref{prop:handleRN}, we have that 
\begin{align*}
\mathbb{V}\lr{\frac{1}{1-\alpha} \sum_{j = 1}^N \frac{\alpha~\w(z_j;x)^{1-\alpha} - \eta}{\sum_{\ell = 1}^N \w(z_\ell;x)^{1-\alpha}} \partial_\psi \log q_\phi(z_j|x)} = \frac{\vVIMCOgen(\theta, \phi;x)}{N} + o \lr{\frac{1}{N}}
\end{align*}
with $\vVIMCOgen(\theta, \phi;x)$ defined as in \Cref{prop:hintCV} and using that $\PE(\partial_\psi \log q_\phi(z|x)) = 0$. The proof will be concluded if we can show that the variance of the second term in the r.h.s. of \eqref{eq:gradVIMCOcv} is $o(1/N)$. Denoting 
\begin{align*}
  B_N^{(\alpha)} = \mathbb{V} \lr{  \sum_{i=1}^N \partial_\psi \log q_{\phi}(z_i|x) \lrb{ \log \lr{1 - \frac{\w(z_i;x)^{1-\alpha} - \eta}{\sum_{j = 1}^N \w(z_j;x)^{1-\alpha}}} + \frac{\w(z_i;x)^{1-\alpha} - \eta}{\sum_{j = 1}^N \w(z_j;x)^{1-\alpha}}}},
\end{align*}
we first write that 
\begin{align*}
   | B_N^{(\alpha)}| & \leq \PE \lr{ \lr{  \sum_{i=1}^N \partial_\psi \log q_{\phi}(z_i|x) \lrb{ \log \lr{1 - \frac{\w(z_i;x)^{1-\alpha} - \eta}{\sum_{j = 1}^N \w(z_j;x)^{1-\alpha}}} + \frac{\w(z_i;x)^{1-\alpha} - \eta}{\sum_{j = 1}^N \w(z_j;x)^{1-\alpha}}}}^2 } \\
   & \leq \frac{1}{4}  \PE \lr{ \lr{  \sum_{i=1}^N \lr{\frac{\w(z_i;x)^{1-\alpha} - \eta}{\sum_{j = 1}^N \w(z_j;x)^{1-\alpha}}}^2 |\partial_\psi \log q_{\phi}(z_i|x)| \lr{1+ \frac{1}{1- \frac{\w(z_i;x)^{1-\alpha} - \eta}{\sum_{j = 1}^N \w(z_j;x)^{1-\alpha}}}}}^2 } \\
   & \leq \frac{1}{4}  \PE \lr{ \lr{  \sum_{i=1}^N \lr{\frac{\w(z_i;x)^{1-\alpha} - \eta}{\sum_{j = 1}^N \w(z_j;x)^{1-\alpha}}}^2 |\partial_\psi \log q_{\phi}(z_i|x)| \lr{1+ \frac{1}{1- \normW(z_i;x)}}}^2 }
\end{align*}
where we have used successively (i) \eqref{eq:LogNthOrderTwo} with $r = 1 -  (\w(z_i;x)-\eta)/{\sum_{j = 1}^N \w(z_j;x) }$ and (ii) $\w(z_i;x)^{1-\alpha} - \eta \leq \w(z_i;x)^{1-\alpha}$ since $\eta \geq 0$. Hence, 
\begin{multline*}
   | B_N^{(\alpha)}| \leq \PE \lr{ \lr{  \sum_{i=1}^N \lr{\frac{\w(z_i;x)^{1-\alpha} - \eta}{\sum_{j = 1}^N \w(z_j;x)^{1-\alpha}}}^2 |\partial_\psi \log q_{\phi}(z_i|x)|}^2 } \\
   + \PE \lr{ \lr{  \sum_{i=1}^N \lr{\frac{\w(z_i;x)^{1-\alpha} - \eta}{\sum_{j = 1}^N \w(z_j;x)^{1-\alpha}}}^2 |\partial_\psi \log q_{\phi}(z_i|x)| \lr{1+ \frac{1}{1- \normW(z_i;x)   }}}^2 }
\end{multline*}
Applying \eqref{eq:prop:handleRNsnr1} from \Cref{prop:handleRN} (paired up with \eqref{eq:prop:handleRNsnrVarAnalysis} from \Cref{prop:handleRN}) shows that the first term in the right-hand side above is $o(1/N)$, while \Cref{lem:ControlCNAlpha} with $\ell = 1$ shows that the second term in the right-hand side above is $o\lr{1/N}$. The proof is thus concluded.

\subsection{Proof of \Cref{thm:optimalVarReduction}}
\label{app:subsec:thm:optimalVarReduction}

We first write that
\begin{align*}
   \gradVIMCOstar[N][\psi][0] = \frac{1}{2} \sum_{j = 1}^N \lr{ \normW[\theta][\phi][0](z_j;x)}^2 \partial_\psi \log q_\phi(z_j|x) + R_{2,N}^{(0)}(\theta, \phi;x),
\end{align*}
where 
\begin{multline*}
 R_{2,N}^{(0)}(\theta, \phi;x) =  - \frac{1}{1-\alpha}  \sum_{i=1}^N \partial_\psi \log q_{\phi}(z_i|x) \left[ \log \lr{1 - \normW[\theta][\phi][0](z_i;x)} + \normW[\theta][\phi][0](z_i;x) \right. \\ \left. + \frac{1}{2} (\normW[\theta][\phi][0](z_i;x))^2 \right].
\end{multline*}
 Under our assumptions, we can apply \eqref{eq:prop:handleRNsnrVarAnalysis} from \Cref{prop:handleRN} and we have that
\begin{align*}
  \mathbb{V}\lr{\sum_{j = 1}^N \lr{\normW[\theta][\phi][0](z_j;x)}^2 \frac{1}{2} \partial_\psi \log q_\phi(z_j|x)}  = \frac{\vVIMCOstar[0](\theta,\phi;x)}{N^3} + o\lr{\frac{1}{N^3}}.
\end{align*}
Denoting 
\begin{align*}
B_{N}^{(\alpha)}: = \mathbb{V} \left(\sum_{i=1}^N \partial_\psi \log q_{\phi}(z_i|x) \left[\log \lr{1 - \normW[\theta][\phi][0](z_i;x)}  \right. \right. + \normW[\theta][\phi][0](z_i;x)
\left. \left. + \frac12\lr{\normW[\theta][\phi][0](z_i;x)}^2\right] \right),
\end{align*}
we will get the desired result if we can show that $B_{N}^{(\alpha)} = o \lr{1/{N^3}}$ with $\alpha = 0$. Since
\begin{align*}
  B_{N}^{(\alpha)} \leq \PE \left( \left(\sum_{i=1}^N \partial_\psi \log q_{\phi}(z_i|x) \left[\log \lr{1 - \normW[\theta][\phi][0](z_i;x)} + \normW[\theta][\phi][0](z_i;x) \right. \right. \right.
\left. \left. \left. + \frac{1}{2} \lr{\normW[\theta][\phi][0](z_i;x)}^2\right] \right)^2 \right),
\end{align*}
setting $r = 1 -  \normW[\theta][\phi][0](z_i;x)$, we have that $r> 0$ and \eqref{eq:LogNthOrderTwo} from \Cref{lem:LogTaylorBound} with $k = 2$ implies 
\begin{align*}
  |B_{N}^{(\alpha)}| \leq \frac{1}{9} \PE \lr{ \lr{\sum_{i=1}^N \lr{\normW[\theta][\phi][0](z_i;x)}^3 |\partial_\psi \log q_{\phi}(z_i|x)| \lrb{1+ \frac{1}{1 - \normW[\theta][\phi][0](z_i;x)}} }^2  }.
\end{align*}
As a result,  
\begin{multline}
    |B_{N}^{(\alpha)}| \leq \PE \lr{ \lr{\sum_{i=1}^N \lr{\normW[\theta][\phi][0](z_i;x)}^3 |\partial_\psi \log q_{\phi}(z_i|x)|}^2} \\ + \PE\lr{ \lr{\sum_{i=1}^N \lr{\normW[\theta][\phi][0](z_i;x)}^3 |\partial_\psi \log q_{\phi}(z_i|x)| \frac{1}{1 - \normW[\theta][\phi][0](z_i;x)} }^2  }. \label{eq:interREINF1overNCubeAlpha}
\end{multline}
Denoting $X= w(z;x)^{3 } |\partial_\psi \log q_{\phi}(z|x)|$, $\overline{X}_N = 1/N \sum_{i=1}^N \w(z_i;x)^{3 } |\partial_\psi \log q_{\phi}(z_i|x)|$, $Y = \w(z;x) $ and $\overline{Y}_N = 1/N \sum_{i=1}^N \w(z_i;x) $, the first term in the right-hand side of \eqref{eq:interREINF1overNCubeAlpha} becomes 
\begin{align*}
\frac{1}{N^4} \PE \lr{\lr{\frac{\overline{X}_N}{\overline{Y}_N^3}}^2} 
\end{align*}
Now write that
\begin{align*}
\PE\lr{\lr{\frac{\overline X_N}{\overline Y_N^3}}^2} = {\PE(X)^2} { \mathbb{E}\lr{\frac{1}{\overline Y_N^6}}} + 2\PE(X)\PE \lr{\frac{\overline X_N-\PE(X)}{\overline Y_N^6}} + \PE \lr{\frac{(\overline X_N-\PE(X))^2}{\overline Y_N^6}}.
\end{align*}
Under our assumptions and applying \Cref{lem:petrov}, there exists $p>1$ such that 
\begin{align}
& \lim_{N \to \infty }\PE(|\overline{X}_N-\PE(X)|^{p}) = 0 \\
&  \lim_{N \to \infty }\PE(|\overline{X}_N-\PE(X)|^{2p}) = 0
\end{align}
and for all $\mu >0$ : as $N \to \infty$, $\PE((\overline{Y}_N)^{-\mu}) = O(1)$ by \Cref{lem:equivlimsupconditionGen}. Hence, 
\begin{align*}
  \PE\lr{\lr{\frac{\overline X_N}{\overline Y_N^3}}^2} = O(1) 
\end{align*}
so that the first term in the right-hand side of \eqref{eq:interREINF1overNCubeAlpha} is $o(1/N^3)$. As for the second term in the right-hand side of \eqref{eq:interREINF1overNCubeAlpha} it is $o(1/N^3)$ by applying \Cref{lem:ControlCNAlpha} with $(\ell,\eta) = (2,0)$. The proof is then concluded.

\section{Deferred proofs and empirical results for \Cref{sec:NumEx}}

\subsection{Gaussian example}

\label{app:subsub:ex:Gaussian}

We first recall some known results taken from \cite{daudel2024learningimportanceweightedvariational}.

\begin{ex}[Known results from \cite{daudel2024learningimportanceweightedvariational}] \label{ex:GaussPrev} Let $\theta, \phi \in \rset^d$. Set $p_\theta(z|x) = \mathcal{N}(z;\theta, \boldsymbol{I}_d)$ and $q_{\phi}(z|x) = \mathcal{N}(z; \phi, \boldsymbol{I}_d)$, where $\boldsymbol{I}_d$ is the $d$-dimensional identity matrix. Denoting the Euclidean norm of a finite dimensional vector $x$ with real entries by $\lrn{x}$ and its associated inner product by $\langle \cdot,\cdot \rangle$. Then: for all $k = 1 \ldots d$, 
\begin{align*}
  & \partial_{\phi_k} \mathrm{VR}^{(\alpha)}(\theta, \phi; x) = - \alpha(\phi_k -\theta_k) \\ 
  & \partial_{\phi_k} [\gammaA(\theta, \phi; x)^2] = 2 {(1-\alpha)(\phi_k- \theta_k) \exp \lr{(1-\alpha)^2 \|\phi-\theta\|^2}}. 
  \end{align*}
In addition, it holds that
  \begin{align}
    \log \lr{\frac{p_\theta(z|x)}{q_{\phi}(z|x)}} =
    -\frac{\|\theta-\phi\|^2}{2} -\|\theta-\phi \| S, \quad S =
    \frac{\langle z - \phi, \phi- \theta \rangle}{\|\theta-\phi \|},
    \quad z \in \rset^d. \label{eq:logWGaussianExample}
  \end{align}
\end{ex}
By \eqref{eq:logWGaussianExample}, we then have that
  \begin{align}
    \log \lr{\frac{\w(z;x)}{\PE(\w(z;x))}} =
    -\frac{\|\theta-\phi\|^2}{2} -\|\theta-\phi \| S, \quad S =
    \frac{\langle z - \phi, \phi- \theta \rangle}{\|\theta-\phi \|},
    \quad z \in \rset^d \label{eq:logWGaussianExampleBis}
  \end{align}
from which we deduce: for all $\alpha \in \rset$, 
\begin{align}
  \log \lr{ \frac{\w(z;x)^{1-\alpha}}{\PE(\w(z;x)^{1-\alpha})}  } = -(1-\alpha) \langle z - \phi, \phi- \theta \rangle - \frac{1}{2} (1-\alpha)^2 \|\theta-\phi \|^2. \label{eq:logWnorm}
\end{align}
For all $k = 1 \ldots d$, we now let 
$z^{(k)}$ denotes the $k$-th element of $z \in \rset^d$ so that \begin{align}
&{ \frac{\w(z;x)^{1-\alpha}}{\PE(\w(z;x)^{1-\alpha})}} \Biggl[ \partial_{\phi_k} \log q_\phi(z|x)
  - \frac{\PE(\w(z;x)^{1-\alpha} \partial_{\phi_k} \log q_\phi(z|x)   )}{\PE(\w(z;x)^{1-\alpha})} \Biggr] \nonumber \\
  & \quad = - \lrb{\phi_k - z^{(k)} + (\alpha-1) (\phi_k - \theta_k)} \frac{\w(z;x)^{1-\alpha}}{\PE(\w(z;x)^{1-\alpha})},  \label{eq:usefulrewritinggradW}
\end{align}
which follows from \eqref{eq:logWnorm} paired up with the fact that for $u \sim \mathcal{N}(0,1)$ and all $t \in \rset$,
$\PE(u e^{u t}) = t e^{t^2/2}$ as well as
\begin{align}
  \partial_{\phi_k} \log q_\phi(z|x) = z^{(k)} - \phi_k.\label{eq:partial:phiGaussian}
\end{align}
Furthermore, since $\langle z - \phi, \phi- \theta \rangle$ is a Gaussian variable if $z \sim q_\phi(\cdot|x)$, it admits exponential moments of any exponent and we have that \ref{hyp:inverseW} holds. We further get that $\log \w(z;x) = c_0(\theta,\phi)+\langle z - \phi, \phi- \theta \rangle$ for some continuously differentiable function $c_0$ defined on $\Theta\times\Phi$. Conditions \ref{hyp:momentW} and \ref{hyp:momentGradTwo} follow using that $z \sim q_\phi(\cdot|x)$ with $h \geq 8$ and $h'' \geq 4$. We next define: for all $\eta \geq 0$, 
\begin{multline*}
A_\eta =  \frac{1}{(1-\alpha)^2} \mathbb{V} \left(\frac{\alpha \w(z;x)^{1-\alpha}}{\PE(\w(z;x)^{1-\alpha})} \left[ \partial_{\phi_k} \log q_{\phi}(z|x) \right. \right. \\ \left. \left. - \frac{\PE(\w(z;x)^{1-\alpha} \partial_{\phi_k} \log q_{\phi}(z|x))}{\PE(\w(z;x)^{1-\alpha})} \right] - \frac{\eta \partial_{\phi_k} \log q_\phi(z|x)}{\PE(\w(z;x)^{1-\alpha})} \right).
\end{multline*}
In order to compute $A_\eta$, we first note that \eqref{eq:partial:phiGaussian} implies $\mathbb{V} \lr{\partial_{\phi_k} \log q_\phi(z|x)} = 1$. Next we will show that
\begin{multline}
  \mathbb{V} \left(\frac{\w(z;x)^{1-\alpha}}{\PE(\w(z;x)^{1-\alpha})} \left[ \partial_{\phi_k} \log q_{\phi}(z|x) - \frac{\PE(\w(z;x)^{1-\alpha} \partial_{\phi_k} \log q_{\phi}(z|x))}{\PE(\w(z;x)^{1-\alpha})} \right] \right) \\
   = e^{(1-\alpha)^2 \| \theta - \phi \|^2} \lr{1 + \lr{1-\alpha}^2 (\phi_k -\theta_k)^2} \label{eq:VarOneGaussExample}
\end{multline}
and 
\begin{multline}
\PE \left(\frac{\w(z;x)^{1-\alpha}}{\PE(\w(z;x)^{1-\alpha})} \left[ \partial_{\phi_k} \log q_{\phi}(z|x) - \frac{\PE(\w(z;x)^{1-\alpha} \partial_{\phi_k} \log q_{\phi}(z|x))}{\PE(\w(z;x)^{1-\alpha})} \right] {\partial_{\phi_k} \log q_\phi(z|x)} \right) \\ 
= 1. \label{eq:CovarGaussExample}
\end{multline}

\begin{enumerateList}
  \item Proof of \eqref{eq:VarOneGaussExample}. Using \eqref{eq:usefulrewritinggradW}, we get
\begin{multline*}
B := \mathbb{V} \left(\frac{\w(z;x)^{1-\alpha}}{\PE(\w(z;x)^{1-\alpha})} \left[ \partial_{\phi_k} \log q_{\phi}(z|x) - \frac{\PE(\w(z;x)^{1-\alpha} \partial_{\phi_k} \log q_{\phi}(z|x))}{\PE(\w(z;x)^{1-\alpha})} \right] \right) \\ = \PE \lr{ \left(\lrb{\phi_k - z^{(k)} + (\alpha-1)(\phi_k - \theta_k) }  \frac{\w(z;x)^{1-\alpha}}{\PE(\w(z;x)^{1-\alpha})}  \right)^2}.
\end{multline*}
As a result, 
\begin{align*} 
B & = e^{-(1-\alpha)^2 \| \theta - \phi \|^2} \PE \left(\lrb{z^{(k)} -\phi_k  -  (\alpha-1)(\phi_k - \theta_k) }^2  e^{-2 (1-\alpha) \langle z - \phi, \phi- \theta \rangle}  \right) \\
& = e^{-(1-\alpha)^2 \| \theta - \phi \|^2} e^{2(1-\alpha)^2 \| \theta - \phi \|^2} \lr{1 + \lr{1-\alpha}^2 (\phi_k -\theta_k)^2}
\end{align*}
where we have used \eqref{eq:logWnorm} and that for $u \sim \mathcal{N}(0,1)$ and all $t \in \rset$,
\begin{align*}
\PE (e^{t u}) = e^{t^2/2}, \quad  \PE(u e^{u t}) = t e^{t^2/2} \quad \mbox{and} \quad \PE(u^2 e^{u t}) = (1+t^2) e^{t^2/2},
\end{align*}
and which concludes the proof of \eqref{eq:VarOneGaussExample}. 
%
%
%
%
%
%
\item Proof of \eqref{eq:CovarGaussExample}. Using \eqref{eq:usefulrewritinggradW} and \eqref{eq:partial:phiGaussian}, we get that
\begin{multline*}
C := \PE \left(\frac{\w(z;x)^{1-\alpha}}{\PE(\w(z;x)^{1-\alpha})} \left[ \partial_{\phi_k} \log q_{\phi}(z|x) - \frac{\PE(\w(z;x)^{1-\alpha} \partial_{\phi_k} \log q_{\phi}(z|x))}{\PE(\w(z;x)^{1-\alpha})} \right] {\partial_{\phi_k} \log q_\phi(z|x)} \right) \\ 
= \PE \lr{(\phi_k - z^{(k)} + (\alpha-1) (\phi_k - \theta_k))(\phi_k - z^{(k)}) \frac{\w(z;x)^{1-\alpha}}{\PE(\w(z;x)^{1-\alpha})}}.
\end{multline*}
Hence 
\begin{align*}
  C & = e^{-\frac{1}{2} (1-\alpha)^2 \|\theta-\phi\|^2} \PE \lr{\lr{( z^{(k)} - \phi_k)^2 - (\alpha-1) (\phi_k - \theta_k)(z^{(k)}-\phi_k)} e^{-(1-\alpha) \langle z-\phi, \phi-\theta \rangle}} \\
  & = 1
\end{align*}
which is exactly \eqref{eq:CovarGaussExample}.
\end{enumerateList}
Using \eqref{eq:VarOneGaussExample}, \eqref{eq:CovarGaussExample} and $\mathbb{V} \lr{\partial_{\phi_k} \log q_\phi(z|x)} = 1$, we get: for all $\eta \geq 0$,
\begin{multline}
  A_\eta = \frac{\alpha^2}{(1-\alpha)^2} e^{(1-\alpha)^2 \| \theta - \phi \|^2}  \lr{1 + \lr{1-\alpha}^2 (\phi_k -\theta_k)^2} + \\
  \frac{1}{(1-\alpha)^2} \frac{\eta}{\PE(\w(z;x)^{1-\alpha})} \lr{-2\alpha  + \frac{\eta}{\PE(\w(z;x)^{1-\alpha})}}\label{eq:Agamma}
\end{multline}

We now consider special cases for $\eta$.
\begin{itemize}[wide=0pt, labelindent=\parindent]
  \item Setting $\eta = \eta^{(\alpha, \mathrm{AM})} = \PE(\w(z;x)^{1-\alpha})$, we have that
  \begin{align*}
    \frac{\eta^{(\alpha, \mathrm{AM})}}{\PE(\w(z;x)^{1-\alpha})} = 1.
  \end{align*}
  and plugging this in \eqref{eq:Agamma} yields $\vVIMCOam[\alpha][\phi_k](\theta,\phi;x) $ for all $\alpha \in [0,1)$.
  \item Setting $\eta = \eta^{(\alpha, \mathrm{GM})} = \exp[(1-\alpha)\PE(\log\w(z;x))]$, we have that using \eqref{eq:logWGaussianExampleBis} that
  \begin{align*}
  \frac{\eta^{(\alpha, \mathrm{GM})}}{\PE(\w(z;x)^{1-\alpha})} & = \frac{\exp[(1-\alpha)\PE(\log\w(z;x))]}{\PE(\w(z;x)^{1-\alpha})} = \frac{\exp\lrb{(1-\alpha)\PE\lr{\log\frac{\w(z;x)}{\PE(\w(z;x))}}}}{\PE\lr{\lr{\frac{\w(z;x)}{\PE(\w(z;x))}}^{1-\alpha}}} \\
  & = \frac{\exp\lrb{-\frac{1}{2} (1-\alpha) \| \theta-\phi\|^2}}{\exp\lrb{-\frac{1}{2} (1-\alpha) \| \theta-\phi\|^2} \exp \lrb{\frac{1}{2} (1-\alpha)^2 \| \theta-\phi\|^2}} \\
  & = \frac{1}{\exp \lrb{\frac{1}{2} (1-\alpha)^2 \| \theta-\phi\|^2}}
  \end{align*} 
  and plugging this in \eqref{eq:Agamma} yields $\vVIMCOgm[\alpha][\phi_k](\theta,\phi;x) $ for all $\alpha \in [0,1)$. 

  \item Setting $\eta = \eta^\star$, we have
  \begin{multline*}
    \frac{\eta^\star}{\PE(\w(z;x)^{1-\alpha})} \\ = \alpha \frac{\PE \left(\frac{\w(z;x)^{1-\alpha}}{\PE(\w(z;x)^{1-\alpha})} \left[ \partial_{\phi_k} \log q_{\phi}(z|x) - \frac{\PE(\w(z;x)^{1-\alpha} \partial_{\phi_k} \log q_{\phi}(z|x))}{\PE(\w(z;x)^{1-\alpha})} \right] {\partial_{\phi_k} \log q_\phi(z|x)} \right)}{\mathbb{V} \lr{\partial_{\phi_k} \log q_\phi(z|x)}} 
  \end{multline*}
 and using \eqref{eq:CovarGaussExample} and $\mathbb{V} \lr{\partial_{\phi_k} \log q_\phi(z|x)} = 1$, we deduce 
 \begin{align*}
    \frac{\eta^\star}{\PE(\w(z;x)^{1-\alpha})} = \alpha.
 \end{align*}
Plugging this in \eqref{eq:Agamma} yields $\vVIMCOstar$ for all $\alpha \in (0,1)$.
\end{itemize}
It remains to compute $\vVIMCOstar$ when $\alpha = 0$. 
It follows from \eqref{eq:partial:phiGaussian} that 
\begin{multline*} 
\vVIMCOstar[0][\phi_k](\theta,\phi;x) 
 \\ = \PE\lr{ \left(A(\theta, \phi;x) - 2\frac{\w(z;x)}{\PE(\w(z;x))} \PE(A(\theta, \phi;x)) \right)^2} - \PE\lr{A(\theta, \phi;x)}^2
\end{multline*}
where $A(\theta, \phi;x) = \frac{1}{2} (z^{(k)} - \phi_k) \lr{ \frac{\w(z;x)}{\PE(\w(z;x))}}^2$.
Consequently
\begin{align*} 
& \vVIMCOstar[0][\phi_k](\theta,\phi;x)  = \frac{1}{4} \PE \lr{(z^{(k)} - \phi_k)^2 \lr{ \frac{\w(z;x)}{\PE(\w(z;x))}}^4} \\
& \qquad - \PE \lr{(z^{(k)} - \phi_k) \lr{ \frac{\w(z;x)}{\PE(\w(z;x))}}^3} \PE \lr{(z^{(k)} - \phi_k) \lr{ \frac{\w(z;x)}{\PE(\w(z;x))}}^2} \\
& \qquad + \lrb{\PE \lr{\lr{ \frac{\w(z;x)}{\PE(\w(z;x))}}^2} -\frac{1}{4} } \PE \lr{(z^{(k)} - \phi_k) \lr{ \frac{\w(z;x)}{\PE(\w(z;x))}}^2}^2
\end{align*}
so that using \eqref{eq:logWnorm} with $\alpha = 0$
\begin{multline*}
 \vVIMCOstar[0][\phi_k](\theta,\phi;x)  = \lr{\frac{1}{4}+4(\phi_k-\theta_k)^2} e^{6\|\phi-\theta \|^2} - 6(\phi_k-\theta_k)^2 e^{4 \| \theta-\phi\|^2} \\
+ \lrb{e^{ \| \theta-\phi\|^2} -\frac{1}{4}} 4(\phi_k - \theta_k)^2 e^{2 \| \theta-\phi\|^2}.
\end{multline*}

\subsection{Variational Bayesian Inference for State-Space models}
\label{app:VBISSM}

\Cref{fig:SNR_sv_example} plots the SNR for the VIMCO-AM, VIMCO-GM, and VIMCO-$\star$ gradient estimators as a function of $N$ and $\alpha$ when $\phi$ corresponds to the posterior mean estimate $\widehat{\phi}$ returned by the manifold VI algorithm that we adapted to the VR-IWAE bound framework. For each curve, we averaged over the components of the gradient vector and 10 replicates. The experimental results in \Cref{fig:SNR_sv_example} agree with the theoretical results from \Cref{subsec:gradientEstimREINF}, which predict that the SNR of the VIMCO gradient estimators we considered scale as $O(\sqrt{N})$ for all $\alpha \in (0,1)$, with the VIMCO-$\star$ gradient estimator exhibiting the highest SNR. We also observe that the VIMCO-GM gradient estimator outperforms the VIMCO-AM one.

\Cref{tab:SV_example convergence} reports the average number of iterations, computed over multiple runs of the VI training, for the three gradient estimators VIMCO-AM, VIMCO-GM, and VIMCO-$\star$, across various combinations of $\alpha$ and $N$. All VI trainings share the same algorithmic settings, including the learning rate and stopping criterion, and differ only in the choice of gradient estimator. As shown, the VIMCO-$\star$ estimator achieves the fastest convergence, while no clear performance advantage is observed between VIMCO-AM and VIMCO-GM.

\begin{table}[h]
\begin{center}
\begin{tabular}{c|c|c|c|c}
\hline\hline
$\alpha$ &   $N$    & VIMCO-AM & VIMCO-GM & VIMCO-$\star$\\
\hline
0.9     & 100       &   1033    &1118   &964\\
        & 500       &   1735    &1824   &1473\\
\hline
0.5     & 100       &   1471    &1815   &822\\
        & 500       &   2253    &2085   &1742\\
\hline
0.1     & 100       &   300     &508    &177\\
        & 500       &   541     &298    &264\\
\hline\hline        
\end{tabular}
\end{center}
\caption{Number of iterations until convergence of the VI training using VIMCO-AM, VIMCO-GM and VIMCO-$\star$ gradient estimators, for various combinations of $\alpha$ and $N$.}
\label{tab:SV_example convergence}
\end{table}

\begin{figure}[h]
    \centering
    \includegraphics[width = 1\textwidth]{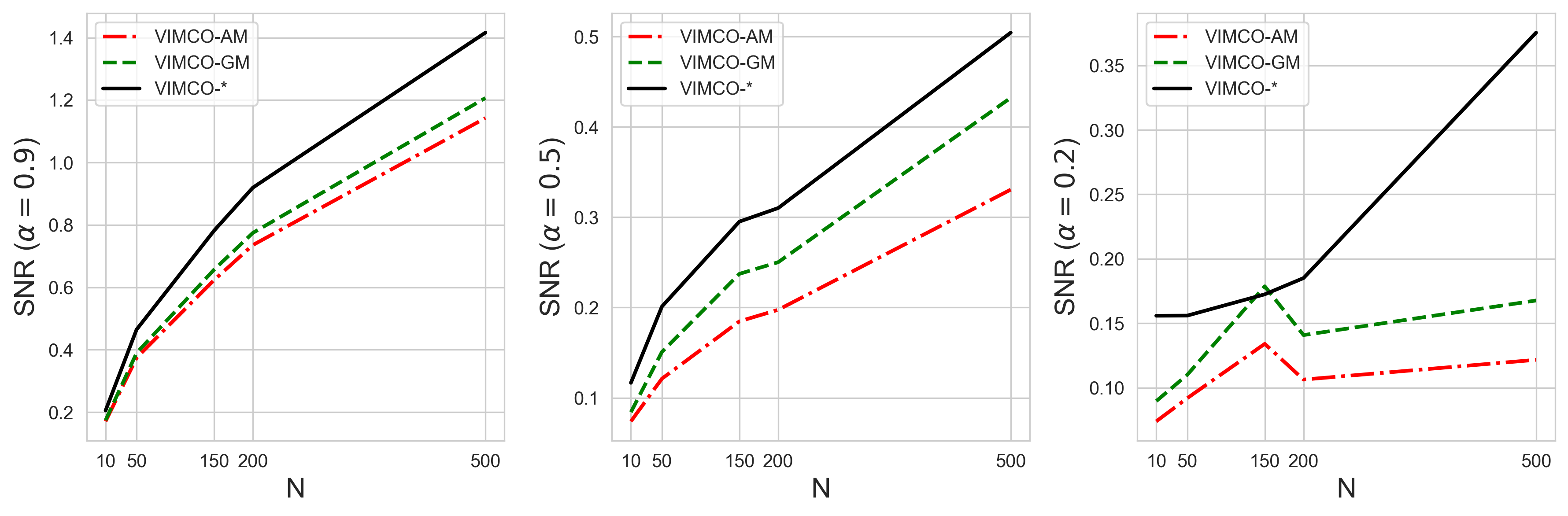}
    \caption{SNR for the VIMCO-AM, VIMCO-GM and VIMCO-$\star$ gradient estimators as a function of $N$ for various values of $\alpha$.}
    \label{fig:SNR_sv_example}
\end{figure}

\subsection{Variational Bayesien Phylogenetic Inference example}

\label{app:VBPI}

\paragraph*{Implementation details.} Following the methodology of \cite{SBN}, we use subsplit Bayesian networks (SBNs) for $Q_{\phi}(\tau)$, a diagonal lognormal model for $Q_{\psi}(q|\tau)$, a uniform prior on tree topologies, an i.i.d. exponential prior with rate parameter 10.0 on branch lengths, the simple \citet{Jukes69} substitution model. Furthermore, in line with the annealing strategy for the likelihood described in \Cref{subsec:Annealing_schedule_for_alpha}, we consider an annealed version of the VR-IWAE bound \looseness=-1
\begin{equation}\label{eq:vbpi_vrbound}
\ell_{N}^{(\alpha,\beta_t)}(\lambda;Y) = \frac{1}{1-\alpha}\mathbb{E}_{\tau_i,q_i \overset{\mathrm{i.i.d.}}{\sim} Q_{\phi}(\tau)Q_{\psi}(q|\tau)}\log\left(\frac1N\sum_{j=1}^N\left(\frac{[p(Y|\tau_i,q_i)]^{\beta_t}p(\tau_i,q_i)}{Q_{\phi}(\tau_i)Q_{\psi}(q_i|\tau_i)}\right)^{1-\alpha}\right),
\end{equation}
where $\lambda=\{\phi,\psi\}$ denotes the variational parameters and $\beta_t=\min\{1, 0.001+t/100{,}000\}$ is the annealing schedule proposed by \cite{NF}. In addition, the subsplit supports for the SBNs are constructed using trees obtained from 10 replicates of 10,000 ultrafast maximum-likelihood bootstrap runs \citep{UFBOOT}. The location and scale parameters of the conditional branch-length model are amortized over tree topologies using graph neural networks with edge-convolution operators \citep{EDGE}.

\section{Technical results}

We recall below as lemmas several existing technical results that are used in our proofs.

  \begin{lem}[Lemma 2 of \cite{daudel2024learningimportanceweightedvariational}] \label{lem:additional-tech-lemma-bis}
    Let $\eta_0< \mu_0$ be two real exponents. Then, for all
    $a,b\geq0$, we have
    $$
    \lrav{a-b}^{\mu_0}\leq
    \lrav{a-b}^{\eta_0}\;\lr{a^{{\mu_0}-\eta_0}+ b^{{\mu_0}-\eta_0}}\;.
    $$
  \end{lem}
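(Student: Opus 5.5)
We factor out the common power of $|a-b|$ and reduce the claim to an elementary inequality for nonnegative reals. Set $s=\mu_0-\eta_0>0$. For $a,b\geq 0$ with $a\neq b$, we have
\begin{align*}
\lrav{a-b}^{\mu_0}=\lrav{a-b}^{\eta_0}\,\lrav{a-b}^{s},
\end{align*}
and $\lrav{a-b}^{\eta_0}>0$. It therefore suffices to show that $\lrav{a-b}^{s}\leq a^{s}+b^{s}$ for all $a,b\geq 0$ and all $s>0$.

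For this reduced inequality we use symmetry and assume without loss of generality that $a\geq b\geq 0$. Then $0\leq a-b\leq a$, and since $t\mapsto t^{s}$ is nondecreasing on $[0,\infty)$ when $s>0$, we get $\lrav{a-b}^{s}\leq a^{s}\leq a^{s}+b^{s}$. Multiplying by $\lrav{a-b}^{\eta_0}>0$ yields the stated bound.

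It remains to treat the case $a=b$. The left-hand side is then $0^{\mu_0}$, and the right-hand side contains $0^{\eta_0}$. When $\eta_0>0$, both sides vanish and the inequality holds trivially. When $\eta_0\leq 0$, the expression $0^{\eta_0}$ must be read with the usual convention ($0^{0}=1$ and $0^{\eta_0}=+\infty$ for $\eta_0<0$). Under that convention the right-hand side is nonnegative, or $+\infty$ when $\eta_0<0$ and $a=b>0$, so it dominates the left-hand side.

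There is no real obstacle here. The only point needing care is this degenerate case $a=b$ with a nonpositive exponent $\eta_0$, which is handled by the convention above. In the paper's applications ($\eta_0\in(2,4)$, $\mu_0=4$) we have $\eta_0>0$, so the degenerate case is trivial there.
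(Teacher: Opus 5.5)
Your argument for $a\neq b$ is correct, and it is the natural proof of this lemma (the paper does not reprove it; it imports it from the cited reference). You write $|a-b|^{\mu_0}=|a-b|^{\eta_0}|a-b|^{s}$ with $s=\mu_0-\eta_0>0$ and use $|a-b|\le\max(a,b)$, so that $|a-b|^{s}\le\max(a,b)^{s}\le a^{s}+b^{s}$. Together with the trivial case $a=b$ when $\mu_0>0$, this covers every use of the lemma in the paper ($\mu_0=4$, $\eta_0\in(2,4)$).

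The one slip is in your degenerate case. The sentence ``the right-hand side is nonnegative \ldots\ so it dominates the left-hand side'' tacitly assumes that the left-hand side $0^{\mu_0}$ is $0$. That holds only when $\mu_0>0$.

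If $\eta_0<\mu_0\le 0$, then at $a=b$ the left-hand side is $0^{\mu_0}\in\{1,+\infty\}$. For $a=b>0$ the right-hand side is $+\infty$, so you are fine there. For $a=b=0$, however, the right-hand side is $0^{\eta_0}(0^{s}+0^{s})=(+\infty)\cdot 0$. This equals $0$ under the standard convention and is undefined otherwise. So in that corner the inequality genuinely fails, and no convention for $0^{\eta_0}$ rescues it.

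The right fix is a hypothesis, not a convention. State the lemma for $\mu_0>0$. Then $a=b$ gives $0\le$ a nonnegative quantity, using $0\cdot\infty:=0$ in the sub-case $\eta_0<0$, $a=b=0$. Alternatively, exclude $a=b$ when $\mu_0\le 0$.
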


    \begin{lem}[Proposition 5 of \cite{daudel2024learningimportanceweightedvariational}]
    \label{lem:equivlimsupconditionGen}
    Set $\overline{Z}_N = N^{-1} \sum_{i=1}^N Z_i$ for all
    $N \in \mathbb{N}^\star$, where $Z_1, \ldots, Z_N$ are positive
    i.i.d. random variables. Then the following assertions hold.
    \begin{enumerate}[label=(\roman*)]
  \item\label{item:lem:equivlimsupconditionGen2} For all $\mu>0$, we have
  \begin{align} \label{ass:limsupZequiv}
    \limsup_{N\to\infty  } \PE\lr{(1/\overline{Z}_N)^\mu} <
    \infty\Longleftrightarrow\exists N\geq1\;,\,\PE((1/\overline{Z}_{N})^\mu) < \infty.
  \end{align}
  Furthermore, if the assertions of the equivalence~(\ref{ass:limsupZequiv}) hold for some $\mu>0$ and the distribution of $Z_1$ does not reduce to a Dirac measure, there exists $N_0\geq1$ such that $\PE\lr{\lr{1/\overline{Z}_N}^\mu}=\infty$ for $1\leq N < N_0$ and the sequence $\lr{\PE\lr{\lr{1/\overline{Z}_N}^\mu}}_{N \geq N_0}$ is strictly decreasing in $(0,\infty)$. 

  \item \label{item:lem:equivlimsupconditionGen1} For all $\eta>0$, we have
  \begin{align}  \label{eq:simple-cond-NGrad}
    \sup_{t>0} \lr{t^\eta\; \PE\lr{\rme^{-t\,Z_1}}}<\infty \Longleftrightarrow \sup_{u>0}\lr{u^{-\eta}\;\PP\lr{Z_1\leq u} }<\infty.
  \end{align}

\item\label{item:lem:equivlimsupconditionGen4} 
  Suppose that the assertions of the equivalence \eqref{eq:simple-cond-NGrad} hold for some $\eta>0$. Then the assertions of the equivalence~(\ref{ass:limsupZequiv}) hold
  for all $\mu>0$.
\item\label{item:lem:equivlimsupconditionGen3} Suppose that the
  assertions of the equivalence~(\ref{ass:limsupZequiv}) hold for some
  $\mu>1$. Then the assertions of the equivalence \eqref{eq:simple-cond-NGrad} hold for some $\eta>0$.
\item \label{item:lem:equivlimsupconditionGen6} Suppose 
that the assertions of the equivalence \eqref{eq:simple-cond-NGrad} hold for some $\eta>0$.  
Then, for all $\mu>0$, we have
  \begin{align} \label{ass:limZnegpower}
    \lim_{N\to\infty  } \PE\lr{(1/\overline{Z}_N)^\mu} =
    \begin{cases}
      0 &\text{ if $\PE(Z_1)=\infty$,}\\
    \lr{\PE(Z_1)}^{-\mu} &\text{ otherwise.}
    \end{cases}
  \end{align}
\end{enumerate}
  \end{lem}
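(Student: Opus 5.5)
We would prove the five assertions in the order \ref{item:lem:equivlimsupconditionGen1}, \ref{item:lem:equivlimsupconditionGen2}, \ref{item:lem:equivlimsupconditionGen4}, \ref{item:lem:equivlimsupconditionGen3}, \ref{item:lem:equivlimsupconditionGen6}. Two elementary tools do all the work: Jensen's inequality for the convex map $x\mapsto x^{-\mu}$ on $(0,\infty)$, and the Gamma-function identity
\[
x^{-\mu}=\Gamma(\mu)^{-1}\int_0^\infty t^{\mu-1}\rme^{-tx}\,\rmd t, \qquad x>0,
\]
which turns negative moments into integrals of Laplace transforms.

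\emph{Assertion \ref{item:lem:equivlimsupconditionGen1}.} For the forward direction, Markov's inequality applied to $\rme^{-Z_1/u}$ gives
\[
\PP(Z_1\le u)\le \rme\,\PE(\rme^{-Z_1/u}),
\]
so we take $t=1/u$. For the converse, we integrate by parts:
\[
\PE(\rme^{-tZ_1})=\int_0^\infty t\rme^{-ts}\PP(Z_1\le s)\,\rmd s\le C\int_0^\infty t\rme^{-ts}s^\eta\,\rmd s=C\,\Gamma(\eta+1)\,t^{-\eta}.
\]

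\emph{Assertion \ref{item:lem:equivlimsupconditionGen2}.} The direction ``$\Rightarrow$'' is trivial. For ``$\Leftarrow$'', assume $\PE(\overline{Z}_{N_1}^{-\mu})<\infty$ and write $N=kN_1+r$ with $k\ge1$ and $0\le r<N_1$. Then $\overline{Z}_N\ge (kN_1/N)\,M$, where $M$ is the average of the $k$ disjoint block means of size $N_1$. By Jensen, $M^{-\mu}$ is at most the average of the blocks' $(\cdot)^{-\mu}$. Using $kN_1/N\ge 1/2$, this gives
\[
\PE(\overline{Z}_N^{-\mu})\le 2^\mu\,\PE(\overline{Z}_{N_1}^{-\mu})
\]
for all $N\ge N_1$. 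For the monotonicity claim, use the leave-one-out identity
\[
\overline{Z}_{N+1}=(N+1)^{-1}\sum_{i=1}^{N+1}\overline{Z}^{(-i)}_N,
\]
where $\overline{Z}^{(-i)}_N$ is the mean of the $N$ variables other than $Z_i$; each has the law of $\overline{Z}_N$. Jensen then gives $\PE(\overline{Z}_{N+1}^{-\mu})\le\PE(\overline{Z}_N^{-\mu})$. Hence finiteness propagates upward, so $N_0$ can be defined as the first index where the moment is finite. Strictness comes from the equality case of strict convexity: it would force all leave-one-out means to coincide a.s., hence $Z_1=Z_2$ a.s., which is impossible unless the law of $Z_1$ is a Dirac mass.

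\emph{Assertion \ref{item:lem:equivlimsupconditionGen4}.} By the Gamma identity and independence,
\[
\PE(\overline{Z}_N^{-\mu})=\frac{N^\mu}{\Gamma(\mu)}\int_0^\infty s^{\mu-1}\bigl(\PE\rme^{-sZ_1}\bigr)^N\rmd s.
\]
The integrand is integrable near $0$ because $s^{\mu-1}$ is and the Laplace transform is bounded by $1$. Near $\infty$ it is bounded by $C^Ns^{\mu-1-\eta N}$, which is integrable once $\eta N>\mu$. Together with \ref{item:lem:equivlimsupconditionGen2}, this yields the claim for every $\mu>0$.

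\emph{Assertion \ref{item:lem:equivlimsupconditionGen3}.} Suppose $\PE(\overline{Z}_N^{-\mu})<\infty$ for some $N$. By independence and Markov's inequality,
\[
\PP(Z_1\le u)^N\le\PP(\overline{Z}_N\le u)\le u^\mu\,\PE(\overline{Z}_N^{-\mu}).
\]
This gives the small-ball bound in \eqref{eq:simple-cond-NGrad} with $\eta=\mu/N$; the restriction $\mu>1$ is not even needed.

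\emph{Assertion \ref{item:lem:equivlimsupconditionGen6}.} By the strong law of large numbers (valid also when $\PE(Z_1)=\infty$, since $Z_1>0$), $\overline{Z}_N\to\PE(Z_1)\in(0,\infty]$ almost surely. Applying \ref{item:lem:equivlimsupconditionGen4} with $2\mu$ bounds $\PE(\overline{Z}_N^{-2\mu})$ for large $N$, so the family $(\overline{Z}_N^{-\mu})_N$ is uniformly integrable. Vitali's theorem then gives the stated limit.

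The only delicate step is the strictness of the monotonicity in \ref{item:lem:equivlimsupconditionGen2}, which relies on the equality case of Jensen's inequality. Everything else is routine bookkeeping.
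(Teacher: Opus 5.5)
Your proof is correct in all five parts. This paper does not prove the lemma itself: it is recalled verbatim, without proof, from Proposition 5 of \cite{daudel2024learningimportanceweightedvariational} in the appendix of technical results. So there is no in-paper argument to compare against. Your write-up is a self-contained verification that uses only the pointwise Jensen inequality for the strictly convex map $x\mapsto x^{-\mu}$ and the Laplace representation $\PE(\overline{Z}_N^{-\mu})=N^{\mu}\Gamma(\mu)^{-1}\int_0^\infty s^{\mu-1}(\PE\,\rme^{-sZ_1})^N\,\rmd s$.

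Three side remarks follow.

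\emph{The block argument in (i) is redundant.} Once you have the leave-one-out monotonicity $\PE(\overline{Z}_{N+1}^{-\mu})\le\PE(\overline{Z}_{N}^{-\mu})$, the direction ``$\Leftarrow$'' follows at once with the bound $\limsup_N\PE(\overline{Z}_N^{-\mu})\le\PE(\overline{Z}_{N_1}^{-\mu})$.

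\emph{Your argument for (iv) works for every $\mu>0$.} The chain $\PP(Z_1\le u)^N\le\PP(\overline{Z}_N\le u)\le u^{\mu}\PE(\overline{Z}_N^{-\mu})$ gives $\eta=\mu/N$ without using $\mu>1$. The restriction $\mu>1$ in the statement is therefore not needed for this implication; this is a harmless strengthening.

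\emph{The strictness step should be made explicit.} For $N\ge N_0$ both expectations are finite, so equality forces the nonnegative Jensen gap to vanish almost surely. Equality in the equal-weight Jensen inequality then gives $Z_1=Z_2$ a.s. For i.i.d.\ variables this forces a Dirac law, since $\PP(Z_1=Z_2)=\sum_x\PP(Z_1=x)^2\le\max_x\PP(Z_1=x)$.
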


\begin{lem}[Proposition 6 (i), (iv) \& (vi) of \cite{daudel2024learningimportanceweightedvariational}] \label{prop:handleRN} Let $\lr{X,Y}$ be an
  $\mathbb{R}\times\mathbb{R}_+^*$-valued random vector. Let
  $\lr{X_1,Y_1}, \lr{X_2,Y_2},$ $\ldots$ be a sequence of i.i.d. random
  vectors with the same distributions as $\lr{X,Y}$.  Set
  $\overline{X}_N = N^{-1} \sum_{i=1}^N X_i$ and
  $\overline{Y}_N = N^{-1} \sum_{i=1}^N Y_i$ for all
  $N \in \mathbb{N}^\star$. Assume that there exist $N\geq1$ and
  $\mu>0$ such that $\PE((\overline{Y}_{N})^{-\mu})<\infty$ and
  that $\PE(Y^{h})$ and $\PE(\lrav{X}^{\tilde{h}})$ are finite for some
  positive real exponents $h$ and $\tilde{h}$.  Then the following assertions
  hold.
  \begin{enumerate}[label=(\roman*)]
    \item\label{item:eq:prop:handleRNsnr1} If $h\geq1$ and $\tilde{h}>1$, we   have, as $N\to\infty$, 
  \begin{align}    \label{eq:prop:handleRNsnr1}
      &    \PE\lr{\frac{\overline{X}_{N}}{(\overline{Y}_{N})^\ell}} = \frac{\PE(X)}{(\PE(Y))^\ell}+ o\lr{1}, \quad \ell \in \{1, 2\}.
    \end{align}
      \item\label{item:eq:prop:handleRN1} If $h>2$ and  $\tilde{h}>1$ with $2/h+1/\tilde{h}<1$, we have, as $N\to\infty$,
    \begin{align}
      \label{eq:prop:handleRN1}
      &    \PE\lr{\frac{(\overline{Y}_N - \PE(Y))^2\;\lrav{\overline{X}_N - \PE(X)}}{\overline{Y}_N}} = o\lr{\frac{1}{N}}\;.
    \end{align}
  Furthermore, if $h,\tilde{h}\geq2$ with  $1/h+1/\tilde{h}<1$, then (\ref{eq:prop:handleRN1}) holds again as $N\to\infty$.
  \item\label{item:eq:prop:handleRNsnr1plus1N} Suppose that $h>2$ and
    $\tilde{h}>1$. If $\tilde{h}<2$, suppose moreover that  $2/h+1/\tilde{h}<1$. Then, we have, as $N\to\infty$,  
    \begin{align}
      \label{eq:prop:handleRNsnr1plus1N}
      &    \PE\lr{\frac{\overline{X}_{N}}{\overline{Y}_{N}}}
        = \frac{\PE(X)}{\PE(Y)}+ \frac1N\lr{\frac{\PE(X)\,\mathbb{V}\lr{Y}}{\lr{\PE(Y)}^3}-\frac{\mathbb{C}\mathrm{ov}\lr{X,Y}}{\lr{\PE(Y)}^2}}+o\lr{\frac1N}\;.
    \end{align}
    \item\label{item:eq:prop:handleRNsnrVarAnalysis}  If $h,\tilde{h}>2$, we have, as $N\to\infty$,  
  \begin{align}
    \label{eq:prop:handleRNsnrVarAnalysis}
    \mathbb{V}\lr{\frac{\overline{X}_N}{(\overline{Y}_{N})^\ell}} =
    \frac{1}{N} \frac{\mathbb{V}\lr{X - \ell Y
    \frac{\PE(X)}{\PE(Y)}}}{\lr{\PE(Y)}^{2 \ell}} +o\lr{\frac1N}, \quad \ell \in \{1, 2\}.
  \end{align}
  \end{enumerate}
\end{lem}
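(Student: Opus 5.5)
The plan is to treat all four assertions as expansions of the ratio $\overline{X}_N/\overline{Y}_N^\ell$ around $m:=\PE(Y)>0$. The two inputs are central moment bounds for sample means and uniform control of negative moments of $\overline{Y}_N$. For the first input, the Marcinkiewicz--Zygmund/Rosenthal inequality (as in \eqref{eq:petrov-centered}) gives, for every $p\le h$,
\begin{align*}
\PE\lr{|\overline{Y}_N-m|^p}=O\lr{N^{-p/2}},
\end{align*}
when $p\geq 2$, with the analogous bound for $\overline{X}_N-\PE(X)$ and $p\le\tilde h$. When $1<p<2$ the rate is weaker but still $o(1)$. For the second input, I first upgrade the single hypothesis $\PE(\overline{Y}_{N_0}^{-\mu})<\infty$ to
\begin{align*}
\sup_{N\ge N_1}\PE\lr{\overline{Y}_N^{-\nu}}<\infty \quad\text{for every } \nu>0.
\end{align*}
To do so, given $k\in\mathbb{N}^\star$ with $k\mu\ge\nu$, split $\{1,\dots,N\}$ into $k$ disjoint blocks $A_1,\dots,A_k$, each of size at least $N_0$. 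The arithmetic--geometric mean inequality, applied blockwise, gives $\overline{Y}_N\ge c\prod_i \overline{Y}_{A_i}^{1/k}$ for a constant $c>0$. Independence of the blocks then yields $\PE(\overline{Y}_N^{-k\mu})\le c^{-k\mu}\prod_i\PE(\overline{Y}_{A_i}^{-\mu})$. Finally, convexity of $y\mapsto y^{-\mu}$ together with exchangeability shows that $\PE(\overline{Y}_{n}^{-\mu})$ is nonincreasing in $n\ge N_0$, so each factor is at most $\PE(\overline{Y}_{N_0}^{-\mu})$. This is essentially \Cref{lem:equivlimsupconditionGen}.

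With these bounds in hand, I use exact algebraic expansions of the reciprocal rather than Taylor's theorem. Writing $\Delta=\overline{Y}_N-m$, the identity
\begin{align*}
\frac1{\overline{Y}_N}=\frac1m-\frac{\Delta}{m^2}+\frac{\Delta^2}{m^2\,\overline{Y}_N}
\end{align*}
holds, and a similar identity for $\overline{Y}_N^{-2}$ follows by squaring it or by applying the same identity to $1/\overline{Y}_N^2$ around $1/m^2$. For (i), $\overline{X}_N/\overline{Y}_N^\ell-\PE(X)/m^\ell$ is a sum of terms of the form $(\overline{X}_N-\PE(X))\,\overline{Y}_N^{-\ell}$ and $\PE(X)\,\Delta\,R_N$, where $R_N$ is a polynomial in $1/\overline{Y}_N$ and $1/m$. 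I bound each such term by H\"older's inequality, placing an exponent slightly above $1$ on $|\overline{X}_N-\PE(X)|$ (allowed since $\tilde h>1$) and all remaining powers on $\overline{Y}_N^{-\nu}$; each term then tends to $0$. For (ii), Cauchy--Schwarz or H\"older with exponents $h/2$ on $\Delta^2$, $\tilde h$ on $|\overline{X}_N-\PE(X)|$, and a large exponent on $1/\overline{Y}_N$ gives the order $N^{-1}\cdot N^{-1/2}\cdot O(1)=o(1/N)$. This is exactly where the condition $2/h+1/\tilde h<1$ is needed, and the variant with $h,\tilde h\ge2$ is handled the same way.

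For (iii), I substitute the expansion into $\overline{X}_N/\overline{Y}_N$. This produces
\begin{align*}
\frac{\PE(X)}{m}-\frac{\PE\lr{(\overline{X}_N-\PE(X))\Delta}}{m^2}+\frac{\PE(X)\,\PE(\Delta^2)}{m^3},
\end{align*}
which equals the two displayed $1/N$ terms, since $\PE((\overline{X}_N-\PE(X))\Delta)=\mathbb{C}\mathrm{ov}(X,Y)/N$ and $\PE(\Delta^2)=\mathbb{V}(Y)/N$. The remainders are of two kinds: $\PE\bigl((\overline{X}_N-\PE(X))\Delta^2/\overline{Y}_N\bigr)$, which is $o(1/N)$ by (ii), and $\PE(X)\,\PE\bigl(\Delta^3/(m^3\overline{Y}_N)\bigr)$, which is handled similarly using $h>2$. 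For (iv), I write $\overline{X}_N/\overline{Y}_N^\ell$ as the linear part
\begin{align*}
L_N=\frac{\PE(X)}{m^\ell}+\frac{(\overline{X}_N-\PE(X))-\ell\,(\PE(X)/m)\,\Delta}{m^\ell}
\end{align*}
plus a remainder $Q_N$ that is at least quadratic in the centered means. The variance of $L_N$ is exactly $\mathbb{V}(X-\ell Y\PE(X)/m)/(N m^{2\ell})$. Moreover $\PE(Q_N^2)=o(1/N)$ by the same H\"older argument, using $h,\tilde h>2$. Then Cauchy--Schwarz on the cross term, whose size is $O(N^{-1/2})\cdot o(N^{-1/2})$, yields the claim.

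I expect the main obstacle to be (iii) at the boundary of the moment conditions, namely $\tilde h<2$ with only $2/h+1/\tilde h<1$. There $\PE\bigl((\overline{X}_N-\PE(X))\Delta\bigr)$ is still exactly $\mathbb{C}\mathrm{ov}(X,Y)/N$, but the remainders have to be bounded using the weaker MZ rate $\PE|\overline{X}_N-\PE(X)|^{\tilde h}=O(N^{1-\tilde h})$ for $\tilde h<2$. The H\"older exponents therefore need to be chosen carefully, and I would first test whether (ii) already absorbs every remainder term before attempting anything finer.
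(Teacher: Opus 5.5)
Your overall architecture is the right one: the upgrade to $\sup_{N\ge N_1}\PE(\overline{Y}_N^{-\nu})<\infty$ for every $\nu$, the exact expansion of $1/\overline{Y}_N$ around $m=\PE(Y)$, Marcinkiewicz--Zygmund bounds, and H\"older. The upgrade argument (blocking with AM--GM, plus monotonicity from convexity and exchangeability) is correct and works from any $\mu>0$.

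The gap is in the H\"older bookkeeping. You always put each centered mean in the $L^p$ space matching its own moment order and send everything else to $\overline{Y}_N^{-\nu}$. That only works when the remainders carry few powers of $\Delta=\overline{Y}_N-m$, and under the stated hypotheses they often do not. Write $\delta_N=\overline{X}_N-\PE(X)$. In (iv), $\PE(Q_N^2)$ contains $(\PE X)^2\PE(\Delta^4/\overline{Y}_N^2)$ and $\PE(\delta_N^2\Delta^2/\overline{Y}_N^2)$. Your scheme needs $h>4$ and $2/h+2/\tilde h<1$ for these, whereas only $h,\tilde h>2$ is assumed, and then $\PE(\Delta^4)$ may even be infinite. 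In (iii), the remainder $\PE(|\Delta|^3/\overline{Y}_N)$ needs $h>3$. The second variant of (ii) is not handled the same way: exponents $h/2$ on $\Delta^2$ and $\tilde h$ on $|\delta_N|$ require $2/h+1/\tilde h<1$, which is strictly stronger than $1/h+1/\tilde h<1$ (take $h=\tilde h=5/2$, or $h=2$). Yet (iii) must rely on that second variant when $\tilde h\ge 2$. Your argument for (i) also does not cover the allowed case $h=1$, since no exponent above $1$ is available for $\Delta$. Conversely, the case you flagged as the main obstacle, $\tilde h<2$ in (iii), is harmless. There $2/h+1/\tilde h<1$ forces $h>4$, and the first variant of (ii), which your argument does prove, applies (with rate $N^{-1}\cdot N^{1/\tilde h-1}$ rather than $N^{-3/2}$).

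The missing idea is to absorb surplus powers of $\Delta$ into the denominator rather than into moments of $Y$. Use $|\Delta|/\overline{Y}_N=|1-m/\overline{Y}_N|\le 1+m/\overline{Y}_N$; this is the trade of powers of $|a-b|$ for powers of $a,b$ that \Cref{lem:additional-tech-lemma-bis} provides. By your negative-moment bound this ratio is bounded in every $L^p$, and it tends to $0$ almost surely, so $\|\Delta/\overline{Y}_N\|_p\to0$ for every $p<\infty$, where $\|\cdot\|_p$ is the $L^p$ norm. This gives:
\begin{itemize}
\item $\PE(\Delta^4/\overline{Y}_N^2)\le\|\Delta\|_h^2\,\|(\Delta/\overline{Y}_N)^2\|_{h/(h-2)}=o(1/N)$;
\item $\PE(\delta_N^2\Delta^2/\overline{Y}_N^2)\le\|\delta_N\|_{\tilde h}^2\,\|(\Delta/\overline{Y}_N)^2\|_{\tilde h/(\tilde h-2)}=o(1/N)$;
\item $\PE(|\Delta|^3/\overline{Y}_N)\le\|\Delta\|_h^2\,\|\Delta/\overline{Y}_N\|_{h/(h-2)}=o(1/N)$;
\item $\PE(\Delta^2|\delta_N|/\overline{Y}_N)\le\|\Delta\|_h\,\|\delta_N\|_{\tilde h}\,\|\Delta/\overline{Y}_N\|_r=o(1/N)$ with $1/h+1/\tilde h+1/r=1$.
\end{itemize}
The last bound is exactly where the condition $1/h+1/\tilde h<1$ in the second variant of (ii) comes from. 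The same observation gives $\PE(|\Delta|/\overline{Y}_N)\to0$, which settles (i) at $h=1$. With this ingredient added, your expansions for (i)--(iv) go through as planned.
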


  \begin{lem}[Claim (66) in Lemma 3 of \cite{daudel2024learningimportanceweightedvariational}] \label{lem:petrov}
    Let $Z$ be a real valued random variable, let $Z_1, \ldots,
    Z_N$ be i.i.d. copies of $Z$ and denote  $\overline{Z}_N = N^{-1} \sum_{i=1}^N
    Z_i$ for all $N \in \mathbb{N}^\star$. Then, for all real
    $p\geq1$, 
    \begin{align} \label{eq:petrov-centered}
&\PE(|Z|^{2p}) <
    \infty \Longrightarrow     \sup_{N \in \mathbb{N}^\star} \lr{N^{p}\; \PE\lr{\lrav{\overline{Z}_N-\PE\lr{Z}}^{2p}}} < \infty. 
    \end{align}
\end{lem}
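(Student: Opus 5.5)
We would prove \Cref{lem:petrov} by centering and then applying a classical moment inequality for sums of independent, mean-zero random variables.

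\textbf{Centering.} Set $W_i = Z_i - \PE(Z)$ for $i = 1, \ldots, N$. The $W_i$ are i.i.d., centered, and have finite $2p$-th moment, since
\begin{align*}
\PE(|W_1|^{2p}) \leq 2^{2p-1}\lr{\PE(|Z|^{2p}) + |\PE(Z)|^{2p}},
\end{align*}
and $|\PE(Z)|^{2p} \leq \PE(|Z|^{2p}) < \infty$ by Jensen's inequality. Moreover $\overline{Z}_N - \PE(Z) = N^{-1}\sum_{i=1}^N W_i$. It therefore suffices to show that
\begin{align*}
\PE\lr{\lrav{\sum_{i=1}^N W_i}^{2p}} \leq C_p\, N^p\, \PE(|W_1|^{2p})
\end{align*}
for a constant $C_p$ depending only on $p$. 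Dividing by $N^{2p}$ and multiplying by $N^p$ then gives a bound uniform in $N$.

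\textbf{Main step.} We invoke the Marcinkiewicz--Zygmund inequality, which holds for every real $2p \geq 1$ and independent centered summands:
\begin{align*}
\PE\lr{\lrav{\sum_{i=1}^N W_i}^{2p}} \leq C_p\, \PE\lr{\lr{\sum_{i=1}^N W_i^2}^{p}}.
\end{align*}
Because $p \geq 1$, the map $u \mapsto u^p$ is convex, so the power-mean inequality gives
\begin{align*}
\lr{\sum_{i=1}^N W_i^2}^p \leq N^{p-1}\sum_{i=1}^N |W_i|^{2p}.
\end{align*}
Taking expectations and using that the $W_i$ are identically distributed yields the bound $C_p N^p \PE(|W_1|^{2p})$, which is exactly what we need.

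\textbf{Checks and the main obstacle.} The case $p = 1$ needs no inequality, since $N\,\PE\lr{|\overline{Z}_N - \PE(Z)|^2} = \mathbb{V}(Z)$. The only nontrivial ingredient is the Marcinkiewicz--Zygmund inequality for non-integer exponents. If we wanted to avoid citing it, we would instead use Rosenthal's inequality,
\begin{align*}
\PE\lr{\lrav{\sum_{i=1}^N W_i}^{2p}} \lesssim N\,\PE(|W_1|^{2p}) + \lr{N\,\PE(W_1^2)}^{p},
\end{align*}
whose right-hand side is also $O(N^p)$ because $p \geq 1$. Alternatively, for integer $p$ one can expand the power directly and count the nonvanishing mixed moments. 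We expect the choice and justification of this moment inequality to be the only real point to settle; all other steps are routine.
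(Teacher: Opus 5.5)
Your argument is correct: centering, then the Marcinkiewicz--Zygmund inequality, then the power-mean bound gives $\PE\lr{\lrav{\sum_{i=1}^N W_i}^{2p}} \leq C_p N^{p-1}\sum_{i=1}^N \PE(|W_i|^{2p})$, which is the classical Petrov-type moment inequality for centered independent sums. The paper gives no proof of its own and imports the claim from \cite{daudel2024learningimportanceweightedvariational}; the lemma's label suggests it rests on this same inequality, so your route appears to be essentially the standard argument behind the cited result.
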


\printbibliography

\end{document}